\newtheorem{theorem}{Theorem}
\newtheorem{proposition}{Proposition}
\newtheorem{remark}{Remark}
\DeclareMathOperator*{\argmin}{arg\,min}
\title{Learning Rate Free Sampling\\ in Constrained Domains}
\author{%
  Louis Sharrock \\
  Department of Mathematics and Statistics\\
  Lancaster University, UK \\
  \texttt{l.sharrock@lancaster.ac.uk} \\
  \And
   Lester Mackey  \\
   Microsoft Research New England \\
    Cambridge, MA \\
   \texttt{lmackey@microsoft.com} \\
  \AND
   Christopher Nemeth \\
  Department of Mathematics and Statistics\\
  Lancaster University, UK \\
  \texttt{c.nemeth@lancaster.ac.uk}
}
\newcommand{\norm}[1]{||{#1}||} 
\newcommand{\Rd}{\mathbb{R}^d}
\newcommand{\xset}{\mathcal{X}}
\begin{document}

\maketitle

\begin{abstract}
  We introduce a suite of new particle-based algorithms for sampling in constrained domains which are entirely learning rate free. Our approach leverages coin betting ideas from convex optimisation, and the viewpoint of constrained sampling as a mirrored optimisation problem on the space of probability measures. Based on this viewpoint, we also introduce a unifying framework for several existing constrained sampling algorithms, including mirrored Langevin dynamics and mirrored Stein variational gradient descent. We demonstrate the performance of our algorithms on a range of numerical examples, including sampling from targets on the simplex, sampling with fairness constraints, and constrained sampling problems in post-selection inference. Our results indicate that our algorithms achieve competitive performance with existing constrained sampling methods, without the need to tune any hyperparameters.
\end{abstract}

\section{Introduction}
The problem of sampling from unnormalised probability distributions is of central importance to computational statistics and machine learning. Standard approaches to this problem include Markov chain Monte Carlo (MCMC) \cite{Robert2004,Brooks2011} and variational inference (VI) \cite{Wainwright2008,Hoffman2013}. More recently, there has been growing interest in particle-based variational inference (ParVI) methods, which deterministically evolve a collection of particles towards the target distribution and combine favourable aspects of both MCMC and VI. Perhaps the most well-known of these methods is Stein variational gradient descent (SVGD) \cite{Liu2016a}, which iteratively updates particles according to a form of gradient descent on the Kullback-Leibler (KL) divergence. This approach has since given rise to several extensions \cite{Detommaso2018,Wang2018b,Zhuo2018,Chen2020} and found success in a wide range of applications \cite{Detommaso2018,Liu2017a,Pu2017}. 

While such methods have enjoyed great success in sampling from unconstrained distributions, they typically break down when applied to constrained targets \cite[see, e.g.,][]{Shi2022b,Li2023}. This is required for targets that are not integrable over the entire Euclidean space (e.g., the uniform distribution), when the target density is undefined outside a particular domain (e.g., the Dirichlet distribution), or when the target must satisfy certain additional constraints (e.g., fairness constraints in Bayesian inference \cite{Liu2021c}). Notable examples include latent Dirichlet allocation \cite{Blei2003}, ordinal data models \cite{Johnson1999}, regularised regression \cite{Celeux2012}, survival analysis \cite{Klein2003}, and post-selection inference \cite{Taylor2015,Lee2016a}. In recent years, there has been increased interest in this topic, with several new methodological and theoretical developments \cite{Hsieh2018,Chewi2020a,Zhang2020a,Ahn2021,Jiang2021,Li2022a,Shi2022b,Sun2022b,Zhang2022,Li2023}.

One limitation shared by all of these methods is that their performance depends, often significantly, on a suitable choice of learning rate. In principle, convergence rates for existing constrained sampling schemes (e.g., \citep[][Theorem 1]{Ahn2021} or \citep[][Corollary 1]{Sun2022b}) allow one to compute the optimal learning rate for a given problem. In practice, however, this optimal learning rate is a function of the unknown target and thus cannot be computed. Motivated by this problem, in this paper we introduce a suite of new sampling algorithms for constrained domains which are entirely learning rate free. 

\textbf{Our contributions} We first propose a unifying perspective of several existing constrained sampling algorithms, based on the viewpoint of sampling in constrained domains as a `mirrored' optimisation problem on the space of probability measures. Based on this perspective, we derive a general class of solutions to the constrained sampling problem via the {mirrored Wasserstein gradient flow} (MWGF). The MWGF includes, as special cases, several existing constrained sampling methods, e.g., mirrored Langevin dynamics \cite{Hsieh2018} and mirrored SVGD (MSVGD) \cite{Shi2022b}. Using this formulation, we also introduce mirrored versions of several other particle-based sampling algorithms which are suitable for sampling in constrained domains, and study their convergence properties in continuous time.

By leveraging this perspective, and extending the coin sampling methodology recently introduced in \cite{Sharrock2023}, we then obtain a suite of learning rate free algorithms suitable for constrained domains. In particular, we introduce {mirrored coin sampling}, which includes as a special case Coin MSVGD, as well as several other tuning-free constrained sampling algorithms. We also outline an alternative approach which does not rely on a mirror mapping, namely, {coin mollified interaction energy descent} (Coin MIED). This algorithm can be viewed as the natural coin sampling analogue of the method recently introduced in \cite{Li2023}. Finally, we demonstrate the performance of our methods on a wide range of examples, including sampling on the simplex, sampling with fairness constraints, and constrained sampling problems in post-selection inference. Our empirical results indicate that our methods achieve competitive performance with the optimal performance of existing approaches, with no need to tune a learning rate.

\section{Preliminaries}
\label{sec:preliminaries}

\textbf{The Wasserstein Space}.
We will make use of the following notation. For any  $\mathcal{X}\subset \mathbb{R}^d$, we write $\mathcal{P}_2(\mathcal{X})$ for the set of probability measures $\mu$ on $\mathcal{X}$ with finite $2^{\text{nd}}$ moment: $\smash{\int_{\mathcal{X}} ||x||^2 \mu(\mathrm{d}x)<\infty}$, where $\norm{\cdot}$ denotes the Euclidean norm. 
For $\mu\in\mathcal{P}_2(\mathcal{X})$, we write $L^{2}(\mu)$ for the set of measurable $f:\mathcal{X}\rightarrow\mathcal{X}$ with finite  $\smash{||f||_{L^2(\mu)}^2=\int_{\mathcal{X}}||f(x)||^2\mu(\mathrm{d}x)}$.
For $f, g\in L^{2}(\mu)$, we also define  $\smash{\langle f,g\rangle_{L^2(\mu)} = \int_{\mathcal{X}}f(x)^\top g(x)\mu(\mathrm{d}x)}$. Given $\mu\in\mathcal{P}_2(\mathcal{X})$ and $T\in L^2(\mu)$, we write $T_{\#}\mu$ for the {pushforward} of $\mu$ under $T$, that is, the distribution of $T(X)$ when $X\sim \mu$.

Given $\mu,\nu\in\mathcal{P}_2(\mathcal{X})$, the Wasserstein $2$-distance between $\mu$ and $\nu$ is defined according to $\smash{W_2^2(\mu,\nu) = \inf_{\gamma \in\Gamma(\mu,\nu)}  \int_{\mathcal{X}\times\mathcal{X}}||x-y||^2 \gamma(\mathrm{d}x,\mathrm{d}y)}$, where $\Gamma(\mu,\nu)$ denotes the set of couplings between $\mu$ and $\nu$. The Wasserstein distance $W_2$ is indeed a distance over $\mathcal{P}_2(\mathcal{X})$ \citep[Chapter 7.1]{Ambrosio2008}, with the metric space $(\mathcal{P}_2(\mathcal{X}), W_2)$ known as the Wasserstein space. Given a functional $\smash{\mathcal{F}:\mathcal{P}(\mathcal{X})\rightarrow(-\infty,\infty]}$, we will write $\nabla_{W_2}\mathcal{F}(\mu)$ for the Wasserstein gradient of $\mathcal{F}$ at $\mu$, which exists under appropriate regularity conditions \citep[Lemma 10.4.1]{Ambrosio2008}. We refer to App. \ref{app:wasserstein_calculus} for further details on calculus in the Wasserstein space. 

\textbf{Wasserstein Gradient Flows}. Suppose that $\pi\in\mathcal{P}_2(\mathcal{X})$ and that $\pi$ admits a density $\pi\propto e^{-V}$ with respect to  Lebesgue measure,\footnote{In a slight abuse of notation, throughout this paper we will use the same letter to refer to a distribution and its density with respect to Lebsegue measure.} where $V:\mathcal{X}\rightarrow\mathbb{R}$ is a smooth potential function. The problem of sampling from $\pi$ can be recast as an optimisation problem over $\mathcal{P}_2(\mathcal{X})$ \citep[e.g.,][]{Wibisono2018}. In particular, one can view the target $\pi$ as the solution of 
\begin{equation}
 \pi = \argmin_{\mu\in\mathcal{P}_2(\mathcal{X})} \mathcal{F}(\mu) \label{eq:wasserstein_opt}
\end{equation} 
where $\smash{\mathcal{F}:\mathcal{P}(\mathcal{X})\rightarrow(-\infty,\infty]}$ is a functional uniquely minimised at $\pi$. In the unconstrained case, $\mathcal{X}=\mathbb{R}^d$, a now rather classical method for solving this problem is to identify a continuous process which pushes samples from an initial distribution $\mu_0$ to the target $\pi$. This corresponds to finding a family of vector fields $(v_t)_{t\geq 0}$, with $v_t\in L^2(\mu_t)$, which transports $\mu_t$ to $\pi$ via the continuity equation 
\begin{equation}
\frac{\partial \mu_t}{\partial t} + \nabla \cdot (v_t\mu_t) = 0. \label{eq:continuity}
\end{equation}
A standard choice is $v_t = -\nabla_{W_2}\mathcal{F}(\mu_t)$, in which case the solution $(\mu_t)_{t\geq 0}$ of \eqref{eq:continuity} is referred to as the Wasserstein gradient flow (WGF) of $\mathcal{F}$. In fact, for different choices of $(v_t)_{t\geq 0}$, one can obtain the continuous-time limit of several popular sampling algorithms \cite{Jordan1998,Wibisono2018,Grumitt2022,Arbel2019,Korba2021,Liu2016a,Chewi2020}. Unfortunately, it is not straightforward to extend these approaches to the case in which $\mathcal{X}\subset\mathbb{R}^d$ is a constrained domain. For example, the vector fields $v_t$ may
push the random variable $x_t$ outside of its support $\mathcal{X}$, thus rendering all future updates undefined. 

\section{Mirrored Wasserstein Gradient Flows}
We now outline one way to extend the WGF approach to the constrained case, based on the use of a mirror map. We will first require some basic definitions from convex optimisation \cite[e.g.,][]{Rockafellar1970}. Let $\mathcal{X}$ be a closed, convex domain in $\mathbb{R}^d$. Let $\phi:\mathcal{X}\rightarrow\mathbb{R}\cup\{\infty\}$ be a proper, lower semicontinuous, strongly convex function of {Legendre type} \cite{Rockafellar1970}. This implies, in particular, that $\nabla \phi(\mathcal{X}) = \mathbb{R}^d$ and $\nabla \phi: \mathcal{X}\rightarrow\mathbb{R}^d$ is bijective \cite[Theorem 26.5]{Rockafellar1970}. Moreover, its inverse $(\nabla \phi)^{-1}:\mathbb{R}^d\rightarrow\mathcal{X}$ satisfies $(\nabla\phi)^{-1}= \nabla \phi^{*}$, where $\phi^{*}:\mathbb{R}^d\rightarrow\mathbb{R}$ denotes the Fenchel conjugate of $\phi$, defined as $\phi^{*}(y) = \sup_{x\in\mathcal{X}} \langle x, y \rangle - \phi(x)$. We will refer to $\nabla\phi:\mathcal{X}\rightarrow\mathbb{R}^d$ as the {mirror map} and $\nabla \phi(\mathcal{X}) = \mathbb{R}^d$ as the {dual space}.

\subsection{Constrained Sampling as Optimisation}
\label{sec:constrained_sampling_as_optimisation}
Using the mirror map $\nabla \phi:\mathcal{X}\rightarrow\mathbb{R}^d$, we can now reformulate the constrained sampling problem as the solution of a `mirrored' version of the optimisation problem in \eqref{eq:wasserstein_opt}. Let us define $\nu = (\nabla \phi)_{\#} \pi$, with $\pi = (\nabla \phi^{*})_{\#}\nu$. We can then view the target $\pi$ as the solution of 
\begin{equation}
    \pi = (\nabla \phi^{*})_{\#} \nu,~~~ \nu = \argmin_{\eta\in\mathcal{P}_2(\mathbb{R}^d)}\mathcal{F}(\eta), \label{eq:mirror_opt}
\end{equation}
where now $\smash{\mathcal{F}:\mathcal{P}_2(\mathbb{R}^d)\rightarrow(-\infty,\infty]}$ is a functional uniquely minimised by $\smash{\nu = (\nabla \phi)_{\#}\pi}$. The motivation for this formulation is clear. Rather than directly solving a constrained optimisation problem for the target $\smash{\pi\in\mathcal{P}_2(\mathcal{X})}$ as in \eqref{eq:wasserstein_opt}, we could instead solve an unconstrained optimisation problem for the mirrored target $\smash{\nu \in \mathcal{P}_2(\mathbb{R}^d)}$, and then recover the target via $\smash{\pi = (\nabla \phi^{*})_{\#}\nu}$. This is a rather natural extension of the formulation of sampling as optimisation to the constrained setting.  

\subsection{Mirrored Wasserstein Gradient Flows}
\label{sec:mirrored_WGF}
One way to solve the mirrored optimisation problem in \eqref{eq:mirror_opt} is to find a continuous process which transports samples from $\eta_0 := (\nabla \phi)_{\#}\mu_0 \in\mathcal{P}_2(\mathbb{R}^d)$ to the mirrored target $\nu$. According to the mirror map, this process will implicitly also push samples from $\mu_0$ to the target $\pi$. More precisely, we would like to find $(w_t)_{t\geq 0}$, where $w_t:\mathbb{R}^d\rightarrow\mathbb{R}^d$, which evolves $\eta_t$ on $\Rd$ and thus $\mu_t$ on $\xset$, through 
    \begin{align}
        \frac{\partial\eta_t}{\partial t} + \nabla \cdot (w_t \eta_t) &= 0~~~,~~~\mu_t = (\nabla \phi^{*})_{\#} \eta_t \label{lagrange1} 
    \end{align}
where $(w_t)_{t\geq 0}$ should ensure convergence of $\eta_t$ to $\nu$. In the case that $w_t = -\nabla_{W_2}\mathcal{F}(\eta_t)$, we will refer to \eqref{lagrange1} as the mirrored Wasserstein gradient flow (MWGF) of $\mathcal{F}$. The idea of \eqref{lagrange1} is to simulate the WGF in the (unconstrained) dual space  and then map back to the primal (constrained) space using a carefully chosen mirror map. This approach bypasses the inherent difficulties associated with simulating a gradient flow on the constrained space itself. 

While a similar approach has previously been used to derive certain mirrored sampling schemes \cite{Hsieh2018,Shi2022b,Sun2022b}, these works focus on specific choices of $\mathcal{F}$ and $(w_t)_{t\geq 0}$, while we consider the general case. Our general perspective provides a unifying framework for these approaches, whilst also allowing us to obtain new constrained sampling algorithms and to study their convergence properties (see App. \ref{sec:example-wasserstein-gradient-flows}). We now consider one such approach in detail.

\subsubsection{Mirrored Stein Variational Gradient Descent}
\label{sec:mirrored_SVGD}
MSVGD \cite{Shi2022b} is a mirrored version of the popular SVGD algorithm. In \cite{Shi2022b}, MSVGD is derived using a {mirrored Stein operator} in a manner analogous to the original SVGD derivation \cite{Liu2016a}. Here, we take an alternative perspective, which originates in the MWGF. This formulation is more closely aligned with other more recent papers analysing the non-asymptotic convergence of SVGD \cite{Korba2021,Salim2022,Sun2022a,Sun2022b}. 

We will require the following additional notation. Let $k:\mathcal{X}\times\mathcal{X}\rightarrow\mathbb{R}$ denote a positive semi-definite kernel  and $\mathcal{H}_k$ the corresponding reproducing kernel Hilbert space (RKHS) of real-valued functions $f:\mathcal{X}\rightarrow\mathbb{R}$, with inner product $\langle \cdot,\cdot\rangle_{\mathcal{H}_k}$ and norm $||\cdot||_{\mathcal{H}_k}$. Let $\mathcal{H}:=\mathcal{H}_k^d$ denote the Cartesian product RKHS, consisting of elements $f = (f_1,\dots,f_d)$ for $f_i\in\mathcal{H}_k$. We write $S_{\mu,k}:L^2(\mu)\rightarrow\mathcal{H}$ for the operator $S_{\mu,k} f = \int k(x,\cdot) f(x)\mathrm{d}\mu(x)$. We assume $\int k(x,x)\mathrm{d}\mu(x)<\infty$ for $\mu\in\mathcal{P}_2(\mathcal{X})$, in which case $\mathcal{H}\subset L^2(\mu)$ for all $\mu\in\mathcal{P}_2(\mathcal{X})$ \cite[Sec. 3.1]{Korba2020}. We also define the identity embedding $i:\mathcal{H}\rightarrow L^2(\mu)$ with adjoint $i^{*} = S_{\mu,k}$. Finally, we define $P_{\mu,k}:L^2(\mu)\rightarrow L^2(\mu)$ as $P_{\mu,k} = i S_{\mu,k}$.

We are now ready to introduce our perspective on MSVGD. We begin by introducing the continuous-time MSVGD dynamics, which are defined by the continuity equation 
        \begin{align}
    \frac{\partial\eta_t}{\partial t} - \nabla \cdot \left(P_{\eta_t,k_{\phi}} \nabla \log \left(\frac{\eta_t}{\nu}\right)\eta_t\right) &= 0,\quad \mu_t = (\nabla \phi^{*})_{\#} \eta_t,
        \label{svgd1}
        \end{align}
where  $k_{\phi}(y,y') = k(\nabla\phi^{*}(y),\nabla\phi^{*}(y'))$ for some base kernel $k:\mathcal{X}\times\mathcal{X}\rightarrow\mathbb{R}$. Clearly, this is a special case of the MWGF in \eqref{lagrange1}, in which $(w_t)_{t\geq 0}$ are defined according to $\smash{w_t = -P_{\eta_t,k_{\phi}}\nabla_{W_2}\mathcal{F}(\eta_t)}$, with $\smash{\mathcal{F}(\eta_t) = \mathrm{KL}(\eta_t||\nu)}$. In particular, $w_t$ can be interpreted as the negative Wasserstein gradient of $\mathrm{KL}(\eta_t||\nu)$, the KL divergence of the $\eta_t$ with respect to $\nu$, under the inner product of $\mathcal{H}_{k_{\phi}}$. 

The key point here is that, given samples from $\eta_t$,  estimating $\smash{P_{\eta_t,k_{\phi}}\nabla \log (\frac{\eta_t}{\nu})}$ is straightforward. In particular, if $\smash{\lim_{||y'||\rightarrow\infty}k_{\phi}(y',\cdot)\nu(y')=0}$, then, integrating by parts,
\begin{align}
    P_{\eta_t,k_{\phi}} \nabla \log (\frac{\eta_t}{\nu})(y) = -\int_{\mathbb{R}^d} [ k_{\phi}(y',y) \nabla_{y'} \log \nu(y') + \nabla_{y'}k_{\phi}(y',y)]\mathrm{d}\eta_t(y').
\end{align}
To obtain an implementable algorithm, we must discretise in time. By applying an explicit Euler discretisation to \eqref{svgd1}, we arrive at
    \begin{align}
        \eta_{t+1} = \left( \mathrm{id} - \gamma P_{\eta_t,k_{\phi}} \nabla \log \left(\frac{\eta_{t}}{\nu}\right)\right)_{\#}\eta_{t},\quad \mu_{t+1} = (\nabla \phi^{*})_{\#}\eta_{t+1}, \label{eq:population_msvgd}
    \end{align}
   where $\gamma>0$ is a step size or learning rate, and $\mathrm{id}$ is the identity map. 
   This is the population limit of the MSVGD algorithm in \cite{Shi2022b}. Now, suppose $\mu_0\in\mathcal{P}_2(\mathcal{X})$ admits a density with respect to the Lebesgue measure. In addition, suppose $x_0\sim \mu_0$ and $y_0 =\nabla\phi(x_0)$. Then, for $t\in\mathbb{N}_{0}$, if we define  
    \begin{align}
        y_{t+1} = y_t - \gamma P_{\eta_t, k_{\phi}}\nabla \log \left(\frac{\eta_t}{\nu}\right)(y_t), \quad x_{t+1} = \nabla \phi^{*}(y_{t+1}), \label{msvgd2}
    \end{align}
    then $(\eta_t)_{t\in\mathbb{N}_0}$ and $(\mu_t)_{t\in\mathbb{N}_0}$ in \eqref{eq:population_msvgd} are precisely the densities of $(y_t)_{t\in\mathbb{N}_0}$ and $(x_t)_{t\in\mathbb{N}_0}$ in \eqref{msvgd2}. In practice, these densities are unknown and must be estimated using samples. In particular, to approximate \eqref{msvgd2}, we initialise a collection of particles $\smash{x_0^{i} \stackrel{\mathrm{i.i.d.}}{\sim} \mu_0}$ for $i\in[N]$, set $\smash{y_0^{i} = \nabla\phi(x_0^i)}$, and then update
    \begin{align}
        y_{t+1}^{i} = y_t^{i} - \gamma P_{ \hat{\eta}_t, k_{\phi}}\nabla \log \left(\frac{\hat{\eta}_t}{\nu}\right)(y_t^{i}),\quad x^{i}_{t+1} = \nabla \phi^{*}(y^{i}_{t+1}) 
    \end{align}
    where $\smash{\hat{\eta}_t = \frac{1}{N}\sum_{j=1}^N \delta_{y_{t}^{j}}}$. This is precisely the MSVGD algorithm introduced in \cite{Shi2022b} and since also studied in \cite{Sun2022b}. 
        \begin{algorithm}[t] %
	\caption{MSVGD}
	\label{alg:msvgd}
	\begin{algorithmic}
		\State {\bf input:} target density $\pi$, kernel $k$, mirror function $\phi$, particles $(x_0^i)_{i=1}^N\sim \mu_0$, step size $\gamma$.
            \State {\bf initialise:} $y_0^i = \nabla\phi(x_0^i)$ for $i \in [N]$
		\For{$t=0,\dots,T-1$}
		\State For $i \in [N], y_{t+1}^i = y_{t}^{i} - \gamma P_{\hat{\eta}_t,k_{\phi}}\nabla \log \left(\frac{\hat{\eta}_t}{\nu}\right)(y_t^{i})$, where $\hat{\eta}_t = \frac{1}{N}\sum_{j=1}^N \delta_{y_t^{j}}$.
		\State For $i \in [N], x_{t+1}^i = \nabla\phi^*(y_{t+1}^i)$.
		\EndFor
		\State {\bf return}  $(x_T^{i})_{i=1}^N$.
	\end{algorithmic}
\end{algorithm}

\subsubsection{Other Algorithms}
In App. \ref{sec:example-wasserstein-gradient-flows}, we outline several algorithms which arise as special cases of the MWGF. These include existing algorithms, such as the mirrored Langevin dynamics (MLD) \cite{Hsieh2018} and MSVGD \cite{Shi2022b}, and two new algorithms, which correspond to mirrored versions of Laplacian adjusted Wasserstein gradient descent (LAWGD) \cite{Chewi2020} and kernel Stein discrepancy descent (KSDD) \cite{Korba2021}. We also study the convergence properties of these algorithms in continuous time.

\section{Mirrored Coin Sampling}
By construction, any algorithm derived as a discretisation of the MWGF, including MSVGD, will depend on an appropriate choice of learning rate $\gamma$. 
In this section, we consider an alternative approach, leading to new algorithms which are entirely learning rate free. Our approach can be viewed as an extension of the {coin sampling} framework introduced in \cite{Sharrock2023} to the constrained setting, based on the reformulation of the constrained sampling problem as a mirrored optimisation problem. 

\subsection{Coin Sampling}
\label{sec:coin}
We begin by reviewing the general coin betting framework \cite{Orabona2016,Orabona2017,Cutkosky2018}. 
Consider a gambler with initial wealth $w_0 = 1$ who makes bets on a series of adversarial coin flips $c_t\in\{-1,1\}$, where $+1$ denotes heads and $-1$ denotes tails. We encode the gambler's bet by $x_t \in\mathbb{R}$. In particular, $\mathrm{sign}(x_t)\in\{-1,1\}$ denotes whether the bet is on heads or tails, and $|x_t|\in\mathbb{R}$ denotes the size of the bet. Thus, in the $t^{\text{th}}$ round, the gambler wins $|x_tc_t|$ if $\mathrm{sign}(c_t) = \mathrm{sign}(x_t)$ and loses $|x_tc_t|$ otherwise. Finally, we write $w_t$ for the wealth of the gambler at the end of the $t^{\text{th}}$ round. The gambler's wealth thus accumulates as $\smash{w_t = 1 + \sum_{s=1}^t c_sx_s}$. We assume that the gambler's bets satisfy $x_t = \beta_tw_{t-1}$, for a betting fraction $\beta_t\in[-1,1]$, which means the gambler does not borrow any money.

In \cite{Orabona2016}, the authors show how this approach can be used to solve convex optimisation problems on $\mathbb{R}^d$, that is, problems of the form $x^{*} = \argmin_{x\in\mathbb{R}^d}f(x)$, for convex $f:\mathbb{R}^d\rightarrow\mathbb{R}$. In particular, \cite{Orabona2016} consider a betting game with `outcomes' $c_t = -\nabla f(x_t)$, replacing scalar multiplications $c_tx_t$ by scalar products $\langle c_t, x_t\rangle$. In this case, under certain assumptions, they show that $\smash{f(\frac{1}{T}\sum_{t=1}^Tx_t)\rightarrow f(x^{*})}$ at a rate determined by the betting strategy. There are many suitable choices for the betting strategy. 
Perhaps the simplest of these is $\smash{\beta_t = t^{-1}\sum_{s=1}^{t-1} c_s}$, known as the Krichevsky-Trofimov (KT) betting strategy \cite{Krichevsky1981,Orabona2016}, which yields the sequence of bets $\smash{x_t = \beta_t w_{t-1} = t^{-1}\sum_{s=1}^{t-1}c_s(1 + \sum_{s=1}^{t-1} \langle c_s, x_s\rangle )}$. Other choices, however, are also possible \cite{Orabona2017,Cutkosky2018,Orabona2020}.
    
 In \cite{Sharrock2023}, this approach was extended to solve optimisation problems on the space of probability measures. In this case, several further modifications are necessary.  First, in each round, one now bets $x_t - x_0$ rather than $x_t$, where $x_0\sim\mu_0$ is distributed according to some $\mu_0\in\mathcal{P}_2(\mathbb{R}^d)$. In this case, viewing $\smash{x_t:\mathbb{R}^{d}\rightarrow\mathbb{R}^d}$ as a function that maps $x_0\mapsto x_t(x_0)$, one can define a sequence of distributions $\smash{(\mu_{t})_{t\in\mathbb{N}}}$ as the push-forwards of $\mu_0$ under the functions $(x_t)_{t\in\mathbb{N}}$. In particular, this means that, if $x_0\sim\mu_0$, then $x_t\sim \mu_t$.  
 
 By using these modifications in the original coin betting framework, and choosing $(c_t)_{t\in\mathbb{N}}:=(c_{\mu_t}(x_t))_{t\in\mathbb{N}}$ appropriately, \cite{Sharrock2023} show that it is now possible to solve problems of the form $\smash{\mu^{*} = \argmin_{\mu\in\mathcal{P}_2(\mathcal{X})}\mathcal{F}(\mu)}$. This approach is referred to as coin sampling. For example, by considering a betting game with $\smash{c_t = - \nabla_{W_2}\mathcal{F}(\mu_t)(x_t)}$, \cite{Sharrock2023} obtain a `{coin Wasserstein gradient descent}' algorithm, which can be viewed as a learning-rate free analogue of Wasserstein gradient descent. Alternatively, setting $c_t = -P_{\mu_t,k}\nabla_{W_2}\mathcal{F}(\mu_t)(x_t)$, one can obtain a learning-rate free analogue of the population limit of SVGD, in which the updates are given by 
        \begin{align}
            x_t &= x_0 - \frac{\sum_{s=1}^{t-1}\mathcal{P}_{\mu_s,k}\nabla_{W_2}\mathcal{F}(\mu_s)(x_s)}{t}\left(1 - \sum_{s=1}^{t-1} \langle \mathcal{P}_{\mu_s,k} \nabla_{W_2}\mathcal{F}(\mu_s)(x_s), x_s - x_0\rangle\right), \label{eq:bets4}
        \end{align} 
    where, as described above, $\mu_t = (x_t)_{\#} \mu_0$. In practice, the densities $(\mu_t)_{t\in\mathbb{N}_0}$ and thus the outcomes $(c_t)_{t\in\mathbb{N}}$ are almost always intractable, and will be approximated using a set of interacting particles.

\subsection{Mirrored Coin Sampling}
\label{sec:mirrored_coin}
By leveraging the formulation of constrained sampling as a `mirrored' optimisation problem over the space of probability measures, we can extend the coin sampling approach to constrained domains. In particular, as an alternative to discretising the MWGF to solve this optimisation problem (Sec. \ref{sec:mirrored_WGF}), we propose instead to use a mirrored version of coin sampling (Sec. \ref{sec:coin}). The idea is to use coin sampling to solve the optimisation problem in the dual space, and then map back to the primal space using the mirror map. In particular, this suggests the following scheme. Let $x_0\sim \mu_0\in\mathcal{P}_2(\mathcal{X})$, and $y_0 = \nabla \phi(x_0) \in\mathbb{R}^d$. Then, for $t\in\mathbb{N}$, update
\begin{align}
        y_t &= y_0 + \frac{\sum_{s=1}^{t-1}c_{\eta_s}(y_s)}{t}\left(1 + \sum_{s=1}^{t-1} \langle c_{\eta_s}(y_s), y_s - y_0\rangle\right) ~~~,~~~ x_t = \nabla\phi^{*}(x_t), \label{eq:mirrror_coin1} 
    \end{align} 
where $\eta_t = (y_t)_{\#} \eta_0$ and $\mu_t = (\nabla\phi^{*})_{\#}\eta_t$, and where $c_{\eta_s}(y_s) = -\nabla_{W_2}\mathcal{F}(\eta_s)(y_s)$, or some variant thereof. We will refer to this approach as {mirrored coin sampling}, or {mirrored coin Wasserstein gradient descent}. In order to obtain an implementable algorithm, we proceed as follows. First, decide on a suitable functional $\mathcal{F}$ and a suitable $(c_{t})_{t\in\mathbb{N}}:=(c_{\eta_t}(y_t))_{t\in\mathbb{N}}$. For example, $\mathcal{F}(\eta) = \mathrm{KL}(\eta|\nu)$, and $(c_{t})_{t\in\mathbb{N}} = (-\mathcal{P}_{\eta_t,k_{\phi}}\nabla_{W_2}\mathcal{F}(\eta_t)(y_t))_{t\in\mathbb{N}}$. Then, substitute these into \eqref{eq:mirrror_coin1}, and approximate the updates using a set of interacting particles, as in existing ParVIs algorithms. 

Following these steps, we obtain a learning rate free analogue of MSVGD \cite{Shi2022b}, which we will refer to as Coin MSVGD. This is summarised in Alg. \ref{alg:constrained_coin}. For different choices of $(c_{\eta_t})_{t\in\mathbb{N}}$ in \eqref{eq:mirrror_coin1}, we also obtain learning rate free analogues of two other mirrored ParVI algorithms introduced in App. \ref{sec:example-wasserstein-gradient-flows}, namely mirrored LAWGD (App. \ref{sec:MLAWGD}), and mirrored KSDD (App. \ref{sec:MKSDD}). These algorithms, which we term Coin MLAWGD and Coin MKSDD, are given in Alg. \ref{alg:mirrored-coin-lawgd} and Alg. \ref{alg:mirrored-coin-ksdd} in App. \ref{sec:other-mirrored-coin-parVI}.

Even in the unconstrained setting, establishing the convergence of coin sampling remains an open problem. In \cite{Sharrock2023}, the authors provide a technical sufficient condition under which it is possible to establish convergence to the target measure in the population limit; however, it is difficult to verify this condition in general. In the interest of completeness, in App. \ref{sec:conv-MCWGD}, we provide an analogous convergence result for mirrored coin sampling, adapted appropriately to the constrained setting. 

\begin{algorithm}[t] %
	\caption{Coin MSVGD}
	\label{alg:constrained_coin}
	\begin{algorithmic}
		\State {\bf input:} target density $\pi$, kernel $k$, mirror function $\phi$, particles $(x_0^i)_{i=1}^N\sim \mu_0$.
            \State {\bf intialise:} $y_0^i = \nabla\phi(x_0^i)$ for $i \in [N]$ 
		\For{$t=1,\dots,T$}
		\State For $i \in [N], y_{t}^i = y_{0}^{i} - \frac{\sum_{s=1}^{t-1} P_{\hat{\eta}_s,k_{\phi}} \nabla \log \left(\frac{\hat{\eta}_s}{\nu}\right)(y_s^{i})}{t} \left(1-\sum_{s=1}^{t-1} \langle P_{\hat{\eta}_s,k_{\phi}} \nabla \log (\frac{\hat{\eta}_s}{\nu})(y_s^{i}), y_s^{i} - y_0^{i} \rangle \right)$.
		\State For $i \in [N], x_{t}^i = \nabla\phi^*(y_{t}^i)$.
		\EndFor
		\State {\bf return}  $(x_T^{i})_{i=1}^N$.
	\end{algorithmic}
    \end{algorithm}

\subsection{Alternative Approaches}
\label{sec:coin-mied}
In this section, we outline an alternative approach, based on a `coinification' of the {mollified interaction energy descent} (MIED) method recently introduced in \cite{Li2023}. MIED is based on minimising a function known as the mollified interaction energy (MIE) $\mathcal{E}_{\epsilon}:\mathcal{P}(\mathbb{R}^d)\rightarrow[0,\infty]$. The idea is that minimising this function balances two forces: a repulsive force which ensures the measure is well spread, and an attractive force which ensures the measure is concentrated around regions of high density. In order to obtain a practical sampling algorithm, \cite{Li2023} propose to minimise a discrete version of the logarithmic MIE,  using, e.g., Adagrad \cite{Duchi2011} or Adam \cite{Kingma2015}. That is, minimising the function
    \begin{equation}
        \log E_{\epsilon}(\omega^n) := \log \frac{1}{N^2} \sum_{i=1}^n\sum_{j=1}^n \phi_{\epsilon}(x^{i} - x^{j})(\pi(x^{i})\pi(x^{j}))^{-\frac{1}{2}}, \label{eq:logE}
    \end{equation}
where $\smash{\omega^n = \{x^{1},\dots,x^{n}\}}$, and where  $(\phi_{\epsilon})_{\epsilon>0}$ is a family of {mollifiers} \citep[][Definition 3.1]{Li2023}. This approach can be adapted to handle the constrained case, as outlined in \cite{Li2023}. In particular, if there exists a differentiable $f:\mathbb{R}^d\rightarrow\mathcal{X}$ such that $\smash{\mathcal{X} \setminus f(\mathbb{R}^d)}$ has Lebesgue measure zero, then one can reduce the constrained problem to an unconstrained one by minimising $\smash{\log E_{\epsilon} (f(w^{n}))}$, where $\smash{f(\omega^n) = \{f(x^{1}),\dots,f(x^{n})\}}$. Meanwhile, if $\smash{\mathcal{X} = \{x \in \mathbb{R}^d : g(x) \leq 0\}}$ for some differentiable $g:\mathbb{R}^d\rightarrow\mathbb{R}$, then one can use a variant of the dynamic barrier method introduced in \cite{Gong2021a}.

Here, as an alternative to the optimisation methods considered in \cite{Li2023}, we instead propose to use a learning-rate free algorithm to minimise \eqref{eq:logE}, based on the coin betting framework described in \ref{sec:coin}. We term the resulting method, summarised in Alg. \ref{alg:coin_mied}, Coin MIED. In comparison to our {mirrored} coin sampling algorithm, this approach has the advantage that it only requires a differentiable (not necessarily bijective) map. 

    \begin{algorithm}[t] %
	\caption{Coin MIED}
	\label{alg:coin_mied}
	\begin{algorithmic}
		\State {\bf Input:} target density $\pi$, 
		mollifier $\phi_{\epsilon}$, particles $\omega_0^n = (x_0^i)_{i=1}^N$.
		\For{$t=1,\dots,T$}
		\State For $i \in [N], x_{t}^i = x_{0}^{i} - \frac{\sum_{s=1}^{t-1} \nabla_{x_s^{i}}\log E_{\epsilon}(\omega_s^n)}{t} \left(1-\sum_{s=1}^{t-1} \langle \nabla_{x_s^i}\log E_{\epsilon}(\omega_s^n), x_s^{i} - x_0^{i} \rangle \right)$.
		\EndFor
		\State {\bf return}  $(x_T^{i})_{i=1}^N$.
	\end{algorithmic}
\end{algorithm}


\section{Related Work}
\textbf{Sampling as Optimisation}. The connection between sampling and optimisation has a long history, dating back to \cite{Jordan1998}, which established that the evolution of the law of the Langevin diffusion corresponds to the WGF of the KL divergence. In recent years, there has been renewed interest in this perspective \cite{Wibisono2018}. In particular, the viewpoint of existing algorithms such as LMC \cite{Dalalyan2017a,Dalalyan2019,Durmus2019,Durmus2019a} and SVGD  \cite{Liu2016a,Duncan2019} as discretisations of WGFs has proved fruitful in deriving sharp convergence rates \cite{Durmus2019a,Balasubramanian2022,Korba2020,Salim2022,Sun2022a,Shi2022a}. It has also resulted in the rapid development of new sampling algorithms, inspired by ideas from the optimisation literature such as proximal methods \cite{Wibisono2019}, coordinate descent \cite{Ding2021,Ding2021a}, Nesterov acceleration \cite{Cheng2018,Dalalyan2020,Ma2021}, Newton methods \cite{Martin2012,Simsekli2016,Wang2020c}, and coin betting \cite{Sharrock2023,Sharrock2023b}.

\textbf{Sampling in Constrained Domains}. Recent interest in constrained sampling has resulted in a range of general-purpose algorithms for this task \citep[e.g.,][]{Hsieh2018,Zhang2020a,Shi2022b,Zhang2022,Li2023}. In cases where it is possible to explicitly parameterise the target domain in a lower dimensional space, one can use variants of classical methods, including LMC \cite{Wang2020c}, rejection sampling \cite{Diaconis2013}, Hamiltonian Monte Carlo (HMC) \cite{Byrne2013}, and Riemannian manifold HMC \cite{Lee2018,Patterson2013}. Several other approaches are based on the use of mirror map. In particular, \cite{Hsieh2018} proposed mirrored Langevin dynamics and its first-order discretisation. \cite{Zhang2020a}, and an earlier draft of \cite{Hsieh2018}, introduced the closely related mirror Langevin diffusion and the mirror Langevin algorithm, which has since also been studied in \cite{Jiang2021,Chewi2020a,Ahn2021,Li2022a}. Along the same lines, \cite{Shi2022b} introduced two variants of SVGD suitable for constrained domains based on the use of a mirror map, and established convergence of these schemes to the target distribution; see also \cite{Sun2022b}. In certain cases \citep[e.g.,][]{Lelievre2010,Lelievre2016} one cannot explicitly parameterise the manifold of interest. In this setting, different approaches are required. \cite{Brubaker2012} introduced a constrained version of HMC for this case. Meanwhile, \cite{Zappa2018} proposed a constrained Metropolis-Hastings algorithm with a reverse projection check to ensure reversibility; this approach has since been extended in \cite{Lelievre2019,Lelievre2023}. Other, more recent approaches suitable for this setting include MIED \cite{Li2023} (see Sec. \ref{sec:coin-mied}), and O-SVGD \cite{Zhang2022}.


\section{Numerical Results}
\label{sec:numerics}

In this section, we perform an empirical evaluation of Coin MSVGD (Sec. \ref{sec:simplex-targets} -  \ref{subsec:CIs_post_selection_inf}) and Coin MIED (Sec. \ref{sec:fairness-bnn}). We consider several simulated and real data experiments, including sampling from targets defined on the simplex (Sec. \ref{sec:simplex-targets}), confidence interval construction for post-selection inference (Sec. \ref{subsec:CIs_post_selection_inf}), and inference in a fairness Bayesian neural network (Sec. \ref{sec:fairness-bnn}). We compare our methods to their learning-rate-dependent analogues, namely, MSVGD \cite{Shi2022b} and MIED \cite{Li2023}. We also include a comparison with Stein Variational Mirror Descent (SVMD) \cite{Shi2022b}, and projected versions of Coin SVGD \cite{Sharrock2023} and SVGD \cite{Liu2016a}, which include a Euclidean projection step to ensure the iterates remain in the domain. We provide additional experimental details in App. \ref{sec:additional-experimental-details} and additional numerical results in App. \ref{sec:additional-numerical-results}. Code to reproduce all of the numerical results can be found at \url{https://github.com/louissharrock/constrained-coin-sampling}.

\subsection{Simplex Targets}
\label{sec:simplex-targets}
Following \cite{Shi2022b}, we first test the performance of our algorithms on two 20-dimensional targets defined on the simplex: the sparse Dirichlet posterior of \cite{Patterson2013} and the quadratic simplex target of \cite{Ahn2021}. We employ the IMQ kernel and the entropic mirror map \cite{Beck2003}; and use $N=50$ particles, $T=500$ iterations. In Fig. \ref{fig:dirichlet} and Fig. \ref{fig:dirichlet_energy_vs_iter} - \ref{fig:quadratic_energy_vs_iter} (App. \ref{sec:appendix-simplex-targets}), we 
plot the energy distance \cite{Szekely2013} to a set of surrogate ground truth samples, either obtained i.i.d. (sparse Dirichlet target) or using the No-U-Turn Sampler (NUTS) \cite{Hoffman2014} (quadratic target). After $T=500$ iterations, Coin MSVGD has comparable performance to MSVGD and SVMD with optimal learning rates but significantly outperforms both algorithms for sub-optimal learning rates (Fig. \ref{fig:dirichlet}). In both cases, the projected methods fail to converge. For the sparse Dirichlet posterior, Coin MSVGD converges more rapidly than MSVGD and SVMD, even for well-chosen values of the learning rate (Fig. \ref{fig:dirichlet_energy_vs_iter_mult_method} in App. \ref{sec:appendix-simplex-targets}). Meanwhile, for the quadratic target, Coin MSVGD generally converges more rapidly than MSVGD  but not as fast as SVMD \cite{Shi2022b}, which takes advantage of the log-concavity of the target in the primal space (Fig. \ref{fig:dirichlet_energy_vs_iter_mult_method} in App. \ref{sec:appendix-simplex-targets}). 

\begin{figure}[t]
\vspace{-5mm}
\centering
\subfloat[Sparse Dirichlet Posterior.]{\includegraphics[trim=0 0 0 5, clip, width=.46\textwidth]{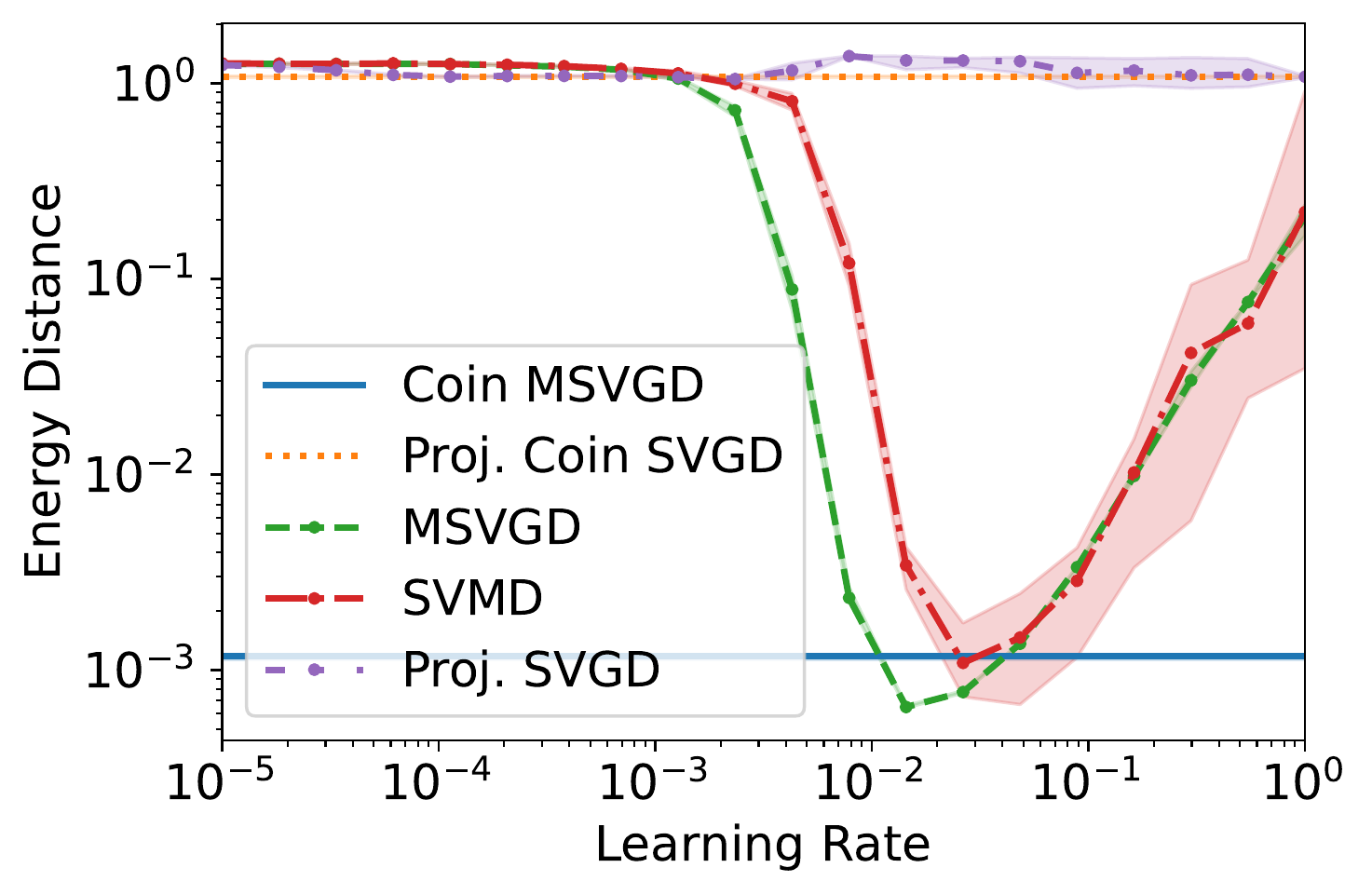}}
\subfloat[Quadratic Target.]{\includegraphics[trim=0 0 0 5, width=.46\textwidth]{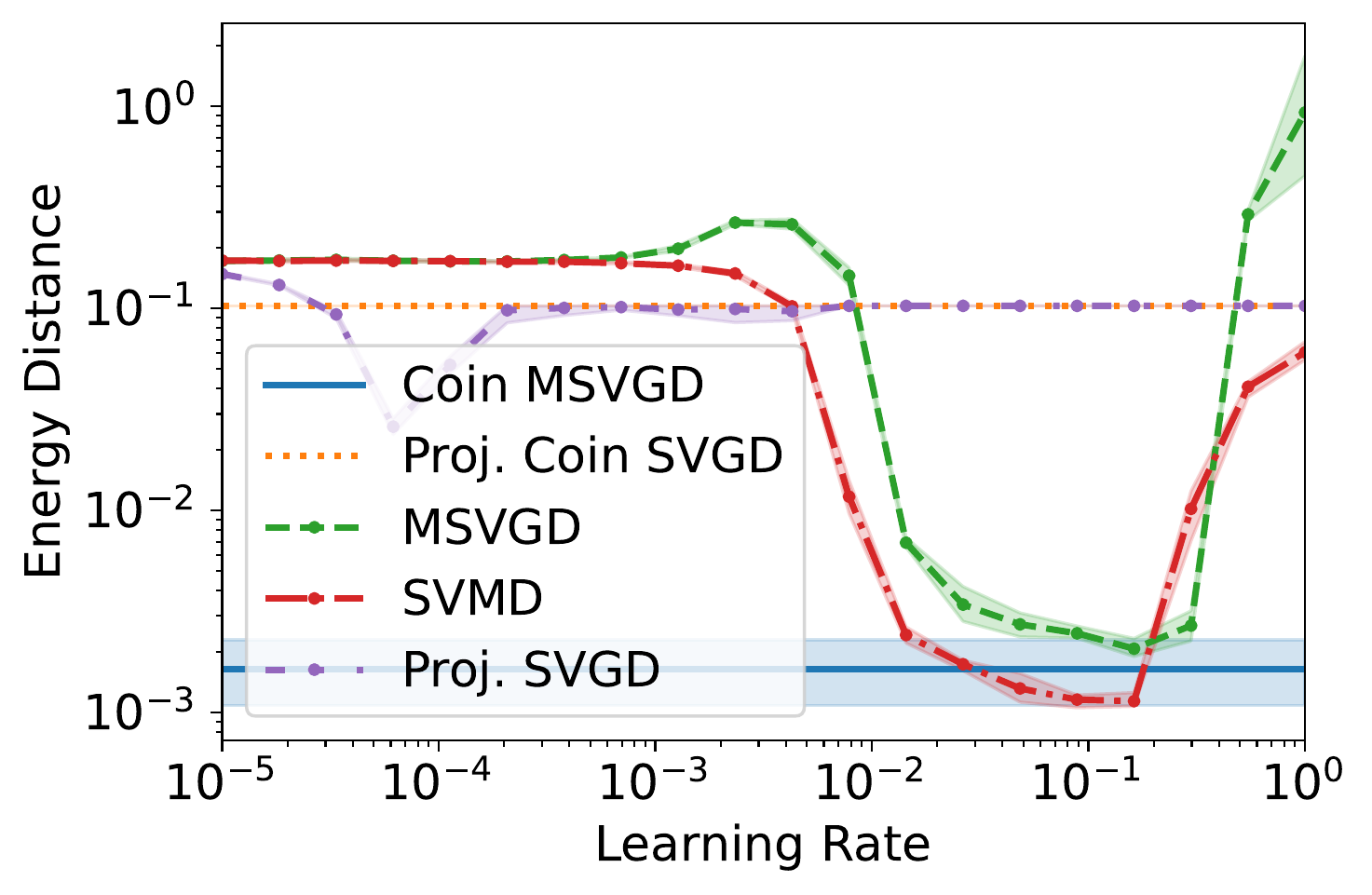}}
\caption{\textbf{Results for the simplex targets in \cite{Patterson2013} and \cite{Ahn2021}}. Posterior approximation quality for Coin MSVGD, MSVGD, projected Coin SVGD, projected SVGD, and SVMD.}
\label{fig:dirichlet}
\vspace{-5mm}
\end{figure}

\subsection{Confidence Intervals for Post Selection Inference}
\label{subsec:CIs_post_selection_inf}
We next consider a constrained sampling problem arising in post-selection inference \cite{Taylor2015, Lee2016a}. Suppose we are given data $(X,y)\in \mathbb{R}^{n\times p} \times \mathbb{R}^n$. We are interested in obtaining valid confidence intervals (CIs) for regression coefficients obtained via the randomised Lasso \cite{Tian2016}, defined as the solution of 
\begin{equation}
    \hat{\beta} = \argmin_{\beta\in\mathbb{R}^p}\left[\frac{1}{2}||y-X\beta||_{2}^2 + \lambda ||\beta||_{1} - \omega^{\top}\beta + \frac{\varepsilon}{2} ||\beta||_{2}^2 \right], 
\end{equation}
where $\lambda,\varepsilon\in\mathbb{R}_{+}$ are penalty parameters and $\omega\in\mathbb{R}^p$ 
is an auxiliary random vector. Let $\smash{\hat{\beta}_{E}\in\mathbb{R}^{q}}$ represent the non-zero coefficients of $\smash{\hat{\beta}}$ and $\smash{\hat{z}_{E} = \mathrm{sign}(\hat{\beta}_E)}$ for their signs. If the support $E$ were known a priori, we could determine CIs for $\beta_{E}$ using the asymptotic normality of $\smash{\bar{\beta}_{E} = (X_{E}^{\top}X_{E})^{-1}X_{E}^{\top}y}$. However, when $E$ is based on data, these `classical' CIs will no longer be valid. In this case, one approach is to condition on the result of the initial model selection, i.e., knowledge of $E$ and $\hat{z}_{E}$. In practice, this means sampling from the so-called selective distribution, which has support $\smash{\mathcal{D} = \{(\hat{\beta}_{E},\hat{z}_{-E}): \mathrm{sign}(\hat{\beta}_{E}) = \hat{z}_E~,~||\hat{z}_{-E}||_{\infty}\leq1\}}$, and density proportional to \cite[e.g.,][Sec. 4.2]{Tian2016a} 
    \begin{equation}
        \hat{g}(\hat{\beta}_{E}, \hat{z}_{-E}) \propto g\left( \varepsilon (\hat{\beta}_{E}, 0)^{\top} - X^{\top} (y - X_{E}\hat{\beta}_{E}) + \lambda (\hat{z}_{E}, \hat{z}_{-E})^{\top} \right).
    \end{equation}

\textbf{Synthetic Example}. We first consider the model setup described in \cite[][Sec. 5.3]{Sepehri2017}; see App. \ref{sec:appendix-confidence-intervals} for full details. In Fig. \ref{fig:2d_sel}, we plot the energy distance between samples obtained using Coin MSVGD and projected Coin SVGD, and a set of 1000 samples obtained using NUTS \cite{Hoffman2014}, for a two-dimensional selective distribution. Similar to our previous examples, the performance of MSVGD is very sensitive to the choice of learning rate. If the learning rate is too small, the particle converge rather slowly; on the other hand, if the learning rate is too big, the particles may not even converge. Meanwhile, Coin MSVGD converges rapidly to the true target, with no need to tune a learning rate. Similar to the examples considered in Sec. \ref{sec:simplex-targets}, the projected methods once again fail to converge to the true target, highlighting the need for the mirrored approach.

\begin{figure}[b]
\vspace{-5mm}
\centering
\subfloat[Coin MSVGD. \label{fig:add_post_sel_a}]{\includegraphics[trim=0 0 0 5, clip, width=.45\textwidth]{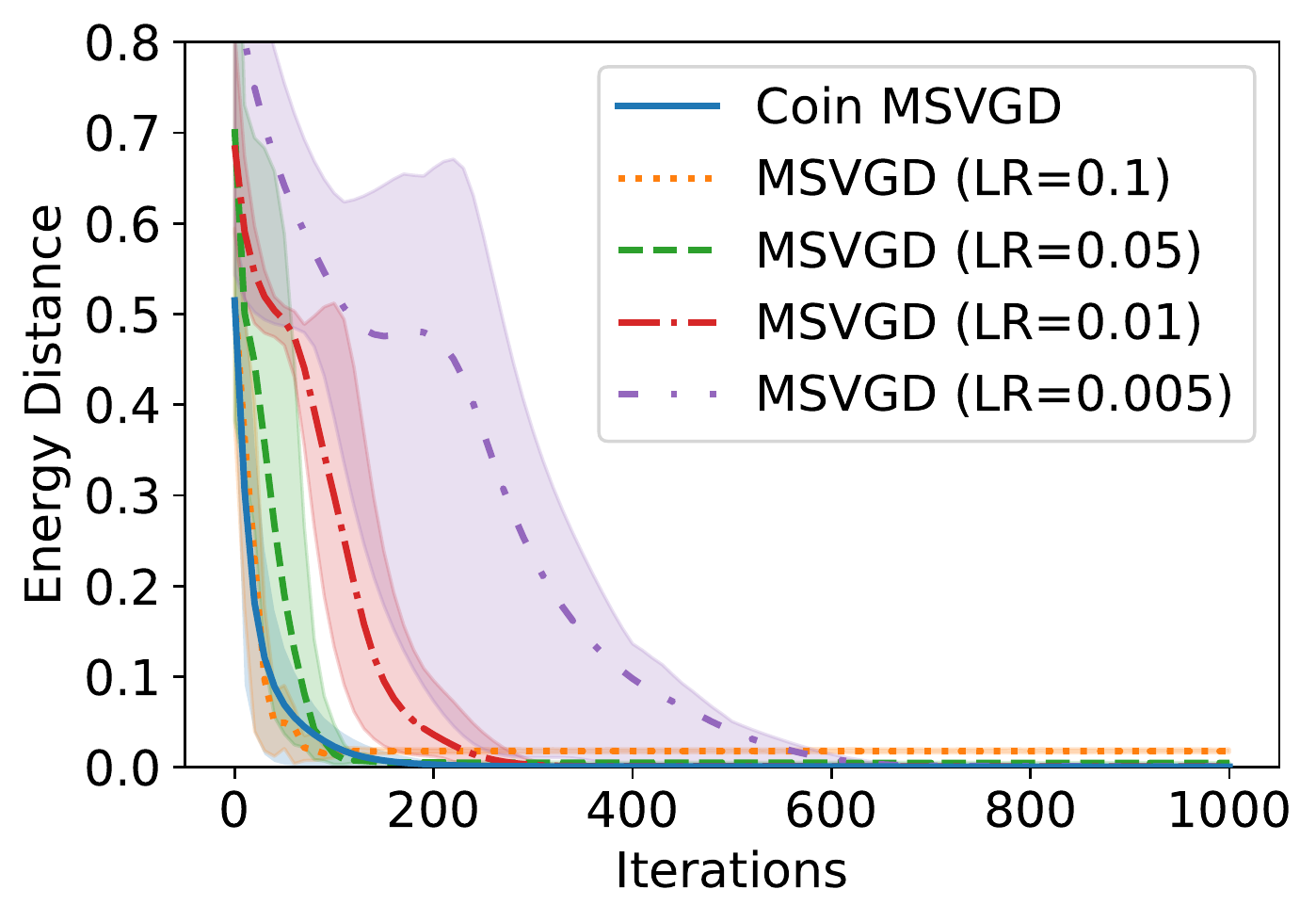}}
\hspace{1mm}
\subfloat[Projected Coin SVGD. \label{fig:add_post_sel_b}]{\includegraphics[trim=0 0 0 5, clip, width=.45\textwidth]{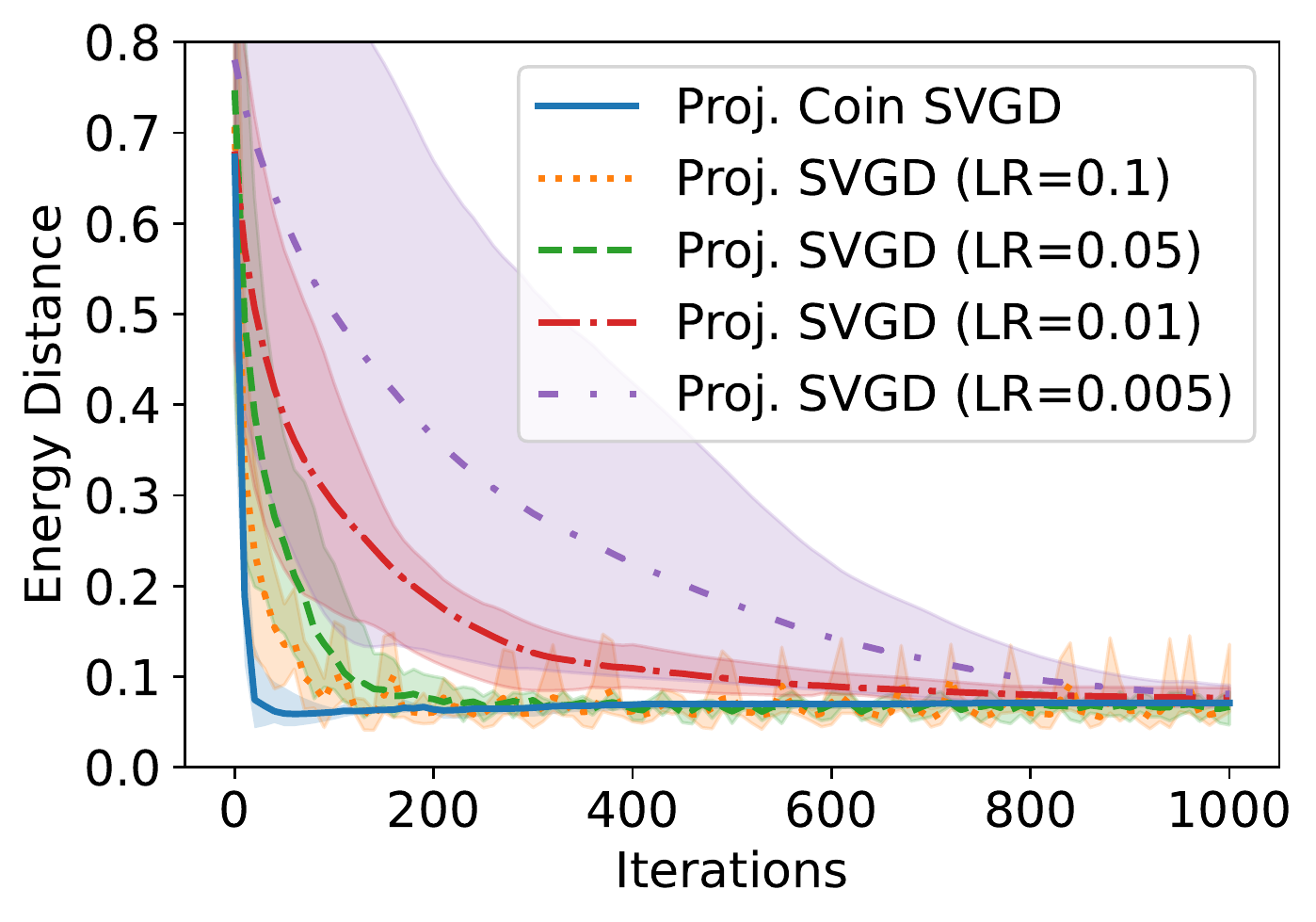}}
\caption{\textbf{Results for a two-dimensional post-selection inference target}. Energy distance versus iterations for (a) Coin MSVGD and MSVGD and (b) projected Coin SVGD and projected SVGD, for a selective distribution of the randomised Lasso in which two features are selected.}
\label{fig:2d_sel}
\vspace{-5mm}
\end{figure}

In Fig. \ref{fig:sel_coverage}, we plot the coverage of the CIs obtained using Coin MSVGD, MSVGD, SVMD, and the \texttt{norejection} MCMC algorithm in \texttt{selectiveInference}  \cite{Tibshirani2019}, as we vary the nominal coverage or the total number of samples. For MSVGD and SVMD, we use RMSProp \cite{Tieleman2012} to adapt the learning rate, with an initial learning rate of $\gamma=0.1$, following \cite{Shi2022b}. As the nominal coverage varies, Coin MSVGD achieves similar coverage to MSVGD and SVMD; and significantly higher actual coverage than \texttt{norejection} (Fig. \ref{fig:sel_coverageA}). Meanwhile, as the number of samples varies, Coin MSVGD, MSVGD, and SVMD consistently all obtain a higher coverage than the fixed nominal coverage of 90\% (Fig. \ref{fig:sel_coverageB}). This is important for small sample sizes, where the standard approach undercovers. The performance of MSVGD and SVMD is, of course, highly dependent on an appropriate choice of learning rate. In Fig. \ref{fig:sel_coverage_small_big_lr} (App. \ref{sec:additional-results-post-selection-inference}), we provide additional plots illustrating how the coverage of the CIs obtained using MSVGD and SVMD can deteriorate when the learning-rate is chosen sub-optimally. 

\begin{figure}[t]
\vspace{-7mm}
\centering
\subfloat[Actual vs Nominal Coverage \label{fig:sel_coverageA}]{\includegraphics[trim=0 0 0 0, clip, width=.47\textwidth]{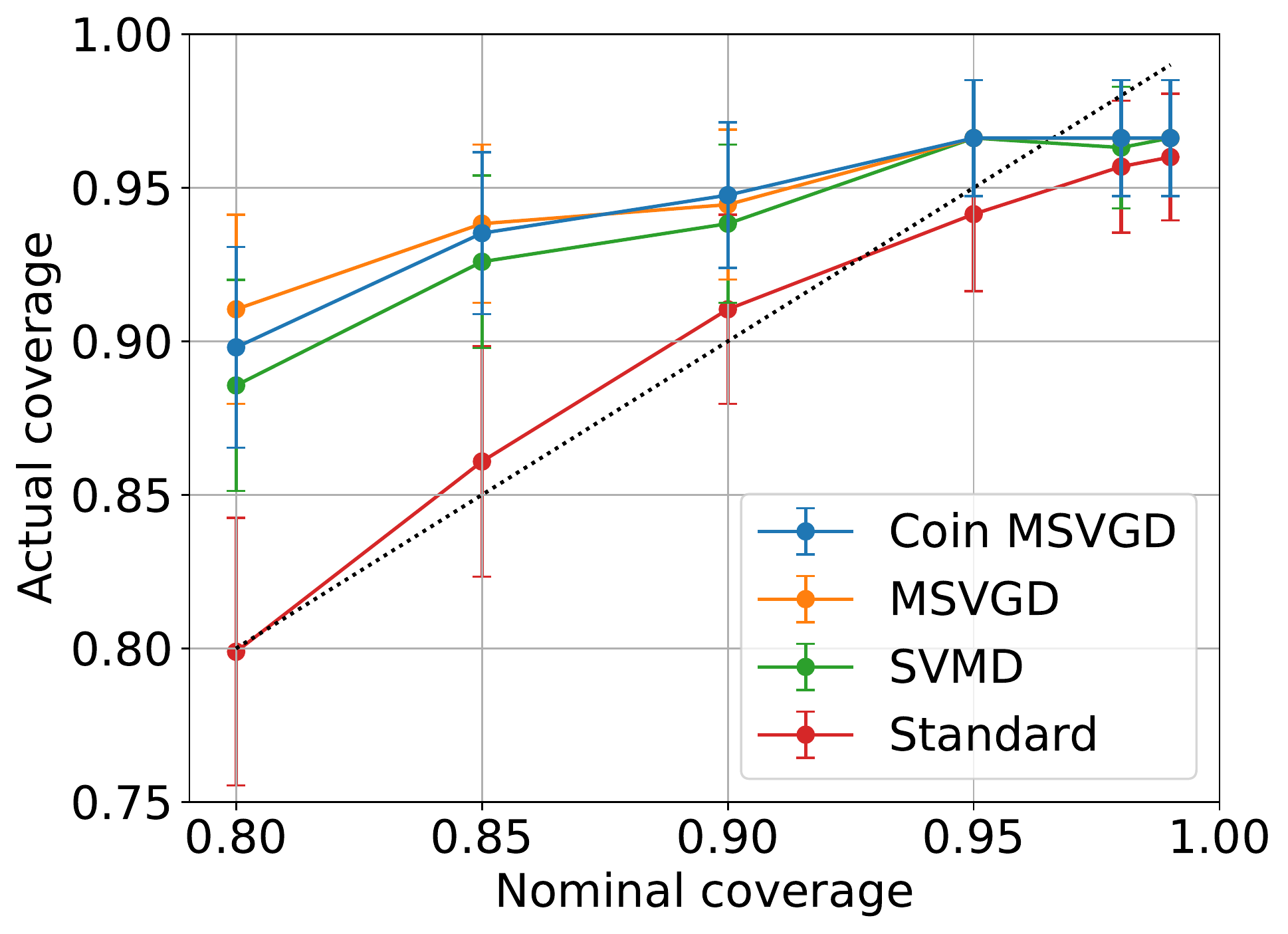}}
\subfloat[Coverage vs Number of Samples \label{fig:sel_coverageB}]{\includegraphics[trim=0 0 0 0, clip, width=.465\textwidth]{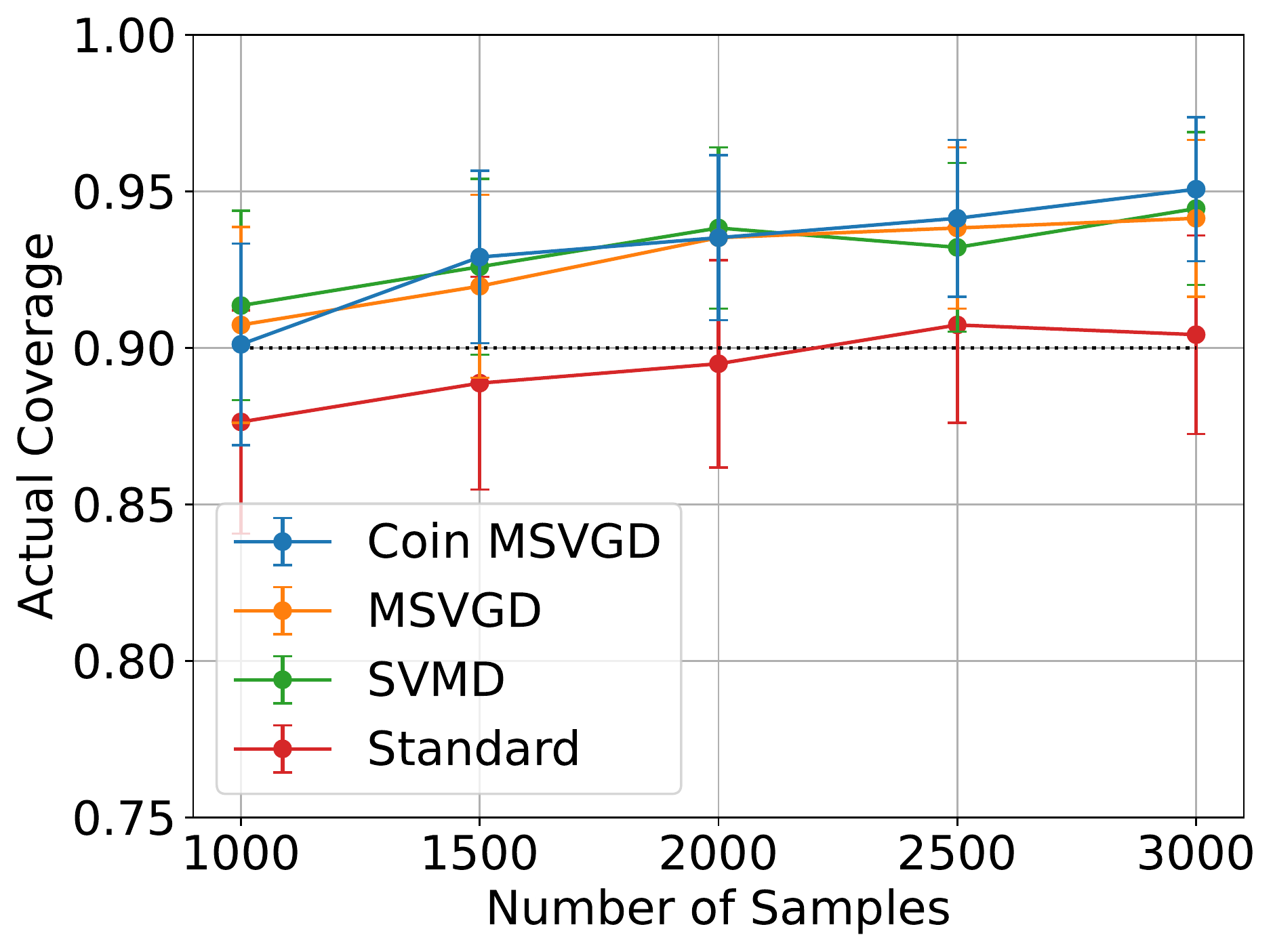}}
\caption{\textbf{Coverage results for post-selection inference.} Coverage of the post-selection confidence intervals obtained by Coin MSVGD, MSVGD, and SVMD, for 100 repeats of the simulation in \cite[][Sec. 5.3]{Sepehri2017}.}
\label{fig:sel_coverage}
\vspace{-5mm}
\end{figure}

\textbf{Real Data Example}. We next consider a post-selection inference problem involving the HIV-1 drug resistance dataset studied in \cite{Rhee2006,Bi2020}; see App. \ref{sec:appendix-confidence-intervals} for full details. The goal is to identify statistically significant mutations associated with the response to the drug Lamivudine (3TC). The randomised Lasso selects a subset of mutations, for which we compute 90\% CIs using $5000$ samples and five methods: `unadjusted' (the unadjusted CIs), `standard' (the method in \texttt{selectiveInference}), MSVGD, SVMD, and Coin MSVGD. Our results (Fig. \ref{fig:hiv}) indicate that the CIs obtained by Coin MSVGD are similar to those obtained via the standard approach, which we view as a benchmark, as well as by MSVGD and SVMD (with a well chosen learning rate). Importantly, they differ from the CIs obtained using the unadjusted approach (see, e.g., mutation P65R or P184V in Fig. \ref{fig:hiv}). Meanwhile, for sub-optimal choices of the learning rate (Fig. \ref{fig:hiv_small} and Fig. \ref{fig:hiv_big}, Fig. \ref{fig:hiv_small_big_lr} in App. \ref{sec:additional-results-post-selection-inference}), the CIs obtained using MSVGD and SVMD differ substantially from those obtained using the baseline `standard' approach and by Coin MSVGD, once more highlighting the sensitivity of these methods to the choice of an appropriate learning rate.
  \begin{figure}[b]
  \vspace{-6mm}
    \centering
    \begin{tabular}{cc}
    \adjustbox{valign=m}{\subfloat[$\gamma=0.01$.\label{fig:hiv_main}]{%
          \includegraphics[width=.69\linewidth, trim =15 10 10 5, clip]{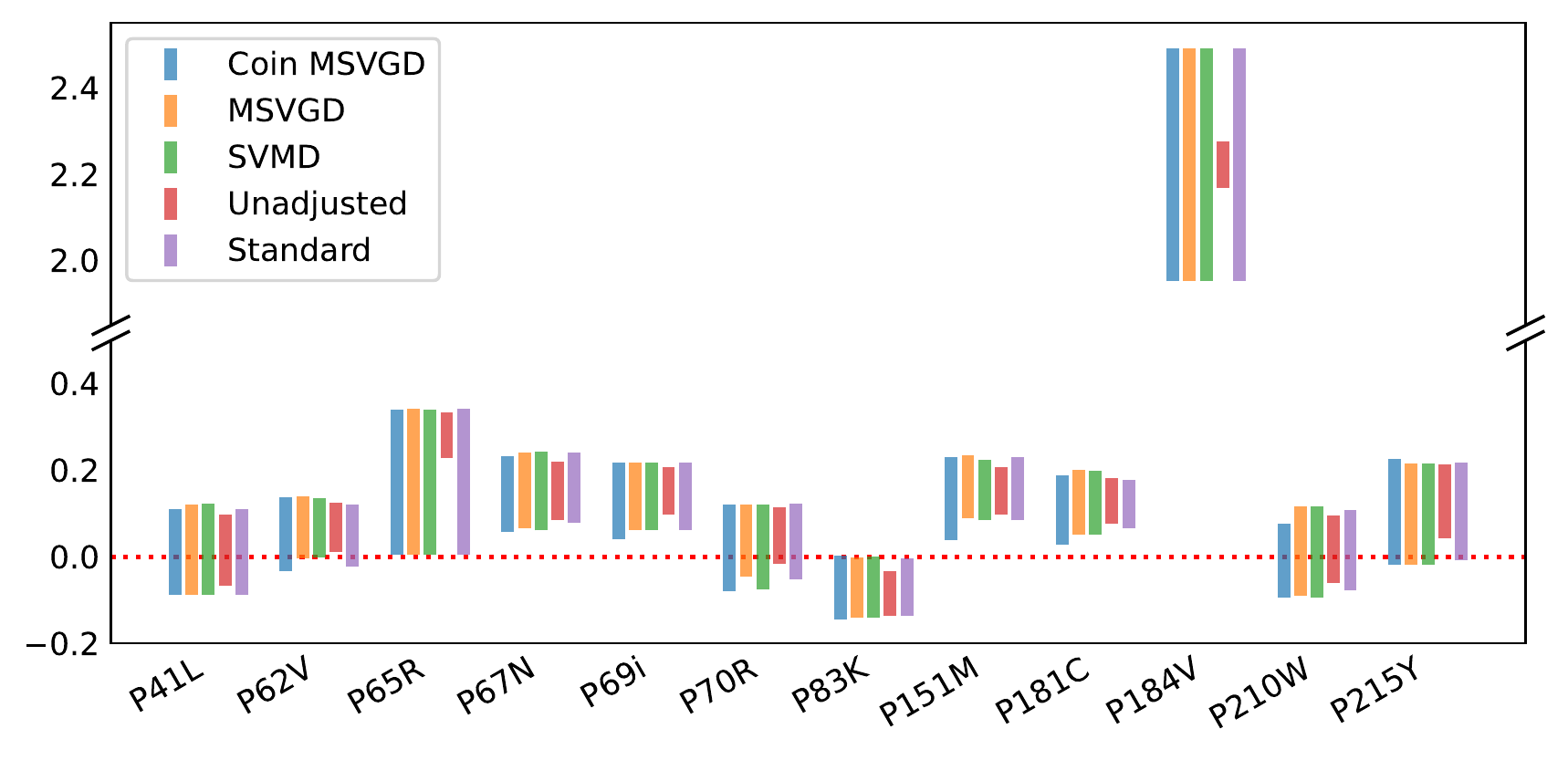}}}
    &      
    \adjustbox{valign=m}{\begin{tabular}{@{}c@{}}
    \subfloat[$\gamma=0.001$.\label{fig:hiv_small}]{%
          \includegraphics[width=.22\linewidth, trim=6 4 6 8, clip]{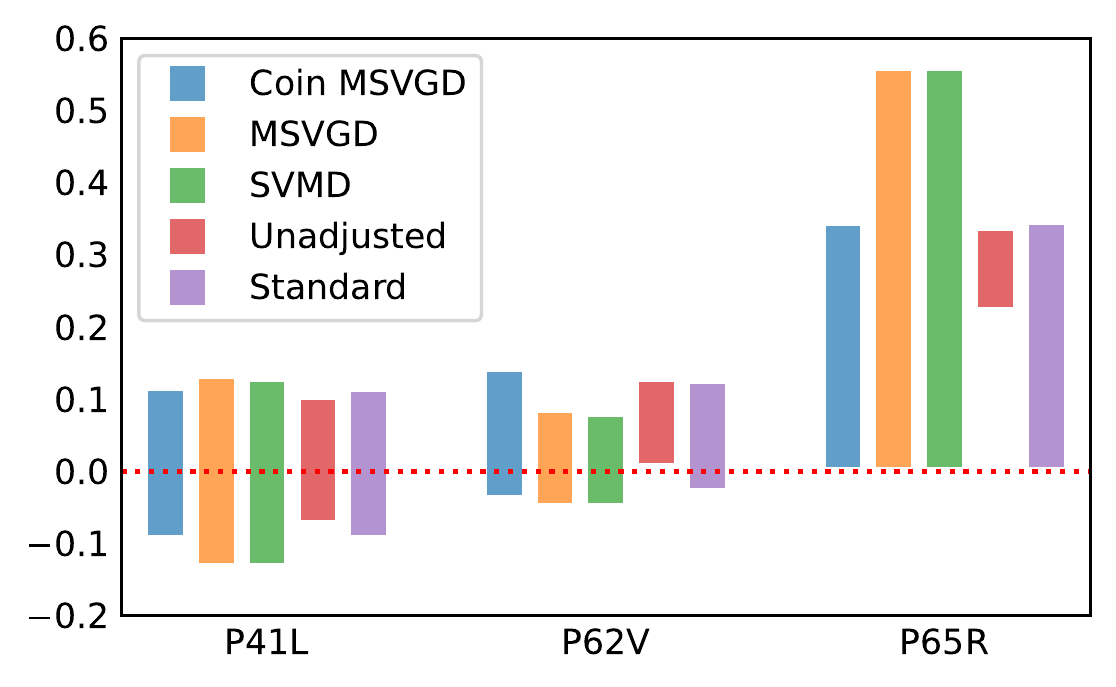}} \\
    \subfloat[$\gamma=1.0$.\label{fig:hiv_big}]{%
          \includegraphics[width=.22\linewidth, trim =6 4 6 8, clip]{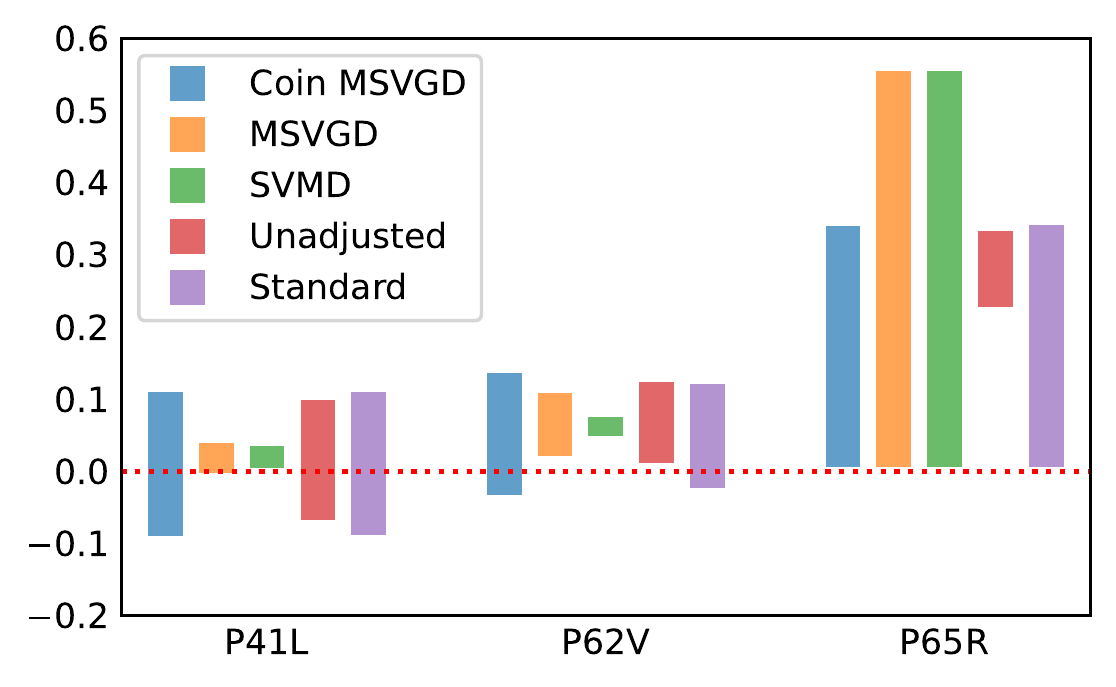}}
    \end{tabular}}
    \end{tabular}
    \caption{\textbf{Real data results for post-selection inference.} Confidence intervals for the mutations selected by the randomised Lasso as candidates for HIV-1 drug resistance. We report (a) all mutations when MSVGD and SVMD use a well chosen learning rate ($\gamma=0.01$) and (b) - (c) a subset of mutations when MSVGD and SVMD use a smaller ($\gamma=0.001$) or larger ($\gamma=1.0$) learning rate.}
    \label{fig:hiv}
    \vspace{-6mm}
  \end{figure}

\subsection{Fairness Bayesian Neural Network}
\label{sec:fairness-bnn}
Finally, following \cite{Martinez2020,Liu2021c,Liu2022a,Li2023}, we train a fairness Bayesian neural network to predict whether an individual's income is greater than \$50,000, with gender as a protected characteristic. We use the Adult Income dataset \cite{Kohavi1996}.  The dataset is of the form $\mathcal{D} = \{x_i,y_i,z_i\}_{i=1}^n$, where $x_i$ denote the feature vectors, $y_i$ denote the labels (i.e., whether the income is greater than \$50,000), and $z_i$ denote the protected attribute (i.e., the gender). We train a two-layer Bayesian neural network $\hat{y}(x ; w)$ with weights $w$ and place a standard Gaussian prior on each weight independently. The fairness constraint is given by $g(\theta) = \mathrm{Cov}_{(x,y,z)\sim\mathcal{D}}[z,\hat{y}(x;\theta)]^2 -t \leq 0$ for some user-specified $t>0$. In testing, we evaluate each method using a Calder-Verwer (CV) score \cite{Calders2010}, a standard measure of disparate impact. We run each method for $T=2000$ iterations and use $N=50$ particles. 

In Fig. \ref{fig:fairness_trade_off}, we plot the trade-off curve between test accuracy and CV score, for $t\in\{10^{-6}, 10^{-5}, 10^{-4}, 10^{-3}, 2\times 10^{-3}, 5\times 10^{-3}, 10^{-2}\}$. We compare the results for Coin MIED, MIED, and the methods in \cite{Liu2021c}, using the default implementations. In this experiment, Coin MIED is clearly preferable to the other methods, achieving a much larger Pareto front. In Fig. \ref{fig:fairness_constraint} and Fig. \ref{fig:fairness_energy}, we plot the constraint and the MIE versus training iterations for both coin MIED and MIED. Once again, coin MIED tends to outperform MIED, achieving both lower values of the constraint  and lower values of the energy. 

\begin{figure}[t]
\vspace{-3mm}
\centering
\subfloat[Trade off curves. \label{fig:fairness_trade_off}]{\includegraphics[trim=0 0 0 0, clip, width=.305\textwidth]{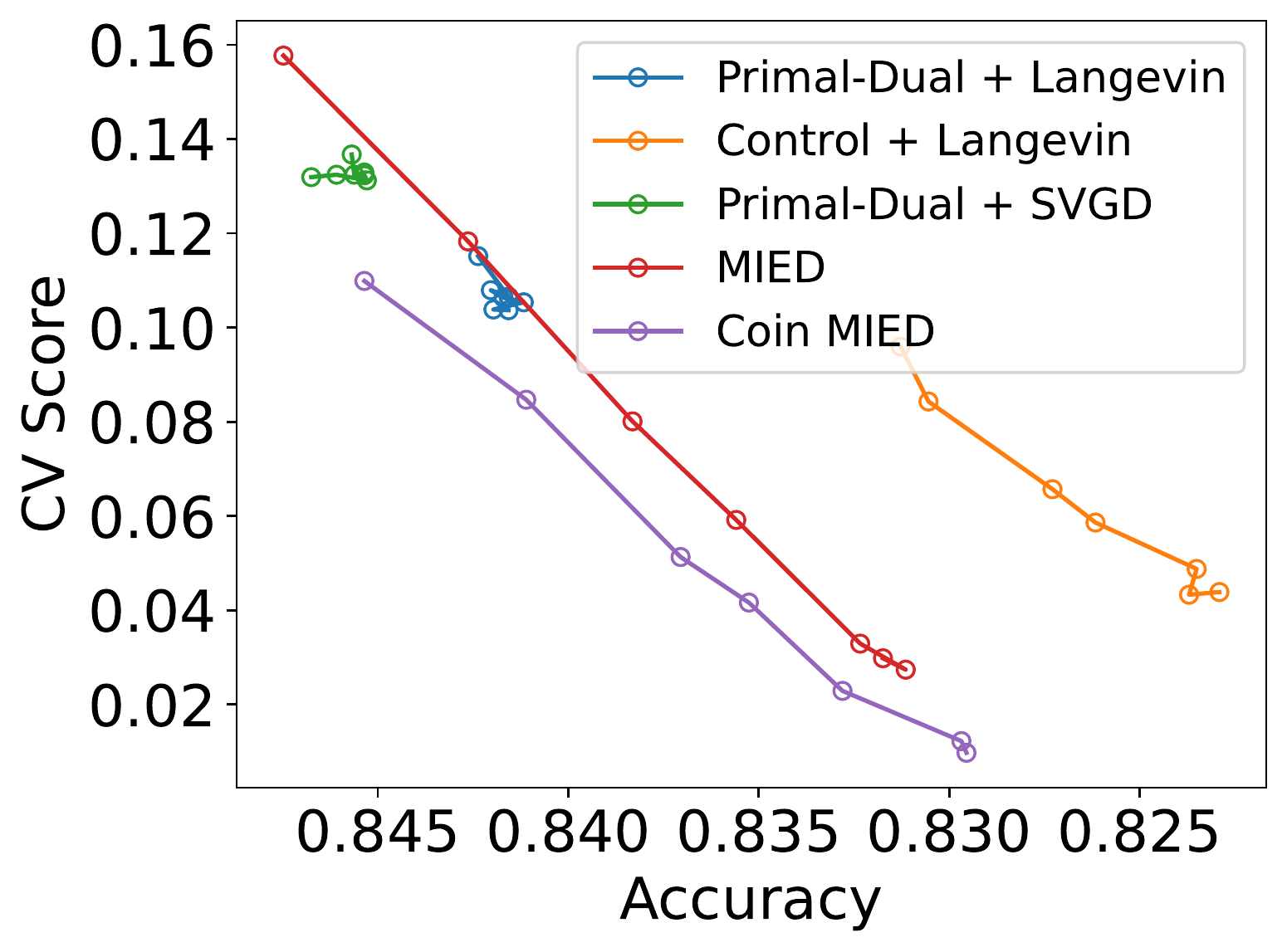}}
\subfloat[Constraint vs. Iterations. \label{fig:fairness_constraint}]{\includegraphics[trim=0 0 0 0, clip, width=.34\textwidth]{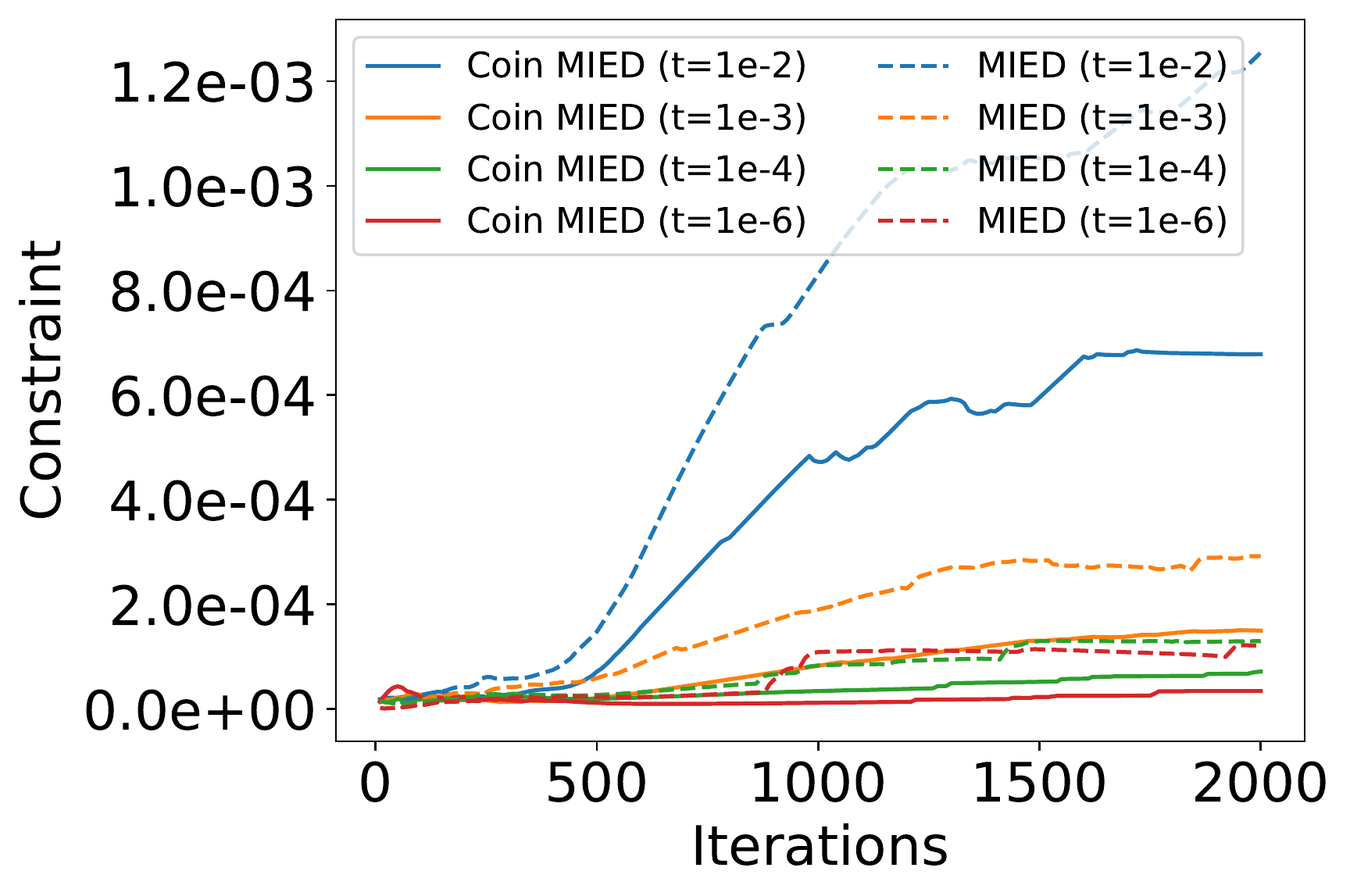}} 
 \subfloat[Riesz Energy vs. Iterations. \label{fig:fairness_energy}]{\includegraphics[trim=0 0 0 0, clip, width=.315\textwidth]{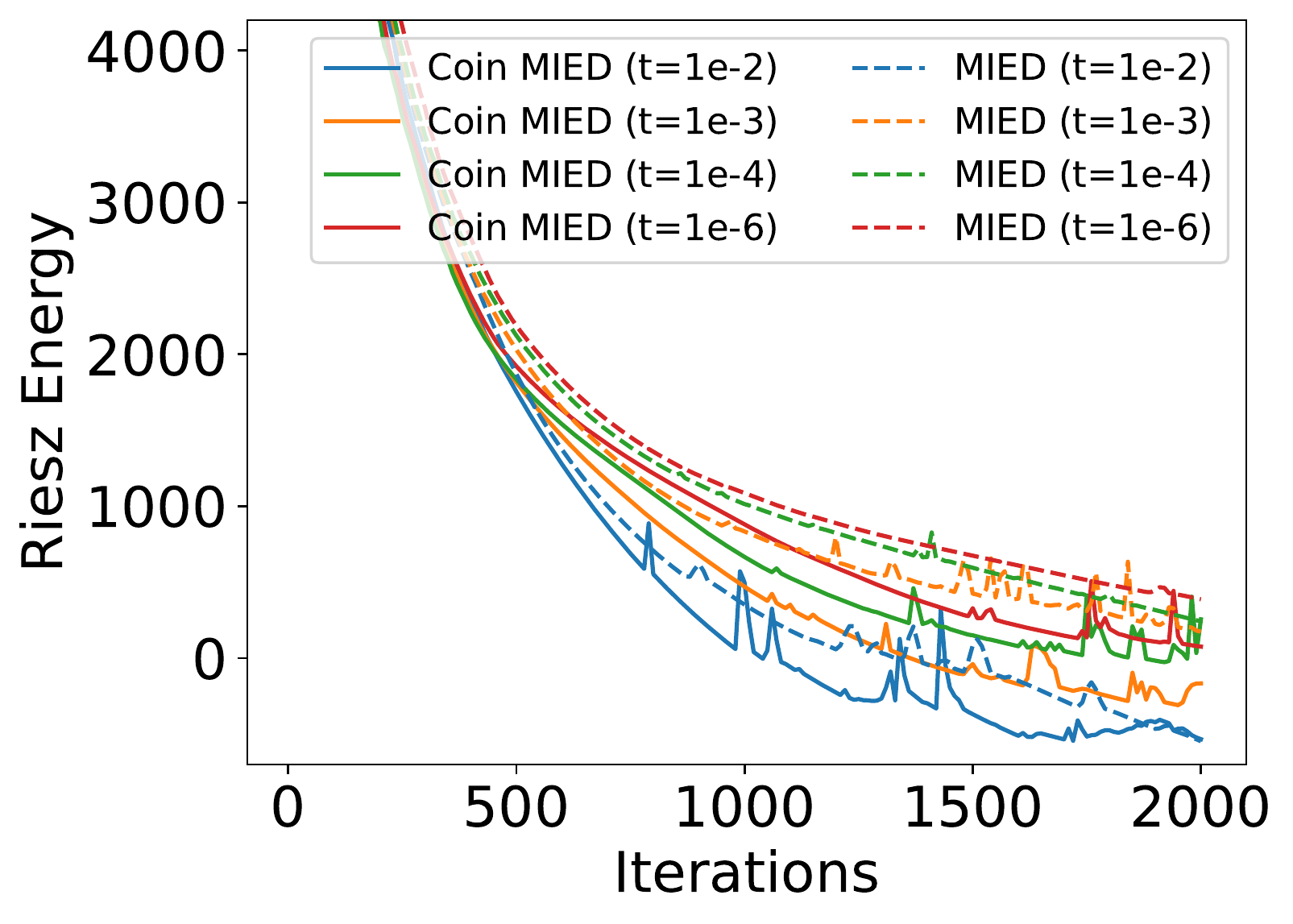}}
\caption{\textbf{Results for the fairness Bayesian neural network.} (a) Trade-off curves showing disparate impact versus test accuracy for MIED, Coin MIED, and the methods in \cite{Liu2021c}. (b) - (c) $\mathrm{Cov}_{(x,y,z)\sim\mathcal{D}}[z,\hat{y}(x;\theta)]^2$ and MIE versus the number of iterations, for different values of $t$, for MIED and Coin MIED.}
\label{fig:fairness}
\vspace{-3mm}
\end{figure}

\section{Discussion}
\textbf{Summary}. In this paper, we introduced several new particle-based algorithms for constrained sampling which are entirely learning rate free. Our first approach was based on the coin sampling framework introduced in \cite{Sharrock2023}, and a perspective of constrained sampling as a mirrored optimisation problem on the space of probability measures. Based on this perspective, we also unified several existing constrained sampling algorithms, and studied their theoretical properties in continuous time. Our second approach can be viewed as the coin sampling analogue of the recently proposed MIED algorithm \cite{Li2023}. Empirically, our algorithms achieved comparable or superior performance to other particle-based constrained sampling algorithms, with no need to tune a learning rate.

\textbf{Limitations}. We highlight three limitations. First, like any mirrored sampling algorithm, mirrored coin sampling (e.g., Coin MSVGD), necessarily depends on the availability of a mirror map that can appropriately capture the constraints of the problem at hand. Second, Coin MSVGD has a cost of $O(N^2)$ per update. Finally, even in the population limit, establishing the convergence of mirrored coin sampling under standard conditions (e.g., strong log-concavity or a mirrored log-Sobolev inequality) remains an open problem. We leave this as an interesting direction for future work. 


\section*{Acknowledgements}
LS and CN were supported by the UK Research and Innovation (UKRI) Engineering and Physical Sciences Research Council (EPSRC), grant number EP/V022636/1. CN acknowledges further support from the EPSRC, grant number EP/R01860X/1.


\bibliographystyle{plain} 
\bibliography{references}


\newpage
\begin{appendices}

\section{Calculus in the Wasserstein Space}
\label{app:wasserstein_calculus}
We recall the following from \cite[][Chapter 10]{Ambrosio2008}. Suppose $\mu\in\mathcal{P}_2(\mathbb{R}^d)$ and $\xi\in L^2(\mu)$. Let $\mathcal{F}$ be a proper and lower semi-continuous functional on $\mathcal{P}_2(\mathbb{R}^d)$. We say that $\xi\in L^2(\mu)$ belongs to the Fr\'{e}chet subdifferential of $\mathcal{F}$ at $\mu$ and write $\xi\in\partial\mathcal{F}(\mu)$ if, for any $\nu\in\mathcal{P}_2(\mathbb{R}^d)$, 
\begin{equation}
\liminf_{\nu\rightarrow\mu}\frac{\mathcal{F}(\nu) - \mathcal{F}(\mu) - \int_{\mathbb{R}^d}\langle \xi(x),\boldsymbol{t}_{\mu}^{\nu}(x) - x\rangle \mu(\mathrm{d}x)}{W_2(\nu,\mu)}\geq 0,
\end{equation}
where $\boldsymbol{t}_{\mu}^{\nu}:\mathbb{R}^d\rightarrow\mathbb{R}^d$ denotes the optimal transport map from $\mu$ to $\nu$ \cite[][Chapter 7.1]{Ambrosio2008}.
Under mild conditions \citep[][Lemma 10.4.1]{Ambrosio2008}, the subdifferential $\partial \mathcal{F}$ is single-valued, $\partial \mathcal{F}(\mu) = \{\nabla_{W_2}\mathcal{F}(\mu)\}$, and $\nabla_{W_2}\mathcal{F}(\mu)\in L^2(\mu)$ is given by  
\begin{equation}
\nabla_{W_2}\mathcal{F}(\mu) = \nabla \frac{\partial \mathcal{F}(\mu)}{\partial \mu}(x)~~~\text{for $\mu$-a.e. $x\in\mathbb{R}^d$},
\end{equation}
where $\frac{\partial\mathcal{F}(\mu)}{\partial\mu}:\mathbb{R}^d\rightarrow\mathbb{R}$ denotes the first variation of $\mathcal{F}$ at $\mu$, that is, the unique function such that
\begin{equation}
\lim_{\varepsilon\rightarrow0}\frac{1}{\varepsilon}\left( \mathcal{F}(\mu + \varepsilon \zeta) - \mathcal{F}(\mu)\right) = \int_{\mathbb{R}^d} \frac{\partial \mathcal{F}(\mu)}{\partial \mu}(x)\zeta(\mathrm{d}x),
\end{equation}
where $\zeta = \nu - \mu$, and $\nu\in\mathcal{P}_2(\mathbb{R}^d)$. 
We will refer to $\nabla_{W_2}\mathcal{F}(\mu)$ as the Wasserstein gradient of $\mathcal{F}$ at $\mu$.

\section{Examples of Mirrored Wasserstein Gradient Flows}
\label{sec:example-wasserstein-gradient-flows}
In this section, we outline several algorithms which arise as special cases of the MWGF introduced in Sec \ref{sec:mirrored_WGF}, namely
    \begin{align}
        \frac{\partial\eta_t}{\partial t} + \nabla \cdot (w_t \eta_t) &= 0~~~,~~~\mu_t = (\nabla \phi^{*})_{\#} \eta_t, \label{lagrange1_app} 
    \end{align}
where $(w_t)_{t\geq 0}$ are vector fields chosen to ensure the convergence of $(\eta_t)_{t\geq 0}$ to $\nu = (\nabla \phi)_{\#}\pi$. 

We will assume, throughout this section, that $\pi \propto e^{-V}$ and that $\nu= (\nabla \phi)_{\#}\pi \propto e^{-W}$. The Monge-Ampere equation determines the relationship between $W$ and $V$. In particular, we have \cite[e.g.,][]{Hsieh2018}
\begin{equation}
    e^{-V} = e^{-W\circ \nabla \phi} \mathrm{det} \nabla^2 \phi~~~,~~~
    e^{-W} = e^{-V\circ \nabla \phi^{*}} \mathrm{det} \nabla^2 \phi^{*}.
\end{equation}
Thus, the potential $W(y)$ of the mirrored target $\nu$ evaluated at $y=\nabla\phi(x)$  can be expressed in terms of the potential $V(x)$ of the original target $\pi$ evaluated at $x$, as
\begin{equation}
    W(y)  = V(x)  + \log \mathrm{det}\nabla^2\phi(x). \label{eq:W_to_V}
\end{equation}

\subsection{Mirrored Langevin Dynamics}
\label{sec:mirrored-langevin}
Suppose that $\mathcal{F}(\eta) = \mathrm{KL}(\eta||\nu)$, where, as elsewhere, $\nu = (\nabla \phi)_{\#}\pi$ is the dual target. In addition, let $w_t = -\nabla_{W_2}\mathcal{F}(\eta_t)$. In this case, it is straightforward to show that 
 \cite[e.g.][Chapter 10]{Ambrosio2008} $\nabla_{W_2}\mathcal{F}(\eta) = \nabla \log \left(\frac{\eta}{\nu}\right)$. Thus, for this choice of $(w_t)_{t\geq 0}$, the MWGF in \eqref{lagrange1_app} reads
\begin{align}
    \frac{\partial\eta_t}{\partial t} - \nabla \cdot \left(\nabla \log \left(\frac{\eta_t}{\nu}\right) \eta_t \right)=0,\quad 
    \mu_t = (\nabla \phi^{*})_{\#} \eta_t. \label{eq:fokker_planck}
\end{align}
Substituting $\nabla \log \nu = -\nabla W$, and using the fact that $\eta_t \nabla \log (\eta_t)  = \nabla \eta_t$, this can also be written as
\begin{align}
    \frac{\partial\eta_t}{\partial t} -\nabla \cdot (\eta_t \nabla W   + \nabla \eta_t)=0,\quad
    \mu_t = (\nabla \phi^{*})_{\#} \eta_t.  \label{eq:fokker_planck_v2}
\end{align}
This PDE is nothing more than the Fokker-Planck equation describing the evolution of the law of the overdamped Langevin SDE with respect to the mirrored target $\nu\propto e^{-W}$. In particular, suppose that $x_0\sim \mu_0$, $y_0=\nabla \phi(x_0)\sim\eta_0$, and that $(x_t)_{t\geq 0}$ and $(y_t)_{t\geq 0}$ are the solutions of
    \begin{align}
        \mathrm{d}y_t = -\nabla W(y_t)\mathrm{d}t + \sqrt{2}\mathrm{d}b_t~~~,~~~x_t = \nabla \phi^{*}(y_t), \label{eq:MLD1}
    \end{align}
where $b=(b_t)_{t\geq 0}$ is a standard Brownian motion. Then, if we let $(\mu_t)_{t\geq0}$ and $(\eta_t)_{t\geq 0}$ denote the laws of $(x_t)_{t\geq 0}$ and $(y_t)_{t\geq 0}$, it follows that $(\mu_t)_{t\geq 0}$ and $(\eta_t)_{t\geq 0}$ satisfy \eqref{eq:fokker_planck_v2}. This is precisely the mirrored Langevin dynamics (MLD) introduced in \cite{Hsieh2018}. We remark that, using  \eqref{eq:W_to_V}, the MLD in \eqref{eq:MLD1} can also be written as 
\begin{align}
    \mathrm{d}y_t = -\nabla^2\phi(x_t)^{-1}(\nabla V(x_t) + \nabla \log \det \nabla^2 \phi(x_t)) \mathrm{d}t + \sqrt{2}\mathrm{d}b_t,\quad x_t = \nabla \phi^{*}(y_t).
\end{align}

\begin{remark}
\label{remark:mirror_vs_mirrored}
It is worth distinguishing between the {mirrored Langevin dynamics} in \eqref{eq:MLD1} and the so-called {mirror Langevin diffusion} \citep[e.g.,][Equation 2]{Jiang2021}, which refers to the solution of
    \begin{align}
        \mathrm{d}y_t &= -\nabla V(x_t)\mathrm{d}t + \sqrt{2}\left[\nabla^2 \phi(x_t)\right]^{\frac{1}{2}}\mathrm{d}b_t,\quad  x_t = \nabla \phi^{*}(y_t)\label{eq:mirror_Langevin_diffusion} 
    \end{align}
These dynamics, and the corresponding Euler-Maruyama discretisation, known as the {mirror Langevin algorithm} or mirror Langevin Monte Carlo, were first studied in \cite{Zhang2020a}, and have since also been analysed in \cite{Jiang2021,Ahn2021,Li2022a}. An alternative time discretisation has also been studied in \cite{Chewi2020a}.

In this case, one can show that the law $(\mu_t)_{t\geq 0}$ of the solution of the mirror Langevin diffusion in \eqref{eq:mirror_Langevin_diffusion} satisfies the Fokker-Planck equation \citep[e.g., ][App.  A.2]{Jiang2021}
    \begin{align}
        \frac{\partial\mu_t}{\partial t} + \nabla \cdot (\mu_t v_t)&=0,\quad v_t = -[\nabla^2\phi]^{-1}\nabla \log \frac{\mu_t}{\pi}. \label{eq:mirror_langevin_diffusion} 
\intertext{Recalling that $\nabla_{W_2}\mathrm{KL}(\mu_t||\pi) = \nabla \log \frac{\mu_t}{\pi}$, one can view \eqref{eq:mirror_langevin_diffusion} as a special case of the so-called Wasserstein mirror flow (WMF), defined according to \cite[e.g.][App.  C]{Ahn2021}}
        \frac{\partial \mu_t}{\partial t} + \nabla \cdot (v_t\mu_t) &= 0,\quad v_t = -(\nabla^2 \phi)^{-1}\nabla_{W_2}\mathcal{F}(\mu_t). \label{eq:WMGF}
    \end{align}
The WMF is rather different from the MWGF in \eqref{lagrange1_app}. In particular, the WMF in \eqref{eq:WMGF} describes the evolution of $\mu_t$ according to the Wasserstein tangent vector $-(\nabla^2 \phi)^{-1}\nabla_{W_2}\mathcal{F}(\mu_t)$, where $\mathcal{F}$ is a functional minimised at $\pi$, while the MWGF in \eqref{lagrange1_app} describes the evolution of the mirrored $\eta_t:=(\nabla\phi)_{\#}\mu_t$ according to the tangent vector $-\nabla_{W_2}\mathcal{F}(\eta_t)$, where $\mathcal{F}$ is now a functional minimised at $\nu = (\nabla\phi)_{\#}\pi$.
We refer to \cite{Chewi2020a} for a detailed discussion of the mirror Langevin diffusion from this perspective; and  \cite{Deb2023} for a more general formulation of the WMF.
\end{remark}

\subsubsection{Continuous Time Results}
We now study the properties of the dynamics in \eqref{eq:MLD1} in continuous time, starting with the dissipation of $\mathrm{KL}(\mu_t|\pi)$ and $\chi^2(\mu_t|\pi)$ along the MLD. These results are a natural extension of existing results in the unconstrained case to our setting.

\begin{proposition} \label{prop1}
    The dissipation of $\mathrm{KL}(\cdot||\pi)$ along the MWGF in \eqref{eq:fokker_planck} is given by
    \begin{equation}
        \frac{\mathrm{d}\mathrm{KL}(\mu_t||\pi)}{\mathrm{d}t} = - \left|\left| [\nabla^2\phi]^{-1} \nabla \log \left(\frac{\mu_t}{\pi}\right)\right|\right|_{L^2(\mu_t)}^2. 
        \label{eq:dissapation}
    \end{equation}
\end{proposition}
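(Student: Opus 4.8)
The plan is to transfer the computation to the dual space, where the dynamics \eqref{eq:fokker_planck} is simply the Wasserstein gradient flow of $\mathrm{KL}(\cdot||\nu)$, and then pull the resulting dissipation identity back to the primal space through the mirror map. The two ingredients I would use are the invariance of the KL divergence under an invertible pushforward, and the classical dissipation formula for the Fokker--Planck flow.

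First I would observe that, since $\nabla\phi^{*}:\mathbb{R}^d\to\mathcal{X}$ is a bijection and both $\mu_t = (\nabla\phi^{*})_{\#}\eta_t$ and $\pi = (\nabla\phi^{*})_{\#}\nu$, the change-of-variables formula for the Radon--Nikodym derivative gives $\frac{\mu_t}{\pi}(x) = \frac{\eta_t}{\nu}(y)$ for $y=\nabla\phi(x)$, and hence $\mathrm{KL}(\mu_t||\pi) = \mathrm{KL}(\eta_t||\nu)$ for all $t$. It therefore suffices to differentiate $\mathrm{KL}(\eta_t||\nu)$. Since \eqref{eq:fokker_planck} reads $\partial_t\eta_t = \nabla\cdot(\eta_t\nabla\log\frac{\eta_t}{\nu})$, differentiating under the integral sign and integrating by parts (the constant $+1$ arising from $\partial_t(\eta_t\log\frac{\eta_t}{\nu})$ integrates to zero) yields the standard identity $\frac{\mathrm{d}}{\mathrm{d}t}\mathrm{KL}(\eta_t||\nu) = -\|\nabla\log\frac{\eta_t}{\nu}\|_{L^2(\eta_t)}^2$; that is, the dissipation equals the relative Fisher information in the dual space.

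It then remains to rewrite this Fisher information in primal coordinates. Differentiating the identity $\log\frac{\mu_t}{\pi}(x) = \log\frac{\eta_t}{\nu}(\nabla\phi(x))$ in $x$ and applying the chain rule with the symmetric Jacobian $\nabla_x(\nabla\phi(x)) = \nabla^2\phi(x)$ gives $\nabla_y\log\frac{\eta_t}{\nu}(y) = [\nabla^2\phi(x)]^{-1}\nabla_x\log\frac{\mu_t}{\pi}(x)$. Substituting this into the Fisher information and applying the change of variables $y=\nabla\phi(x)$, for which $\eta_t(y)\,\mathrm{d}y = \mu_t(x)\,\mathrm{d}x$, the Jacobian determinants cancel and the integral becomes exactly $\|[\nabla^2\phi]^{-1}\nabla\log\frac{\mu_t}{\pi}\|_{L^2(\mu_t)}^2$, which is the claimed right-hand side of \eqref{eq:dissapation}.

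The two change-of-variables manipulations are algebraically straightforward once the Jacobian bookkeeping is handled carefully, so the real content is mild. The main point to be careful about is justifying the formal steps: differentiation under the integral sign and the vanishing of the boundary terms in the integration by parts in the dual space. These hold under the regularity and decay conditions standard in this setting (e.g. $\lim_{\|y\|\to\infty}\eta_t(y)\nabla\log\frac{\eta_t}{\nu}(y) = 0$), which I would invoke as standing assumptions, exactly as in the analogous unconstrained dissipation results that this proposition extends.
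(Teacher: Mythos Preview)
Your proposal is correct and follows essentially the same route as the paper: establish $\mathrm{KL}(\mu_t||\pi)=\mathrm{KL}(\eta_t||\nu)$ via the pushforward relation, compute the standard dissipation $\frac{\mathrm{d}}{\mathrm{d}t}\mathrm{KL}(\eta_t||\nu)=-\|\nabla\log(\eta_t/\nu)\|_{L^2(\eta_t)}^2$ in the dual space, and then convert the Fisher information back to primal coordinates using $\log(\eta_t/\nu)(\nabla\phi(x))=\log(\mu_t/\pi)(x)$, the chain rule, and the change of variables $\eta_t=(\nabla\phi)_{\#}\mu_t$. The paper phrases the dual-space dissipation step as ``differential calculus in the Wasserstein space and the chain rule'' and cites \cite{Hsieh2018} for the KL invariance, but the substance is identical to your direct computation.
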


\begin{proof}
    Using differential calculus in the Wasserstein space and the chain rule, we have that 
    \begin{equation}
        \frac{\mathrm{d}\mathrm{KL}(\eta_t||\nu)}{\mathrm{d}t} = \left\langle w_t, \nabla \log \left(\frac{\eta_t}{\nu}\right) \right\rangle_{L^2(\eta_t)} = - \left|\left| \nabla \log \left(\frac{\eta_t}{\nu}\right)\right|\right|^2_{L^2(\eta_t)}.  \label{eq:dissapation_proof}
    \end{equation}
    
    By \cite[][Theorem 2]{Hsieh2018}, we have that $\mathrm{KL}(\eta||\nu) = \mathrm{KL}(\mu||\pi)$. To deal with the term on the RHS, first note that, using the formula for the change of variable in a probability density, we have that 
    \begin{equation}
        \eta(y) = \mu(\nabla \phi^{*}(y)) |\mathrm{det}\nabla^2\phi(\nabla\phi^{*}(y))|^{-1}~~~,~~~\nu(y) = \pi(\nabla \phi^{*}(y))|\mathrm{det}\nabla^2\phi(\nabla\phi^{*}(y))|^{-1}. \label{eq:dets}
    \end{equation}
    Thus, in particular, if $y = \nabla\phi(x)$, then we have that
    \begin{equation}
        \log \left(\frac{\eta}{\nu}\right)(y) = \log \left(\frac{\mu}{\pi}\right)(\nabla\phi^{*}(y))= \log \left(\frac{\mu}{\pi}\right)(x). \label{eq:change-of-var}
    \end{equation}
    Using this, and the fact that $\eta = (\nabla\phi)_{\#}\mu$, we can now compute
    \begin{align}
       \left|\left| \nabla \log \left(\frac{\eta_t}{\nu}\right)\right|\right|^2_{L^2(\eta_t)} &= \int \left|\left| \nabla_y \log \left(\frac{\eta_t}{\nu}\right)(y)\right|\right|^2 \mathrm{d}\eta_t(y) \label{eq22} \\
       &= \int \left|\left| \nabla_{y} \log \left(\frac{\eta_t}{\nu}\right)   (y)\right|\right|^2 \mathrm{d} ((\nabla \phi)_{\#}\mu_t)(y) \\
       &= \int \left|\left| \nabla_{y} \log \left(\frac{\eta_t}{\nu}\right)   (\nabla \phi(x))\right|\right|^2 \mathrm{d} \mu_t(x) \\
       &= \int \left|\left| \nabla_{y} \log \left(\frac{\mu_t}{\pi} \right) (x)\right|\right|^2 \mathrm{d}\mu_t(x) \\
       &= \int \left|\left| [\nabla^2 \phi(x)]^{-1} \nabla_{x} \log \left(\frac{\mu_t}{\pi} \right) (y)\right|\right|^2 \mathrm{d}\mu_t(y) \\
       &= \left|\left| [\nabla^2 \phi(y)]^{-1} \nabla_{y} \log \left(\frac{\mu_t}{\pi} \right) (y)\right|\right|_{L^2(\mu_t)}^2.  \label{eq27}
    \end{align}
    The conclusion now follows. 
\end{proof} 

\begin{proposition} \label{prop:chi_diss}
The dissipation of $\chi^2(\cdot||\pi)$ along the MWGF in \eqref{eq:fokker_planck}  is given by
\begin{equation}
    \frac{\mathrm{d}\chi^2(\mu_t||\pi)}{\mathrm{d}t}  = - 2 \left|\left|[\nabla^2\phi]^{-1}\nabla \frac{\mu_t}{\pi}\right|\right|^2_{L^2(\pi)}. \label{eq:dissapation_chi}
\end{equation}
\end{proposition}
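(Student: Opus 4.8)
The plan is to follow the exact architecture of the proof of Proposition~\ref{prop1}: first establish that the $\chi^2$ divergence, like the KL divergence, is invariant under the mirror map; then compute the dissipation of $\chi^2(\cdot||\nu)$ in the dual space using Wasserstein calculus; and finally transport the resulting gradient norm back to the primal space via the same change of variables carried out in \eqref{eq22}--\eqref{eq27}.

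First I would verify the analogue of the identity $\mathrm{KL}(\eta||\nu) = \mathrm{KL}(\mu||\pi)$ used in Proposition~\ref{prop1}, namely $\chi^2(\eta||\nu) = \chi^2(\mu||\pi)$. Writing $\chi^2(\eta||\nu) = \int (\eta/\nu)^2\,\nu\,\mathrm{d}y - 1$ and applying the change of variables $y = \nabla\phi(x)$ together with the density identities \eqref{eq:dets}, one has $\frac{\eta}{\nu}(y) = \frac{\mu}{\pi}(x)$ by \eqref{eq:change-of-var}, while the Jacobian factor $\det\nabla^2\phi(x)$ cancels against the density conversion so that $\nu(\nabla\phi(x))\det\nabla^2\phi(x) = \pi(x)$. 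Hence the two $\chi^2$ integrals coincide, which guarantees that the left-hand side of the target identity equals $\frac{\mathrm{d}}{\mathrm{d}t}\chi^2(\eta_t||\nu)$.

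Next I would compute the dissipation in the dual space. By the Wasserstein chain rule, $\frac{\mathrm{d}}{\mathrm{d}t}\chi^2(\eta_t||\nu) = \langle \nabla_{W_2}\chi^2(\eta_t||\nu),\, w_t\rangle_{L^2(\eta_t)}$, where $w_t = -\nabla\log(\eta_t/\nu)$ is the MWGF velocity in \eqref{eq:fokker_planck}. Since the first variation of $\eta\mapsto\int \eta^2/\nu\,\mathrm{d}y$ is $2\eta/\nu$, the Wasserstein gradient is $\nabla_{W_2}\chi^2(\eta_t||\nu) = 2\nabla(\eta_t/\nu)$. Writing $r = \eta_t/\nu$ and using $\nabla\log r = \nabla r / r$ together with $\eta_t = r\nu$, the inner product collapses to $-2\int \frac{|\nabla r|^2}{r}\,\eta_t\,\mathrm{d}y = -2\int |\nabla r|^2\,\nu\,\mathrm{d}y = -2\,\norm{\nabla(\eta_t/\nu)}_{L^2(\nu)}^2$. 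I expect this step to be the main obstacle: unlike the KL case, where the velocity is exactly the negative Wasserstein gradient of the dissipated functional and \eqref{eq:dissapation_proof} follows immediately, here one must evaluate the cross term between the Wasserstein gradient of $\chi^2$ and the distinct velocity field, and the key algebraic move is the conversion from the $L^2(\eta_t)$ weighting to the $L^2(\nu)$ weighting.

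Finally, I would convert this dual-space norm into the primal expression exactly as in \eqref{eq22}--\eqref{eq27}. By the chain rule and the inverse-Hessian relation $\nabla^2\phi^{*}(y) = [\nabla^2\phi(x)]^{-1}$ valid at $y = \nabla\phi(x)$, one obtains $\nabla_y(\eta_t/\nu)(y) = [\nabla^2\phi(x)]^{-1}\nabla_x(\mu_t/\pi)(x)$; pushing the integral through the map $\nabla\phi$ and again cancelling the Jacobian against the $\nu\!\to\!\pi$ conversion yields $\norm{\nabla(\eta_t/\nu)}_{L^2(\nu)}^2 = \norm{[\nabla^2\phi]^{-1}\nabla(\mu_t/\pi)}_{L^2(\pi)}^2$. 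Combining the three steps gives the claimed identity \eqref{eq:dissapation_chi}.
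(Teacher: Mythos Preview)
Your proposal is correct and follows essentially the same route as the paper: compute the dissipation of $\chi^2(\eta_t||\nu)$ in the dual space via the Wasserstein chain rule with $\nabla_{W_2}\chi^2(\eta_t||\nu)=2\nabla(\eta_t/\nu)$, use the identity $\nabla\log r = \nabla r/r$ to convert the $L^2(\eta_t)$ pairing into $-2\|\nabla(\eta_t/\nu)\|_{L^2(\nu)}^2$, and then push both the $\chi^2$ divergence and the gradient norm through the mirror map using \eqref{eq:dets}--\eqref{eq:change-of-var}. The only cosmetic difference is the order in which you present the invariance and dissipation steps.
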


\begin{proof} 
    The proof is rather similar to the previous one. In this case, using the fact that $\nabla_{W_2}\mathcal{X}^2(\eta||\nu) = 2\nabla \frac{\eta}{\nu}$, we have
    \begin{align*}
        \frac{\mathrm{d}\chi^2(\eta_t||\nu)}{\mathrm{d}t} &= \left\langle -\nabla \log \left(\frac{\eta_t}{\nu}\right), 2\nabla \frac{\eta_t}{\nu}\right\rangle_{L^2(\eta_t)} 
        = - 2 \left|\left|\nabla \frac{\eta_t}{\nu}\right|\right|^2_{L^2(\nu)}.
    \end{align*}
    We next observe, using now the fact that $\frac{\eta}{\nu}(y) = \frac{\mu}{\pi}(x)$ which follows from \eqref{eq:dets}, that 
    \begin{align}
    \chi^2(\eta_t||\nu) &= \int \left(\frac{\eta_t}{\nu}(y) - 1\right)^2\mathrm{d}\nu(y) = \int \left(\frac{\eta_t}{\nu}(y) - 1\right)^2\mathrm{d}((\nabla\phi)_{\#}\pi)(y) \label{eq33} \\
        &= \int \left(\frac{\eta_t}{\nu}(\nabla\phi(x)) - 1\right)^2\mathrm{d}\pi(x) = \int \left(\frac{\mu_t}{\pi}(x) - 1\right)^2\mathrm{d}\pi(x) = \chi^2(\mu_t||\pi). \label{eq34}
    \end{align}
    In addition, based on a very similar argument, we have that
    \begin{align}
       \left|\left| \nabla  \left(\frac{\eta_t}{\nu}\right)\right|\right|^2_{L^2(\nu)} &= \int \left|\left| \nabla_y \left(\frac{\eta_t}{\nu}\right)(y)\right|\right|^2 \mathrm{d}\nu(y)  = \int \left|\left| \nabla_{y} \left(\frac{\eta_t}{\nu}\right)   (y)\right|\right|^2 \mathrm{d} ((\nabla \phi)_{\#}\pi)(y) \label{eq35} \\
       &= \int \left|\left| \nabla_{y} \left(\frac{\eta_t}{\nu}\right)(\nabla\phi(x))\right|\right|^2 \mathrm{d} \pi(x) = \int \left|\left| \nabla_{y} \left(\frac{\mu_t}{\pi} \right) (x)\right|\right|^2 \mathrm{d}\pi(x) \\
       &= \int \left|\left| [\nabla^2 \phi(y)]^{-1} \nabla_{y} \left(\frac{\mu_t}{\pi} \right) (y)\right|\right|^2 \mathrm{d}\pi(y)= \left|\left| [\nabla^2 \phi(y)]^{-1} \nabla_{y} \left(\frac{\mu_t}{\pi} \right) (y)\right|\right|_{L^2(\pi)}^2. \label{eq38}
    \end{align}
    This completes the proof.
\end{proof}

Since the RHS of both \eqref{eq:dissapation} and \eqref{eq:dissapation_chi} are non-positive, these results show that $\mathrm{KL}(\mu_t|\pi)$ and $\chi^2(\mu_t|\pi)$ decrease along the MLD. As an immediate corollary, we have the following continuous time convergence rate for the time-average of $||[\nabla^2\phi]^{-1}\nabla \log (\frac{\mu_t}{\pi})||_{L^2(\mu_t)}^2$ and $||[\nabla^2\phi]^{-1}\nabla (\frac{\mu_t}{\pi})||_{L^2(\pi)}^2$. 

\begin{proposition}
    For any $t\geq 0$, it holds that 
    \begin{equation}
        \min_{0\leq s\leq t} \left|\left| [\nabla^2\phi]^{-1} \nabla \log \left(\frac{\mu_t}{\pi}\right)\right|\right|^2_{L^2(\mu_t)} \leq \frac{1}{t}\int_0^t \left|\left| [\nabla^2\phi]^{-1}  \nabla \log \left(\frac{\mu_s}{\pi}\right)\right|\right|^2_{L^2(\mu_s)} \mathrm{d}s \leq \frac{\mathrm{KL}(\mu_0||\pi)}{t}.
    \end{equation}
\end{proposition}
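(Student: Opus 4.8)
The plan is to treat this statement as an immediate corollary of Proposition \ref{prop1}, relying on nothing beyond integration and two elementary inequalities. First I would abbreviate the dissipation functional by writing $D(s) := \norm{[\nabla^2\phi]^{-1}\nabla \log(\mu_s/\pi)}_{L^2(\mu_s)}^2$, so that Proposition \ref{prop1} reads $\frac{\mathrm{d}}{\mathrm{d}s}\mathrm{KL}(\mu_s||\pi) = -D(s)$. Integrating this identity over $[0,t]$ and applying the fundamental theorem of calculus yields
\begin{equation*}
\int_0^t D(s)\,\mathrm{d}s = \mathrm{KL}(\mu_0||\pi) - \mathrm{KL}(\mu_t||\pi).
\end{equation*}

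Next I would obtain the right-hand inequality by discarding the non-negative term $\mathrm{KL}(\mu_t||\pi)\geq 0$, giving $\int_0^t D(s)\,\mathrm{d}s \leq \mathrm{KL}(\mu_0||\pi)$; dividing through by $t$ then produces the stated upper bound. For the left-hand inequality I would use that the minimum of $D$ over $[0,t]$ lower-bounds its pointwise values, so $\min_{0\leq s\leq t} D(s) \leq D(s)$ for every $s$; integrating this over $[0,t]$ and dividing by $t$ shows that the minimum is at most the time-average. I note in passing that the $\mu_t$ appearing inside the minimum in the statement should read $\mu_s$, since the minimisation ranges over $s\in[0,t]$.

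There is no serious obstacle here: the argument is entirely routine once Proposition \ref{prop1} is in hand. The only points requiring care are technical, namely verifying that $D$ is integrable on $[0,t]$ and that $t\mapsto\mathrm{KL}(\mu_t||\pi)$ is finite and absolutely continuous, so that the fundamental theorem of calculus applies. These are already guaranteed by the validity of the dissipation identity together with the assumed regularity of the mirror map and target. Thus the entire proof reduces to integrating a known identity and invoking the non-negativity of the KL divergence alongside the elementary min-versus-average bound. An identical argument applied to Proposition \ref{prop:chi_diss} would yield the analogous rate for $\chi^2$.
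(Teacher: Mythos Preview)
Your proposal is correct and matches the paper's own proof, which simply states that the result follows from integrating the dissipation identity \eqref{eq:dissapation} and the non-negativity of the KL divergence. Your observation that the $\mu_t$ inside the minimum should read $\mu_s$ is also correct.
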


\begin{proposition}
    For any $t\geq 0$, it holds that 
    \begin{equation}
        \min_{0\leq s\leq t} \left|\left| [\nabla^2\phi]^{-1}  \nabla \left(\frac{\mu_t}{\pi}\right)\right|\right|^2_{L^2(\pi)} \leq \frac{1}{t}\int_0^t \left|\left| [\nabla^2\phi]^{-1}  \nabla \left(\frac{\mu_s}{\pi}\right)\right|\right|^2_{L^2(\pi)} \mathrm{d}s \leq \frac{\mathrm{\chi^2}(\mu_0||\pi)}{t}.
    \end{equation}
\end{proposition}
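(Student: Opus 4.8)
The plan is to derive this bound as an immediate corollary of the $\chi^2$ dissipation identity established in Proposition \ref{prop:chi_diss}, exactly paralleling the argument that yields the preceding KL-based proposition. First I would integrate the identity \eqref{eq:dissapation_chi} over the interval $[0,t]$. By the fundamental theorem of calculus, this gives
\begin{equation}
\chi^2(\mu_t||\pi) - \chi^2(\mu_0||\pi) = -2\int_0^t \left|\left| [\nabla^2\phi]^{-1}\nabla \frac{\mu_s}{\pi}\right|\right|^2_{L^2(\pi)}\,\mathrm{d}s,
\end{equation}
so that rearranging expresses twice the time-integral of the squared norm as $\chi^2(\mu_0||\pi) - \chi^2(\mu_t||\pi)$.

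Next I would invoke the non-negativity of the $\chi^2$ divergence, $\chi^2(\mu_t||\pi)\geq 0$, to discard the terminal term. This bounds the time-integral above by $\tfrac12\chi^2(\mu_0||\pi)\leq \chi^2(\mu_0||\pi)$; dividing through by $t$ produces the rightmost inequality of the claim, in fact with the sharper constant $1/(2t)$, from which the stated bound follows a fortiori. The leftmost inequality is then the elementary observation that the minimum of a non-negative integrand over $[0,t]$ is bounded by its time-average, i.e. $\min_{0\leq s\leq t} h(s)\leq \tfrac{1}{t}\int_0^t h(s)\,\mathrm{d}s$, applied with $h(s) = \left|\left|[\nabla^2\phi]^{-1}\nabla(\mu_s/\pi)\right|\right|^2_{L^2(\pi)}$.

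I do not expect a substantive obstacle here, since the entire argument reduces to integrating a previously proven identity. The only steps requiring care are confirming that $s\mapsto \chi^2(\mu_s||\pi)$ is absolutely continuous along the flow, so that the fundamental theorem of calculus may legitimately be applied, and that the dissipation identity of Proposition \ref{prop:chi_diss} holds along the entire trajectory rather than merely at a fixed time; both are guaranteed by the same regularity assumptions that underpin that proposition. The mild slack by a factor of two relative to the KL statement is harmless, arising only because the Wasserstein gradient of $\chi^2(\eta||\nu)$ carries the constant $2$.
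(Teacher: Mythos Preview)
Your proposal is correct and follows exactly the paper's approach: integrate the dissipation identity \eqref{eq:dissapation_chi} from Proposition \ref{prop:chi_diss} over $[0,t]$ and use the non-negativity of the $\chi^2$ divergence, then bound the minimum by the time-average. Your observation about the harmless slack of a factor of two is accurate and is simply absorbed by the paper's stated constant.
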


\begin{proof}
    These results follow straightforwardly from integration of \eqref{eq:dissapation} or \eqref{eq:dissapation_chi}  and the non-negativity of the KL or $\chi^2$ divergence. 
\end{proof}

To obtain stronger convergence results, we will require additional assumptions on the target measure. In the first case, we can assume that the target satisfies what we will refer to as a \emph{mirrored logarithmic Sobolev inequality} (LSI). 

\begin{proposition} 
\label{prop5}
Assume that $\pi$ satisfies a mirrored LSI with constant $\lambda>0$. In particular, for all $\mu\in\mathcal{P}_2(\mathcal{X})$, there exists $\lambda>0$ such that
\begin{equation}
    \mathrm{KL}(\mu||\pi) \leq \frac{1}{2\lambda} \left|\left| [\nabla^2\phi]^{-1} \nabla \log \left(\frac{\mu}{\pi}\right)\right|\right|^2_{L^2(\mu)}. \label{eq:LSI}
\end{equation}
Then the KL divergence converges exponentially along the MWGF in \eqref{eq:fokker_planck}:
\begin{equation}
    \mathrm{KL}(\mu_t||\pi) \leq e^{-2\lambda t} \mathrm{KL}(\mu_0||\pi).
\end{equation}
\end{proposition}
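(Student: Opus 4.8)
The plan is to combine the dissipation identity established in Proposition \ref{prop1} with the mirrored LSI hypothesis and conclude through a standard Gr\"onwall argument. First I would recall from \eqref{eq:dissapation} that along the MWGF in \eqref{eq:fokker_planck} the KL divergence dissipates as
\begin{equation}
\frac{\mathrm{d}\mathrm{KL}(\mu_t||\pi)}{\mathrm{d}t} = - \left|\left| [\nabla^2\phi]^{-1} \nabla \log \left(\frac{\mu_t}{\pi}\right)\right|\right|_{L^2(\mu_t)}^2 .
\end{equation}
The crucial observation is that the right-hand side is exactly (minus) the quantity controlled by the mirrored LSI \eqref{eq:LSI}. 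Applying that inequality with $\mu = \mu_t$ and rearranging gives the lower bound
\begin{equation}
\left|\left| [\nabla^2\phi]^{-1} \nabla \log \left(\frac{\mu_t}{\pi}\right)\right|\right|_{L^2(\mu_t)}^2 \geq 2\lambda\,\mathrm{KL}(\mu_t||\pi) .
\end{equation}

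Substituting this lower bound into the dissipation identity yields the differential inequality $\frac{\mathrm{d}}{\mathrm{d}t}\mathrm{KL}(\mu_t||\pi) \leq -2\lambda\,\mathrm{KL}(\mu_t||\pi)$. I would then integrate this inequality, for instance by introducing $g(t) := e^{2\lambda t}\,\mathrm{KL}(\mu_t||\pi)$, computing $g'(t) = e^{2\lambda t}\big(\tfrac{\mathrm{d}}{\mathrm{d}t}\mathrm{KL}(\mu_t||\pi) + 2\lambda\,\mathrm{KL}(\mu_t||\pi)\big) \leq 0$, and deducing that $g$ is non-increasing. Since $g(0) = \mathrm{KL}(\mu_0||\pi)$, this immediately delivers the claimed exponential decay $\mathrm{KL}(\mu_t||\pi) \leq e^{-2\lambda t}\,\mathrm{KL}(\mu_0||\pi)$.

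Since the substantive work—deriving the exact dissipation identity carrying the mirror-map-weighted norm $[\nabla^2\phi]^{-1}$—is already done in Proposition \ref{prop1}, the remaining argument is essentially mechanical. The only point requiring genuine care is the justification of the Gr\"onwall step: one must ensure that $t \mapsto \mathrm{KL}(\mu_t||\pi)$ is absolutely continuous, so that the derivative exists for a.e.\ $t$ and the fundamental theorem of calculus legitimately applies to $g$. This regularity follows from the properties of the flow $(\eta_t)_{t\geq 0}$ solving \eqref{eq:fokker_planck} together with the identity $\mathrm{KL}(\mu_t||\pi) = \mathrm{KL}(\eta_t||\nu)$ invoked in the proof of Proposition \ref{prop1}; I would simply note this and avoid re-deriving the subdifferential calculus underlying the dissipation formula.
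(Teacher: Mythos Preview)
Your proposal is correct and essentially identical to the paper's own proof: substitute the mirrored LSI \eqref{eq:LSI} into the dissipation identity \eqref{eq:dissapation} from Proposition~\ref{prop1} to obtain $\frac{\mathrm{d}}{\mathrm{d}t}\mathrm{KL}(\mu_t||\pi) \leq -2\lambda\,\mathrm{KL}(\mu_t||\pi)$, then invoke Gr\"onwall. If anything, your write-up is slightly more detailed, spelling out the Gr\"onwall step via $g(t) = e^{2\lambda t}\mathrm{KL}(\mu_t||\pi)$ and flagging the absolute continuity needed to justify it, whereas the paper simply cites Gr\"onwall and moves on.
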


\begin{proof}
Substituting the mirrored LSI \eqref{eq:LSI} into \eqref{eq:dissapation}, we have that 
\begin{equation}
        \frac{\mathrm{d}\mathrm{KL}(\mu_t||\pi)}{\mathrm{d}t} = - \left|\left| [\nabla^2\phi]^{-1}  \nabla \log \left(\frac{\mu_t}{\pi}\right)\right|\right|_{L^2(\mu_t)}^2 \leq - 2\lambda \mathrm{KL}(\mu_t||\pi).
\end{equation}
The conclusion now follows straightforwardly from Gr\"{o}nwall's inequality \cite{Gronwall1919}.
\end{proof}

Alternatively, under the assumption that the target satisfies a \emph{mirrored Poincar\'{e} inequality} (PI), we can obtain exponential convergence in a number of metrics. 

\begin{proposition} 
\label{prop6}
    Assume that $\pi$ satisfies a mirrored PI with constant $\kappa>0$. In particular, for all locally Lipschitz $g\in L^2(\pi)$, there exists $\kappa>0$ such that
    \begin{equation}
    \mathrm{Var}_{\pi}\left[g \right] \leq \frac{1}{\kappa} \left|\left| [\nabla^2\phi]^{-1}  \nabla g\right|\right|^2_{L^2(\pi)} \label{eq:PI}
    \end{equation}
    Then the total variation distance, Hellinger distance, KL divergence, and the $\chi^2$ divergence all converge exponentially along the MWGF in \eqref{eq:fokker_planck}:
    \begin{equation}
        \max(2||\mu_t - \pi||^2_{\mathrm{TV}},~~H^2(\mu_t,\pi),~~\frac{\lambda}{2} W_2^2(\mu_t,\pi),~~\mathrm{KL}(\mu_t||\pi),~~\chi^2(\mu_t||\pi)) \leq e^{-2\kappa t} \chi^2(\mu_0||\pi).
    \end{equation}
\end{proposition}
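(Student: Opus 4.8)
The plan is to reduce the entire statement to the exponential decay of the $\chi^2$ divergence, which is the quantity most directly controlled by the mirrored PI, and then to dominate each of the remaining metrics by $\chi^2$ via standard comparison inequalities.

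First I would establish exponential contraction of $\chi^2(\mu_t||\pi)$. Starting from the dissipation identity of Proposition \ref{prop:chi_diss},
\begin{equation*}
\frac{\mathrm{d}\chi^2(\mu_t||\pi)}{\mathrm{d}t} = -2\left|\left|[\nabla^2\phi]^{-1}\nabla\frac{\mu_t}{\pi}\right|\right|^2_{L^2(\pi)},
\end{equation*}
I would apply the mirrored PI \eqref{eq:PI} with the test function $g = \mu_t/\pi$. Since $\int (\mu_t/\pi)\,\mathrm{d}\pi = 1$, we have $\mathrm{Var}_\pi[\mu_t/\pi] = \chi^2(\mu_t||\pi)$, so the PI yields $\kappa\,\chi^2(\mu_t||\pi) \leq ||[\nabla^2\phi]^{-1}\nabla(\mu_t/\pi)||^2_{L^2(\pi)}$. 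Substituting this lower bound into the dissipation identity gives the differential inequality $\frac{\mathrm{d}}{\mathrm{d}t}\chi^2(\mu_t||\pi) \leq -2\kappa\,\chi^2(\mu_t||\pi)$, and Gr\"{o}nwall's inequality then delivers $\chi^2(\mu_t||\pi) \leq e^{-2\kappa t}\chi^2(\mu_0||\pi)$.

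Second, I would control the remaining four quantities by $\chi^2(\mu_t||\pi)$ through a chain of standard information-theoretic inequalities: $\mathrm{KL}(\mu_t||\pi)\leq\chi^2(\mu_t||\pi)$; Pinsker's inequality $2||\mu_t-\pi||_{\mathrm{TV}}^2\leq\mathrm{KL}(\mu_t||\pi)$; the Hellinger--KL bound $H^2(\mu_t,\pi)\leq\mathrm{KL}(\mu_t||\pi)$; and, for the transport term, a mirrored Talagrand $T_2$ inequality $\frac{\lambda}{2}W_2^2(\mu_t,\pi)\leq\mathrm{KL}(\mu_t||\pi)$. Since each of these quantities is bounded by $\mathrm{KL}(\mu_t||\pi)\leq\chi^2(\mu_t||\pi)$, taking the maximum and combining with the $\chi^2$ decay from the first step yields the claimed bound.

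The main obstacle is the Wasserstein term. The stated hypothesis is only a mirrored PI, yet the bound on $\frac{\lambda}{2}W_2^2$ carries the LSI constant $\lambda$ of Proposition \ref{prop5}, so this term cannot follow from the PI alone. I would instead obtain it from the mirrored LSI of Proposition \ref{prop5} via the Otto--Villani theorem (LSI $\Rightarrow T_2$), meaning a mirrored LSI with constant $\lambda$ is implicitly in force here. A secondary, more routine point is to verify that $g=\mu_t/\pi$ is an admissible (locally Lipschitz, $L^2(\pi)$) test function for the PI along the flow, which holds under the regularity assumed throughout.
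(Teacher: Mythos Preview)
Your proposal is correct and matches the paper's proof: apply the mirrored PI (the paper takes $g=\mu_t/\pi-1$, equivalent to your $g=\mu_t/\pi$ since variance is shift-invariant and the gradients agree) to the $\chi^2$ dissipation of Proposition~\ref{prop:chi_diss}, integrate via Gr\"onwall, and then dominate the remaining metrics by $\chi^2$ through standard comparison inequalities. Your caveat about the $W_2^2$ term is well placed---the paper simply cites ``standard comparison inequalities'' for all remaining bounds, but as you observe the constant $\lambda$ indicates that a Talagrand $T_2$ inequality (hence the mirrored LSI via Otto--Villani) is being invoked implicitly.
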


\begin{proof}
The proof proceeds similarly to above. In particular, this time substituting the mirrored PI \eqref{eq:PI} with $\smash{g = \tfrac{\mu_t}{\pi} -1}$ into \eqref{eq:dissapation_chi}, we have
    \begin{align*}
        \frac{\mathrm{d}\chi^2(\mu_t||\pi)}{\mathrm{d}t} = - 2 \left|\left|[\nabla^2\phi]^{-1}  \nabla \frac{\mu_t}{\pi}\right|\right|^2_{L^2(\pi)} \leq - 2\kappa \chi^2(\mu_t||\pi). 
    \end{align*}
The result for the $\chi^2$ divergence follows via an application of Gr\"onwall's inequality. The remaining bounds follow from standard comparison inequalities \cite[e.g.,][Sec. 2.4]{Tsybakov2009}.
\end{proof}

\begin{remark}
\label{remark:mirroredLSI}
    The assumption that $\pi$ satisfies the mirrored LSI in \eqref{eq:LSI} or the mirrored PI in \eqref{eq:PI} is precisely equivalent to the assumption that the mirrored target $\nu = (\nabla \phi)_{\#}\pi$ satisfies the classical LSI or the classical PI, respectively. The LSI holds, for example, for all strongly log-concave distributions, with constant equal to the reciprocal of the strong convexity constant of the potential \cite{Bakry1985}. The class of measures satisfying the PI is even larger, including all strongly log-concave measures and, more generally, all log-concave measures \cite{Kannan1995,Bobkov1999,Chen2021b}.
\end{remark}

\begin{remark}
    The mirrored LSI in \eqref{eq:LSI} and the mirrored PI in \eqref{eq:PI} are distinct from the so-called {mirror LSI} \citep[][Assumption 2]{Jiang2021} and mirror PI \citep[][Definition 1]{Chewi2020a}. These assumptions are used in \citep[][Proposition 1]{Jiang2021} and \citep[][Theorem 1]{Chewi2020a}, respectively, to establish exponential convergence of the {mirror} Langevin diffusion \citep[e.g., ][Equation 2]{Jiang2021}, which we recall is distinct from the \emph{mirrored} Langevin diffusion in \eqref{eq:MLD1}; see Remark \ref{remark:mirror_vs_mirrored}.
\end{remark}

\begin{remark}
\label{remark:convexity}
    In Propositions \ref{prop5} - \ref{prop6} above, and also in Proposition \ref{prop:discrete-time-mld} below, our assumptions on the target $\pi$ are, in some sense, better viewed as assumptions on the mirrored target $\nu = (\nabla \phi)_{\#}\pi$. Indeed, by construction, the imposed assumptions are equivalent to the assumptions that the mirrored target $\nu$ satisfies a LSI (Proposition \ref{prop5}, \ref{prop:discrete-time-mld}), or that the mirrored target $\nu$ satisfies a PI (Proposition \ref{prop6}).
    
    In some sense, it would be preferable to establish such results under more direct assumptions on the target $\pi$ itself. One possibility is to assume that the target $\pi$ is strongly log-concave. In this case, \cite[][Proof of Theorem 3]{Hsieh2018} guarantees the existence of a mirror map such that the mirrored target $\nu = (\nabla \phi)_{\#}\pi$ is also strongly log-concave. Thus, in particular, the mirrored target satisfies a LSI and a PI or, equivalently, the target $\pi$ satisfies a mirrored LSI and a mirrored PI. Thus, Propositions \ref{prop5} - \ref{prop:discrete-time-mld} all still hold. 
\end{remark}

\subsubsection{Discrete Time Results}
In practice, of course, it is necessary to discretise the dynamics in \eqref{eq:MLD1} in time. Applying a standard Euler-Maruyama discretisation to \eqref{eq:MLD1}, or equivalently a forward-flow discretisation to \eqref{eq:fokker_planck} \citep[e.g.,][]{Wibisono2018}, one arrives at the \emph{mirrored Langevin algorithm} (MLA) from \cite{Hsieh2018}, viz
    \begin{align}
        y_{t+1} &= y_{t} - h\nabla W(y_t) + \sqrt{2h}\xi_t , \quad x_{t+1} = \nabla \phi^{*}(y_{t+1}) \label{eq:MLA1}
    \end{align}
where $h>0$ is the step size or learning rate and where $(\xi_t)_{t\in\mathbb{N}}$ are a sequence of i.i.d. standard normal random variables. 

\begin{proposition}
\label{prop:discrete-time-mld}
Suppose that $\pi$ satisfies the mirrored LSI with constant $\lambda>0$. In addition, suppose that $\pi$ is mirror-$L$-smooth.\footnote{We say that $\pi\propto e^{-V}$ is mirror-$L$-smooth if $\nu\propto e^{-W}$ is $L$-smooth. This holds, in particular, if $W=-V\circ \nabla \phi^{*} + \log \mathrm{det}\nabla^2\phi^{*}$ has Hessian bounded by $L$.} Then, if $0<\gamma\leq\frac{\lambda}{4L^2}$, the iterates $(x_t)_{t \geq 0}$ in \eqref{eq:MLA1} satisfy
\begin{equation}
    \mathrm{KL}(\mu_t||\pi) \leq e^{-\lambda \gamma t} \mathrm{KL}(\mu_0||\pi) + \frac{8\gamma d L^2}{\lambda}.
\end{equation}
\end{proposition}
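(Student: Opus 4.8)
The plan is to reduce the claim entirely to a known convergence guarantee for the unadjusted Langevin algorithm (ULA) in the dual space, and then transport the bound back to the primal space using the invariance of the KL divergence under the mirror map. First I would observe that the $y$-update in \eqref{eq:MLA1}, namely $y_{t+1} = y_t - \gamma \nabla W(y_t) + \sqrt{2\gamma}\,\xi_t$, is exactly ULA targeting the mirrored measure $\nu \propto e^{-W}$. The whole point of passing to the dual space is that this is an \emph{unconstrained} sampling problem on $\mathbb{R}^d$, to which the standard Euclidean theory applies directly.

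Next I would translate the hypotheses into statements about $\nu$. By the definition of mirror-$L$-smoothness, $W$ has Hessian bounded by $L$, i.e.\ $\nu$ is $L$-smooth; and by Remark \ref{remark:mirroredLSI} the assumed mirrored LSI for $\pi$ with constant $\lambda$ is equivalent to the classical LSI for $\nu$ with the same constant. Hence $\nu$ satisfies exactly the two structural assumptions — LSI with constant $\lambda$ and $L$-smoothness — required by the standard one-step analysis of ULA. I would then invoke that analysis (Vempala and Wibisono): under an LSI with constant $\lambda$ and $L$-smoothness, the law $\eta_t$ of the ULA iterates with step size $0 < \gamma \leq \frac{\lambda}{4L^2}$ satisfies
\begin{equation}
\mathrm{KL}(\eta_t \| \nu) \leq e^{-\lambda\gamma t}\,\mathrm{KL}(\eta_0 \| \nu) + \frac{8\gamma d L^2}{\lambda}.
\end{equation}

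The final step is the change of variables. Since $\eta_t = (\nabla\phi)_{\#} \mu_t$ and $\nu = (\nabla\phi)_{\#}\pi$, the computation in \eqref{eq:dets}--\eqref{eq:change-of-var} (equivalently \citep[Theorem 2]{Hsieh2018}) gives $\mathrm{KL}(\eta_t\|\nu) = \mathrm{KL}(\mu_t\|\pi)$ for every $t$, and in particular $\mathrm{KL}(\eta_0\|\nu) = \mathrm{KL}(\mu_0\|\pi)$. Substituting these two identities into the display yields the stated bound.

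The main obstacle is not in this reduction — which is essentially bookkeeping — but is deferred into the cited ULA result. If one wanted a self-contained proof, the hard part would be the single-step discretisation estimate: interpolating the discrete chain by a continuous Langevin flow, bounding the KL dissipation along the interpolation, and controlling the discretisation error via $L$-smoothness (which is precisely what produces the $8\gamma d L^2/\lambda$ bias term), before closing the recursion with the LSI and Gr\"onwall's inequality. The step-size restriction $\gamma \leq \frac{\lambda}{4L^2}$ is exactly what makes that recursion contractive with rate $e^{-\lambda\gamma}$. One subtlety worth flagging is that the relevant smoothness constant $L$ must control the dual potential $W$ through \eqref{eq:W_to_V}, i.e.\ through the log-determinant correction $\log\det\nabla^2\phi$; this is why mirror-$L$-smoothness is imposed as a hypothesis on $\nu$ rather than on $V$ alone.
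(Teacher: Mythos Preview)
Your proposal is correct and follows essentially the same route as the paper: translate the mirrored LSI and mirror-$L$-smoothness assumptions into the classical LSI and $L$-smoothness for $\nu$, apply the Vempala--Wibisono ULA bound to the dual iterates $(\eta_t)$, and then use the invariance of KL under the mirror map \citep[Theorem 2]{Hsieh2018} to transfer the bound back to $(\mu_t)$. The paper's proof is slightly terser but invokes exactly the same three ingredients in the same order.
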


\begin{proof}
By \cite[][Theorem 2]{Hsieh2018}, and \eqref{eq22} - \eqref{eq27} in the proof of Proposition \ref{prop1}, if $\pi$ satisfies a mirrored LSI with constant $\lambda>0$, then $\nu = (\nabla \phi)_{\#}\pi$ satisfies a LSI with the same constant $\lambda$. In addition, by definition, if $\pi$ is mirror-$L$-smooth, then $\nu$ is $L$-smooth. Thus, it follows from \cite[][Theorem 1]{Vempala2019} that 
\begin{equation}
    \mathrm{KL}(\eta_t||\nu) \leq e^{-\lambda \gamma t} \mathrm{KL}(\eta_0||\nu) + \frac{8\gamma d L^2}{\lambda}.
\end{equation}
Finally, using the invariance of the KL divergence under the mirror map $\nabla \phi$ \citep[][Theorem 2]{Hsieh2018}, the result follows.
\end{proof}

\begin{remark} 
Using a similar approach, one can also obtain bounds on $\chi^2(\mu_t||\pi)$ under a mirrored LSI \cite[][Theorem 4]{Erdogdu2021}, and for the Renyi divergence $\mathcal{R}_{q}(\mu_t||\pi)$ under a mirrored LSI \citep[][Theorem 2]{Vempala2019} or a mirrored PI \cite[][Theorem 3]{Vempala2019}.
\end{remark}

\begin{remark} 
One could also consider a forward Euler discretisation of the dynamics in \eqref{eq:fokker_planck}, which leads to a mirrored version of the so-called {deterministic Langevin Monte Carlo} algorithm \cite{Grumitt2022}.
\end{remark}

\subsection{Mirrored Stein Variational Gradient Descent}
\label{sec:mirrored-wasserstein-gradient-descent}
We now turn our attention to the MSVGD dynamics given in Sec. \ref{sec:mirrored_SVGD}, which we recall are defined by the continuity equation
        \begin{align}
        \frac{\partial\eta_t}{\partial t} - \nabla \cdot (P_{\eta_t,k_{\phi}} \nabla \log \left(\frac{\eta_t}{\nu}\right) \eta_t) &= 0,\quad \mu_t = (\nabla \phi^{*})_{\#} \eta_t,
        \label{svgd1_v2_0}
        \end{align}
where, as previously, $k_{\phi}$ is the kernel defined according to $k_{\phi}(y,y') = k(\nabla\phi^{*}(y),\nabla\phi^{*}(y'))$, given some base kernel $k:\mathcal{X}\times\mathcal{X}\rightarrow\mathbb{R}$.

\subsubsection{Continuous Time Results}
We now study the convergence properties of the continuous-time MSVGD dynamics. We note that a similar set of results can also found in \cite[][App. H]{Shi2022b}. We provide them here to highlight the analogues with the convergence results obtained for the MLD in Sec. \ref{sec:mirrored-langevin}.

\begin{proposition}
\label{prop:msvgd_dissapation}
The dissipation of the KL along the mirrored SVGD gradient flow is given by 
\begin{equation}
    \frac{\mathrm{d}\mathrm{KL}(\mu_t||\pi)}{\mathrm{d}t} = -\left|\left| S_{\mu_t,k} [\nabla^2\phi]^{-1} \nabla \log \left(\frac{\mu_t}{\pi}\right) \right|\right|^2_{\mathcal{H}_{k}^{d}} .\label{eq:SVGD_dissipation}
\end{equation}
\end{proposition}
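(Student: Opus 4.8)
The plan is to mirror the proof of Proposition~\ref{prop1}: first compute the dissipation in the dual space, then transport it back to the primal space. By the Wasserstein chain rule applied to the continuity equation \eqref{svgd1_v2_0}, whose velocity field is $w_t = -P_{\eta_t,k_{\phi}}\nabla\log(\eta_t/\nu)$, and using $\nabla_{W_2}\mathrm{KL}(\eta_t||\nu) = \nabla\log(\eta_t/\nu)$, I would first write
\begin{equation}
\frac{\mathrm{d}\mathrm{KL}(\eta_t||\nu)}{\mathrm{d}t} = \left\langle -P_{\eta_t,k_{\phi}}\nabla\log\left(\frac{\eta_t}{\nu}\right), \nabla\log\left(\frac{\eta_t}{\nu}\right)\right\rangle_{L^2(\eta_t)}.
\end{equation}

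Next I would exploit the factorisation $P_{\mu,k} = iS_{\mu,k}$ together with the adjoint relation $i^{*} = S_{\mu,k}$ to turn this $L^2(\eta_t)$ pairing into a squared RKHS norm. Concretely, for any $f\in L^2(\eta_t)$, $\langle P_{\eta_t,k_{\phi}}f, f\rangle_{L^2(\eta_t)} = \langle iS_{\eta_t,k_{\phi}}f, f\rangle_{L^2(\eta_t)} = \langle S_{\eta_t,k_{\phi}}f, S_{\eta_t,k_{\phi}}f\rangle_{\mathcal{H}}$, so that
\begin{equation}
\frac{\mathrm{d}\mathrm{KL}(\eta_t||\nu)}{\mathrm{d}t} = -\left\|S_{\eta_t,k_{\phi}}\nabla\log\left(\frac{\eta_t}{\nu}\right)\right\|_{\mathcal{H}_{k_\phi}^d}^2.
\end{equation}
This parallels both Proposition~\ref{prop1} and the unconstrained SVGD computation in \cite{Korba2020,Salim2022}.

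The remaining task is to rewrite both sides in primal quantities. For the left-hand side I invoke the invariance $\mathrm{KL}(\eta_t||\nu) = \mathrm{KL}(\mu_t||\pi)$ from \cite[Theorem 2]{Hsieh2018}, exactly as in Proposition~\ref{prop1}. For the right-hand side I would change variables via $y = \nabla\phi(x)$ and $\eta_t = (\nabla\phi)_{\#}\mu_t$, using the identity $k_{\phi}(\nabla\phi(x),\nabla\phi(x')) = k(x,x')$ together with the gradient transformation $\nabla_y\log(\eta_t/\nu)(y) = [\nabla^2\phi(x)]^{-1}\nabla_x\log(\mu_t/\pi)(x)$, which follows from \eqref{eq:change-of-var} and $\nabla^2\phi^{*}(y) = [\nabla^2\phi(x)]^{-1}$. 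Carrying these through the integral defining $S_{\eta_t,k_{\phi}}$ shows that, under the correspondence $y' = \nabla\phi(x')$, the dual-space element $S_{\eta_t,k_{\phi}}\nabla\log(\eta_t/\nu)$ coincides with $S_{\mu_t,k}[\nabla^2\phi]^{-1}\nabla\log(\mu_t/\pi)$.

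Finally, I would argue that this pointwise coincidence of functions is an isometry of RKHS norms: since $k_{\phi}(y,y') = k(\nabla\phi^{*}(y),\nabla\phi^{*}(y'))$ is the pullback of $k$ under the bijection $\nabla\phi^{*}$, the map $g\mapsto g\circ\nabla\phi^{*}$ is an isometric isomorphism from $\mathcal{H}_k$ onto $\mathcal{H}_{k_\phi}$ (applied componentwise on $\mathcal{H}^d$). Hence the two norms agree, yielding
\begin{equation}
\frac{\mathrm{d}\mathrm{KL}(\mu_t||\pi)}{\mathrm{d}t} = -\left\|S_{\mu_t,k}[\nabla^2\phi]^{-1}\nabla\log\left(\frac{\mu_t}{\pi}\right)\right\|_{\mathcal{H}_k^d}^2,
\end{equation}
as claimed. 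I expect the main obstacle to be this last step: carefully justifying that passing from $\mathcal{H}_{k_\phi}^d$ to $\mathcal{H}_k^d$ preserves the norm, i.e.\ establishing the RKHS isometry induced by the mirror map and checking that it interacts correctly with the Jacobian factor $[\nabla^2\phi]^{-1}$ produced by the gradient change of variables. The chain-rule and operator-factorisation steps are routine.
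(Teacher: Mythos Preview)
Your proposal is correct and follows essentially the same structure as the paper's proof: chain rule in the dual space, invariance of the KL under the mirror map, then a change of variables to express the RKHS norm in primal quantities. The only difference is in how you handle the last step.

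You argue via an abstract RKHS isometry: the pullback $g\mapsto g\circ\nabla\phi^{*}$ sends $\mathcal{H}_k$ isometrically onto $\mathcal{H}_{k_\phi}$, and you then verify that $S_{\eta_t,k_\phi}\nabla\log(\eta_t/\nu)$ is the image of $S_{\mu_t,k}[\nabla^2\phi]^{-1}\nabla\log(\mu_t/\pi)$ under this map. This is correct, but it is also precisely the step you flag as the main obstacle. The paper bypasses it entirely by simply expanding the squared RKHS norm as a double integral,
\[
\left\|S_{\eta_t,k_\phi}\nabla\log\tfrac{\eta_t}{\nu}\right\|_{\mathcal{H}_{k_\phi}^d}^2
=\int\!\!\int k_\phi(y,y')\,\big\langle \nabla_y\log\tfrac{\eta_t}{\nu}(y),\nabla_{y'}\log\tfrac{\eta_t}{\nu}(y')\big\rangle\,\mathrm{d}\eta_t(y)\,\mathrm{d}\eta_t(y'),
\]
and then changes variables $y=\nabla\phi(x)$, $y'=\nabla\phi(x')$ directly inside this integral using $k_\phi(y,y')=k(x,x')$ and the gradient identity you already noted. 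The result is immediately recognised as $\|S_{\mu_t,k}[\nabla^2\phi]^{-1}\nabla\log(\mu_t/\pi)\|_{\mathcal{H}_k^d}^2$, with no need to invoke or verify an isometric isomorphism of RKHSs. So the paper's route is more elementary and dissolves exactly the obstacle you anticipated; your route is slightly more conceptual but requires the extra RKHS fact.
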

\begin{proof}
    The proof is similar to the proof of Proposition \ref{prop1}. In particular, using differential calculus on the Wasserstein space and the chain rule, we have 
    \begin{align}
        \frac{\mathrm{d}\mathrm{KL}(\eta_t||\nu)}{\mathrm{d}t} = \left\langle w_t, \nabla \log \left(\frac{\eta_t}{\nu}\right)\right\rangle_{L^2(\eta_t)} = - \left|\left| S_{\eta_t,k_{\phi}}\nabla \log \left(\frac{\eta_t}{\nu}\right) \right|\right|^2_{\mathcal{H}_{k_{\phi}}^{d}}. 
    \end{align}
    By Theorem 2 in \cite{Hsieh2018}, we have that $\mathrm{KL}(\eta||\nu) = \mathrm{KL}(\mu||\pi)$. In addition, using \eqref{eq:change-of-var}, we can compute
    \begin{align}
       &\left|\left| S_{\eta_t,k_{\phi}} \nabla \log \left(\frac{\eta_t}{\nu}\right)\right|\right|^2_{\mathcal{H}_{k_{\phi}}^{d}} \label{eq55} \\
       &= \int \int k(\nabla \phi^{*}(y), \nabla\phi^{*}(y')) \left\langle \nabla_y \log \left(\frac{\eta_t}{\nu}\right)(y), \nabla_{y'} \log \left(\frac{\eta_t}{\nu}\right)(y')\right\rangle \mathrm{d}\eta_t(y)\mathrm{d}\eta_t(y')  \\
       &= \int\int k(x, x') \left\langle \nabla_{y} \log \left(\frac{\mu_t}{\pi}\right) (x), \nabla_{y'} \log \left(\frac{\mu_t}{\pi}\right) (x') \right\rangle \mathrm{d} \mu_t(x) \mathrm{d} \mu_t(x') \\
       &= \int\int k(x, x') \left\langle [\nabla^2\phi(x)]^{-1} \nabla_{x} \log \left(\frac{\mu_t}{\pi}\right)(x), [\nabla^2\phi(x')]^{-1}  \nabla_{x'} \log \left(\frac{\mu_t}{\pi}\right) (x') \right\rangle \mathrm{d} \mu_t(x) \mathrm{d} \mu_t(x') \nonumber  \\
       &= \left|\left| S_{\mu_t,k} [\nabla^2 \phi]^{-1} \nabla \log \left(\frac{\mu_t}{\pi} \right) \right|\right|_{\mathcal{H}_{k}^{d}}^2. \label{eq56}
    \end{align}
\end{proof}

\begin{remark}
    The quantity on the RHS of \eqref{eq:SVGD_dissipation} is referred to in \cite{Sun2022b} as the \emph{mirrored Stein Fisher information} of $\mu$ with respect to $\pi$, with mirror map $\nabla\phi$. This quantity represents the Stein Fisher information \cite{Duncan2019} between  $(\nabla\phi)_{\#}\mu$ and $(\nabla\phi)_{\#}\pi$, given the kernel $k_{\phi}(\cdot,\cdot) = k(\nabla\phi^{*}(\cdot), \nabla\phi^{*}(\cdot))$.
\end{remark}

Similar to before, a straightforward consequence of this result is a continuous time convergence rate for the average of the mirrored stein discrepancy along the continuous-time MSVGD dynamics.

\begin{proposition}
    For all $t\geq0$, it holds that 
    \begin{align}
    \min_{0\leq s\leq t} \left|\left| S_{\mu_s,k} [\nabla^2\phi]^{-1} \nabla \log \left(\frac{\mu_s}{\pi}\right) \right|\right|^2_{\mathcal{H}_{k}^{d}}
    &\leq \frac{1}{t}\int_0^t \left|\left| S_{\mu_s,k} [\nabla^2\phi]^{-1} \nabla \log \left(\frac{\mu_s}{\pi}\right) \right|\right|^2_{\mathcal{H}_{k}^{d}} \mathrm{d}s \leq \frac{\mathrm{KL}(\mu_0||\pi)}{t}.
    \end{align}
\end{proposition}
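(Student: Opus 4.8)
The plan is to integrate the dissipation identity from Proposition \ref{prop:msvgd_dissapation} and then exploit the non-negativity of the KL divergence, exactly as in the proof of the analogous result for the MLD. First I would invoke Proposition \ref{prop:msvgd_dissapation}, which establishes that along the continuous-time MSVGD dynamics,
\begin{equation*}
\frac{\mathrm{d}\mathrm{KL}(\mu_s||\pi)}{\mathrm{d}s} = -\left|\left| S_{\mu_s,k}[\nabla^2\phi]^{-1}\nabla\log\left(\frac{\mu_s}{\pi}\right)\right|\right|^2_{\mathcal{H}_k^d}.
\end{equation*}
Integrating both sides over $[0,t]$ and applying the fundamental theorem of calculus yields
\begin{equation*}
\int_0^t \left|\left| S_{\mu_s,k}[\nabla^2\phi]^{-1}\nabla\log\left(\frac{\mu_s}{\pi}\right)\right|\right|^2_{\mathcal{H}_k^d}\mathrm{d}s = \mathrm{KL}(\mu_0||\pi) - \mathrm{KL}(\mu_t||\pi).
\end{equation*}

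Next, since the KL divergence is non-negative, $\mathrm{KL}(\mu_t||\pi)\geq 0$, so the right-hand side is bounded above by $\mathrm{KL}(\mu_0||\pi)$. Dividing through by $t$ then gives the upper bound on the time-average, which establishes the rightmost inequality in the statement.

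For the leftmost inequality, I would simply note that the minimum of a non-negative integrable function over $[0,t]$ is no larger than its average over the same interval: for any $g\geq 0$ we have $\min_{0\leq s\leq t} g(s) \leq \frac{1}{t}\int_0^t g(s)\,\mathrm{d}s$. Applying this with $g(s) = \left|\left| S_{\mu_s,k}[\nabla^2\phi]^{-1}\nabla\log\left(\frac{\mu_s}{\pi}\right)\right|\right|^2_{\mathcal{H}_k^d}$ completes the chain of inequalities.

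The argument is essentially routine, and I would not expect any serious obstacle. The only point requiring genuine care is the justification of the dissipation identity itself, but this is already supplied by Proposition \ref{prop:msvgd_dissapation}. A secondary, purely technical concern is confirming that $s\mapsto \mathrm{KL}(\mu_s||\pi)$ is absolutely continuous so that the fundamental theorem of calculus applies and the integrand is integrable; this holds under the regularity assumptions implicit in the Wasserstein calculus recalled in Appendix \ref{app:wasserstein_calculus}, so I would relegate it to a remark rather than belabour it in the main argument.
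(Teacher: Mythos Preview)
Your proposal is correct and follows exactly the same approach as the paper, which simply states: integrate \eqref{eq:SVGD_dissipation} and use the non-negativity of the KL divergence. Your version merely spells out the details of that one-line argument.
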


\begin{proof}
    We integrate \eqref{eq:SVGD_dissipation}, and use the non-negativity of the KL divergence.
\end{proof}

To obtain stronger convergence guarantees, we must once again assume some additional properties on the target. Here, a natural choice is the \emph{mirrored Stein LSI}, the analogue of the Stein LSI introduced in \cite{Duncan2019} in the mirrored setting; see also \cite[][Definition 2]{Shi2022b}. 

\begin{proposition}
Assume that $\pi$ satisfies the mirrored Stein LSI with constant $\lambda>0$. In particular, for all $\mu\in\mathcal{P}_2(\mathcal{X})$, there exists $\lambda>0$ such that
\begin{equation}
    \mathrm{KL}(\mu||\pi) \leq \frac{1}{2\lambda} \left|\left| S_{\mu,k} [\nabla^2\phi]^{-1} \nabla \log \left(\frac{\mu}{\pi}\right) \right|\right|^2_{\mathcal{H}_{k}^{d}}. \label{eq:stein_mirror_LSI}
\end{equation}
Then the KL divergence converges exponentially along the continuous-time MSVGD dynamics:
\begin{equation}
    \mathrm{KL}(\mu_t||\pi) \leq e^{-2\lambda t}\mathrm{KL}(\mu_0||\pi).
\end{equation}
\end{proposition}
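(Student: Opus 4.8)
The plan is to mirror, almost verbatim, the proof of Proposition~\ref{prop5} for the mirrored LSI, replacing the mirrored relative Fisher information by its Stein-kernelised counterpart. Essentially all of the work has already been done in Proposition~\ref{prop:msvgd_dissapation}, which supplies the exact dissipation identity \eqref{eq:SVGD_dissipation},
\begin{equation}
\frac{\mathrm{d}\mathrm{KL}(\mu_t||\pi)}{\mathrm{d}t} = -\left|\left| S_{\mu_t,k} [\nabla^2\phi]^{-1} \nabla \log \left(\frac{\mu_t}{\pi}\right) \right|\right|^2_{\mathcal{H}_{k}^{d}}.
\end{equation}
This is the ingredient that does all the heavy lifting, since it turns the time derivative of the KL divergence into an explicit non-positive quantity that coincides, up to the factor $2\lambda$, with the right-hand side of the mirrored Stein LSI.

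First I would invoke this dissipation identity at the current time $t$. Next I would apply the mirrored Stein LSI \eqref{eq:stein_mirror_LSI} with the choice $\mu = \mu_t$, which bounds $\mathrm{KL}(\mu_t||\pi)$ above by $\tfrac{1}{2\lambda}$ times precisely the mirrored Stein Fisher information appearing on the right-hand side of the dissipation identity. Rearranging, this produces the scalar differential inequality
\begin{equation}
\frac{\mathrm{d}\mathrm{KL}(\mu_t||\pi)}{\mathrm{d}t} \leq -2\lambda\, \mathrm{KL}(\mu_t||\pi).
\end{equation}
Finally I would close the argument with Gr\"onwall's inequality applied to the non-negative scalar function $t \mapsto \mathrm{KL}(\mu_t||\pi)$, which immediately yields the claimed exponential decay $\mathrm{KL}(\mu_t||\pi) \leq e^{-2\lambda t}\mathrm{KL}(\mu_0||\pi)$.

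In terms of difficulty, there is essentially no obstacle left once Proposition~\ref{prop:msvgd_dissapation} is in hand: the proof is a direct analogue of the mirrored-LSI case, with the only conceptual difference being that the dissipation functional is now the mirrored Stein Fisher information rather than the mirrored relative Fisher information. The genuinely non-trivial step---the change-of-variables computation \eqref{eq55}--\eqref{eq56} identifying the dual-space Stein dissipation with its primal-space mirrored form---has already been carried out in establishing the dissipation identity, so this proposition is really a short corollary of that result combined with Gr\"onwall's inequality.
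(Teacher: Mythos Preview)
Your proposal is correct and matches the paper's proof essentially verbatim: the paper also substitutes the mirrored Stein LSI \eqref{eq:stein_mirror_LSI} into the dissipation identity \eqref{eq:SVGD_dissipation} to obtain the differential inequality and then applies Gr\"onwall's inequality.
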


\begin{proof}
The result is a straightforward consequence of \eqref{eq:SVGD_dissipation} and \eqref{eq:stein_mirror_LSI}. In particular, substituting \eqref{eq:stein_mirror_LSI} into \eqref{eq:SVGD_dissipation}, we have 
\begin{equation}
    \frac{\mathrm{d}\mathrm{KL}(\mu_t||\pi)}{\mathrm{d}t} = -\left|\left| S_{\mu_t,k} [\nabla^2\phi]^{-1} \nabla \log \left(\frac{\mu_t}{\pi}\right) \right|\right|^2_{\mathcal{H}_{k}^{d}} \leq 2\lambda \mathrm{KL}(\mu_t||\pi).
\end{equation}
We conclude by applying Gr\"onwall's inequality.
\end{proof}

\begin{remark}
    The assumption that $\pi$ satisfies the mirrored Stein LSI is equivalent to the assumption that $\nu = (\nabla \phi)_{\#}\pi$ satisfies the classical Stein LSI with kernel $k_{\phi}$. The Stein LSI is rather less well known than the classical LSI, and holds for a much smaller class of targets. In particular, this condition fails to hold if the kernel $k$ is too regular with respect to the target $\pi$, e.g., if $\pi$ has exponential tails and the derivatives of the kernel $k$ and the potential $V$ grow at most polynomially \cite{Duncan2019,Korba2020}. 
    It remains an open problem to establish conditions for which this conditions holds.
\end{remark}

\subsubsection{Discrete Time Results}
In discrete time, \cite{Sun2022b} recently established a descent lemma and a complexity bound for MSVGD in the population limit. For completeness, we recall one of these results here. 
\begin{theorem}[Corollary 1 in \cite{Sun2022b}]
\label{theorem-sun}
    Suppose that the following assumptions hold:
    \begin{itemize}
        \item[(1)] The mirror function $\phi:\mathcal{X}\rightarrow(-\infty,+\infty)$ is strongly $K$-convex.
        \item[(2)] There exist $B_1,B_2>0$ such that for all $x\in\mathcal{X}$, $k(x,x')\leq B_1^2$ and $\partial_{x_i}\partial_{x_j} k(x,x')|_{x=x'} \leq B_2^2$ for all $i\in[d]$, $j\in[d]$. 
        \item[(3)] There exist $L_0,L_1\geq 0$ such that $||\nabla^2W(y)||\leq L_0 + L_1 ||\nabla W(y)||$. 
        \item[(4)] There exists $p\geq 1$, $y_0\in\mathbb{R}^d$, and $s>0$ such that $\int \exp \left[ s||y-y_0||^p\right] \mathrm{d}\nu(y) <\infty.$
        \item[(5)] There exists $p\geq 1$, $C_p>0$ such that $||\nabla W(y)||\leq C_p \left(||y||^p + 1\right)$ for any $y\in\mathbb{R}^d$.
    \end{itemize}
    Then, provided $\gamma$ satisfies \citep[][Equation 21]{Sun2022b}, then  
    \begin{equation}
        \frac{1}{T}\sum_{t=0}^{T-1} \left|\left| S_{\mu_t,k} [\nabla^2\phi]^{-1} \nabla \log \left(\frac{\mu_t}{\pi}\right) \right|\right|^2_{\mathcal{H}_{k}^{d}} \leq \frac{2\mathrm{KL}(\mu_0||\pi)}{T\gamma}.
    \end{equation}
\end{theorem}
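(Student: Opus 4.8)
The plan is to reduce the statement to a standard SVGD descent result in the dual space, exploiting the equivalences already assembled in this excerpt. By the MSVGD update \eqref{eq:population_msvgd}, the dual law $(\eta_t)_{t\in\mathbb{N}_0}$ evolves exactly as population SVGD targeting $\nu$ with kernel $k_{\phi}$. Two identities then transfer everything to the dual. First, the invariance of the KL divergence under the mirror map, $\mathrm{KL}(\mu_t||\pi) = \mathrm{KL}(\eta_t||\nu)$ (Theorem 2 in \cite{Hsieh2018}). Second, the computation in \eqref{eq55}--\eqref{eq56}, which shows that the mirrored Stein Fisher information appearing in the statement equals the ordinary Stein Fisher information $\smash{\|S_{\eta_t,k_{\phi}}\nabla\log(\frac{\eta_t}{\nu})\|^2_{\mathcal{H}^d_{k_{\phi}}}}$ of $\eta_t$ relative to $\nu$. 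It therefore suffices to prove the claimed $O(1/(T\gamma))$ bound for SVGD on $\nu$ with kernel $k_{\phi}$.

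First I would check that assumptions (1)--(5) are precisely the hypotheses needed for a dual-space SVGD descent lemma. Assumption (1) ($K$-strong convexity of $\phi$) keeps $\nabla\phi^{*}$ Lipschitz and bounds $[\nabla^2\phi]^{-1}$, so that $k_{\phi}(y,y')=k(\nabla\phi^{*}(y),\nabla\phi^{*}(y'))$ inherits the boundedness in (2). Assumptions (3)--(5) are growth and integrability conditions on the dual potential $W$ and the dual target $\nu\propto e^{-W}$ (recall $W$ is linked to $V$ through \eqref{eq:W_to_V}), and are exactly those used to control the discretisation error in the SVGD convergence literature \cite{Korba2020,Salim2022,Sun2022a}.

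Next I would establish the single-step descent inequality, the discrete analogue of the continuous dissipation in Proposition \ref{prop:msvgd_dissapation}:
\begin{equation}
\mathrm{KL}(\eta_{t+1}||\nu) \leq \mathrm{KL}(\eta_t||\nu) - \frac{\gamma}{2}\left\| S_{\eta_t,k_{\phi}}\nabla\log\left(\frac{\eta_t}{\nu}\right)\right\|^2_{\mathcal{H}^d_{k_{\phi}}}.
\end{equation}
This is obtained by Taylor-expanding $\mathrm{KL}(\eta_{t+1}||\nu)$ along the pushforward map $\mathrm{id}-\gamma P_{\eta_t,k_{\phi}}\nabla\log(\frac{\eta_t}{\nu})$: the first-order term reproduces the negative Stein Fisher information exactly as in \eqref{eq:dissapation_proof}, while the second-order remainder is $O(\gamma^2)$ and is absorbed into half of the leading term precisely when $\gamma$ satisfies the step-size condition (Equation 21 in \cite{Sun2022b}). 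I would then telescope this inequality over $t=0,\dots,T-1$, use $\mathrm{KL}(\eta_T||\nu)\geq 0$, divide by $T\gamma/2$, and translate back to the primal via the two identities above to recover the stated bound.

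I expect the single-step descent lemma to be the main obstacle: bounding the second-order remainder uniformly in $t$ requires the smoothness and growth conditions (3)--(5) together with control on the growth of the iterates, and it is exactly here that the somewhat technical explicit step-size restriction enters. By contrast, transferring the hypotheses through the mirror map---verifying that $k_{\phi}$ and $W$ satisfy the dual-space versions of (2)--(5)---is routine given the strong convexity in (1).
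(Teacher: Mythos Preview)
Your proposal is correct and aligns with the approach the paper describes: the paper does not give its own proof of this theorem but simply cites \cite{Sun2022b}, and the remark immediately following the statement explains that the result is obtained by running the SVGD convergence argument of \cite{Sun2022a} in the dual space, with the assumptions imposed on the mirrored target $\nu$. Your reduction via $\mathrm{KL}(\mu_t\|\pi)=\mathrm{KL}(\eta_t\|\nu)$ and the Stein Fisher information identity \eqref{eq55}--\eqref{eq56}, followed by a dual-space descent lemma and telescoping, is exactly this strategy.
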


\begin{remark}
The results in \cite{Sun2022b} are obtained by extending the proofs used to establish convergence of SVGD in \cite{Sun2022a} to the mirrored setting. In this case, essentially all of the same arguments can be applied (in the dual space), provided suitable assumptions are imposed on the mirrored target $\nu=(\nabla\phi)_{\#}\pi$. The disadvantage of this approach is that the required assumptions are specified in terms of the mirrored target $\nu$, rather than the target $\pi$.

Following a similar approach to \cite[][Theorem 3]{Hsieh2018}, let us show how to recover this result under a standard assumption on $\pi$ itself. Suppose we replace \citep[][Assumption 4]{Sun2022b} by the assumption that $\pi$ is strongly log-concave. Under this assumption, \cite[][Proof of Theorem 3]{Hsieh2018} guarantees the existence of a mirror map such that $\nu$ is also strongly log-concave. By the Bakry-Emery criterion \cite{Bakry1985}, it follows that the $T_2$ inequality is satisfied by the dual target $\nu$, and thus so too is the $T_1$ inequality. Finally, the $T_1$ inequality is necessary and sufficient for \citep[][Assumption 4]{Sun2022b} by \citep[][Theorem 22.10]{Villani2008}. Thus, \citep[][Corollary 1]{Sun2022b} holds whenever the target $\pi$ is strongly log-concave. 
\end{remark}

\subsection{Mirrored Laplacian Adjusted Wasserstein Gradient Descent}
\label{sec:MLAWGD}
We now consider the MWGF of the chi-squared divergence; see also \cite[][Sec. 3]{Chewi2020} or \cite[][Theorem 11.2.1]{Ambrosio2008} in the unconstrained case. Following similar steps to \cite{Chewi2020}, we will then obtained a mirrored version of the Laplacian Adjusted Wasserstein Gradient Descent (LAWGD) algorithm. 

\subsubsection{MWGF of the Chi-Squared Divergence}
 Suppose that $\mathcal{F}(\eta) = \chi^2(\eta||\nu)$ with $\nabla_{W_2}\mathcal{F}(\eta) = 2 \nabla \left(\frac{\eta}{\nu}\right)$. Setting $w_t = -\nabla_{W_2}\mathcal{F}(\eta_t)$ in \eqref{lagrange1}, we are interested in the following MWGF 
\begin{align}
    \frac{\partial \eta_t}{\partial t} - 2 \nabla \cdot \left(  \nabla \left(\frac{\eta_t}{\nu} \right) \eta_t \right) =0, \quad \mu_t = (\nabla \phi)_{\#}\eta_t. \label{eq:chi_squared1}
\end{align}
Similar to before, we will begin by studying the convergence properties of this MWGF in continuous time.

\begin{proposition} \label{prop:chi_diss_v2}
The dissipation of $\mathrm{KL}(\cdot||\pi)$ along the MWGF in \eqref{eq:chi_squared1} is given by
\begin{equation}
    \frac{\mathrm{d}\mathrm{KL}(\mu_t||\pi)}{\mathrm{d}t}  = - 2 \left|\left|[\nabla^2\phi]^{-1}\nabla \frac{\mu_t}{\pi}\right|\right|^2_{L^2(\pi)}. \label{eq:dissapation_chi_v2}
\end{equation}
\end{proposition}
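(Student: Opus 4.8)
The plan is to repeat the computation behind Proposition~\ref{prop:chi_diss}, but with the roles of the two divergences interchanged: there the $\chi^2$ divergence was tracked along the KL (Langevin) flow, whereas here I track the KL divergence along the $\chi^2$ flow \eqref{eq:chi_squared1}. The starting point is the Wasserstein chain rule. Since $(\eta_t)_{t\geq 0}$ solves the continuity equation with velocity field $w_t = -2\nabla(\eta_t/\nu)$, differentiating the functional $\mathcal{G}(\eta)=\mathrm{KL}(\eta||\nu)$ along the flow and using $\nabla_{W_2}\mathrm{KL}(\eta_t||\nu) = \nabla\log(\eta_t/\nu)$ gives
\[
\frac{\mathrm{d}\mathrm{KL}(\eta_t||\nu)}{\mathrm{d}t} = \left\langle \nabla_{W_2}\mathrm{KL}(\eta_t||\nu), w_t\right\rangle_{L^2(\eta_t)} = \left\langle \nabla \log \left(\frac{\eta_t}{\nu}\right), -2\nabla \frac{\eta_t}{\nu}\right\rangle_{L^2(\eta_t)}.
\]

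Second, I would simplify this cross term. Writing $\nabla\log(\eta_t/\nu) = (\nu/\eta_t)\,\nabla(\eta_t/\nu)$, the integrand acquires a factor $(\nu/\eta_t)\,\eta_t = \nu$, so the $L^2(\eta_t)$ inner product collapses to an $L^2(\nu)$ norm:
\[
\left\langle \nabla \log \left(\frac{\eta_t}{\nu}\right), -2\nabla \frac{\eta_t}{\nu}\right\rangle_{L^2(\eta_t)} = -2\int \left|\left| \nabla \frac{\eta_t}{\nu}(y)\right|\right|^2 \mathrm{d}\nu(y) = -2\left|\left| \nabla \frac{\eta_t}{\nu}\right|\right|^2_{L^2(\nu)}.
\]
This is exactly the quantity that appeared in the proof of Proposition~\ref{prop:chi_diss}, which is the structural reason the two propositions share an identical right-hand side.

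Third, I would convert back to the primal variables. By \cite[Theorem 2]{Hsieh2018} the KL divergence is invariant under the mirror map, so $\mathrm{KL}(\eta_t||\nu) = \mathrm{KL}(\mu_t||\pi)$ and the left-hand side already matches the claim. For the right-hand side I would invoke the change-of-variable identities \eqref{eq:dets} and \eqref{eq:change-of-var} together with $\eta_t = (\nabla\phi)_{\#}\mu_t$, precisely as in \eqref{eq35}--\eqref{eq38}, to obtain $\left|\left|\nabla(\eta_t/\nu)\right|\right|^2_{L^2(\nu)} = \left|\left|[\nabla^2\phi]^{-1}\nabla(\mu_t/\pi)\right|\right|^2_{L^2(\pi)}$. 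Chaining the three displays yields \eqref{eq:dissapation_chi_v2}.

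I do not expect a genuine obstacle, since the argument is a near-verbatim repeat of Proposition~\ref{prop:chi_diss} with the two divergences swapped. The only points requiring care are the cross-term simplification and the bookkeeping of the $L^2(\eta_t)$-versus-$L^2(\nu)$ weighting, together with checking that the integration-by-parts and change-of-variable manipulations implicit in the chain rule are justified under the same regularity conditions assumed throughout this section.
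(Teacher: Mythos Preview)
Your proposal is correct and follows essentially the same approach as the paper's proof: both apply the Wasserstein chain rule to obtain the cross term $\langle \nabla\log(\eta_t/\nu), -2\nabla(\eta_t/\nu)\rangle_{L^2(\eta_t)}$, simplify it to $-2\|\nabla(\eta_t/\nu)\|^2_{L^2(\nu)}$, invoke \cite[Theorem 2]{Hsieh2018} for the KL invariance, and cite \eqref{eq35}--\eqref{eq38} for the change of variables on the right-hand side. Your write-up is in fact slightly more explicit than the paper's about why the $L^2(\eta_t)$ weighting collapses to $L^2(\nu)$, but the argument is otherwise identical.
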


\begin{proof} 
    The proof is essentially identical to the proof of Proposition \ref{prop:chi_diss}, although the roles of the Lyapunov functional and the Wasserstein gradient are now reversed. In particular, we now have
    \begin{align*}
        \frac{\mathrm{d}\mathrm{KL}(\eta_t||\nu)}{\mathrm{d}t} &= \left\langle 2\nabla \frac{\eta_t}{\nu}, -\nabla \log \left(\frac{\eta_t}{\nu}\right) \right\rangle_{L^2(\eta_t)} 
        = - 2 \left|\left|\nabla \frac{\eta_t}{\nu}\right|\right|^2_{L^2(\nu)}.
    \end{align*}
    By Theorem 2 in \cite{Hsieh2018} and \eqref{eq35} - \eqref{eq38} in the proof of Proposition \ref{prop:chi_diss}, we have that $\mathrm{KL}(\eta_t||\nu) = \mathrm{KL}(\mu_t||\pi)$ and $\left|\left| \nabla  \left(\frac{\eta_t}{\nu}\right)\right|\right|^2_{L^2(\nu)}= \left|\left| [\nabla^2\phi]^{-1} \nabla  \left(\frac{\mu_t}{\pi}\right)\right|\right|^2_{L^2(\pi)}$, from which the conclusion follows.
\end{proof}

\begin{proposition} \label{prop1_v2}
    The dissipation of $\chi^2(\cdot||\pi)$ along the MWGF in \eqref{eq:chi_squared1} is given by
    \begin{equation}
        \frac{\mathrm{d}\chi^2(\mu_t||\pi)}{\mathrm{d}t} = - 4 \left|\left| [\nabla^2\phi]^{-1} \nabla  \left(\frac{\mu_t}{\pi}\right)\right|\right|_{L^2(\mu_t)}^2. 
        \label{eq:dissapation_v2}
    \end{equation}
\end{proposition}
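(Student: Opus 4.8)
The plan is to follow the template of the proof of Proposition \ref{prop:chi_diss_v2} almost verbatim, the only new feature being that here both the Lyapunov functional and the driving field are built from the chi-squared gradient, so the two factors in the dissipation inner product coincide. First I would recall that $\nabla_{W_2}\chi^2(\eta||\nu) = 2\nabla(\frac{\eta}{\nu})$ and that the flow in \eqref{eq:chi_squared1} is generated by $w_t = -2\nabla(\frac{\eta_t}{\nu})$. Differential calculus in the Wasserstein space together with the chain rule then gives
\[
\frac{\mathrm{d}\chi^2(\eta_t||\nu)}{\mathrm{d}t} = \left\langle w_t, 2\nabla \frac{\eta_t}{\nu}\right\rangle_{L^2(\eta_t)} = \left\langle -2\nabla \frac{\eta_t}{\nu}, 2\nabla \frac{\eta_t}{\nu}\right\rangle_{L^2(\eta_t)} = -4\left|\left|\nabla \frac{\eta_t}{\nu}\right|\right|^2_{L^2(\eta_t)},
\]
which already yields the factor of $4$ in place of the $2$ obtained in Proposition \ref{prop:chi_diss}.

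Next I would transport both quantities back to the primal space. By Theorem 2 in \cite{Hsieh2018} together with the identity $\frac{\eta_t}{\nu}(y) = \frac{\mu_t}{\pi}(x)$ for $y = \nabla\phi(x)$ recorded in \eqref{eq:dets}, one has $\chi^2(\eta_t||\nu) = \chi^2(\mu_t||\pi)$, exactly as established in \eqref{eq33}--\eqref{eq34}. For the dissipation term I would repeat the change-of-variables computation used in \eqref{eq22}--\eqref{eq27}, but with $\frac{\eta_t}{\nu}$ in place of $\log \frac{\eta_t}{\nu}$: writing $y = \nabla\phi(x)$, the chain rule gives $\nabla_y \frac{\eta_t}{\nu}(y) = [\nabla^2\phi(x)]^{-1}\nabla_x \frac{\mu_t}{\pi}(x)$, and since $\eta_t = (\nabla\phi)_{\#}\mu_t$ the pushforward identity converts the integral against $\eta_t$ into one against $\mu_t$, so that
\[
\left|\left|\nabla \frac{\eta_t}{\nu}\right|\right|^2_{L^2(\eta_t)} = \left|\left|[\nabla^2\phi]^{-1}\nabla \frac{\mu_t}{\pi}\right|\right|^2_{L^2(\mu_t)}.
\]
Combining the three displays gives the claimed identity \eqref{eq:dissapation_v2}.

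The one point that demands care, and the only place this argument genuinely departs from Proposition \ref{prop:chi_diss}, is the base measure of the final norm. Because both slots of the inner product contribute $\nabla \frac{\eta_t}{\nu}$ rather than one slot carrying $\nabla \log \frac{\eta_t}{\nu}$, the extra density factor $\frac{\eta_t}{\nu}$ is not absorbed into a clean $L^2(\nu)$ norm; consequently the dissipation remains an $L^2(\eta_t)$ norm and becomes an $L^2(\mu_t)$ norm after the change of variables, rather than collapsing to $L^2(\pi)$. Tracking this factor correctly is the main (and essentially only) obstacle; every other step reduces to quoting the computations already performed in the proofs of Propositions \ref{prop1} and \ref{prop:chi_diss}.
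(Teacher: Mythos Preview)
Your proposal is correct and follows essentially the same approach as the paper: compute the dissipation in the dual space to obtain $-4\|\nabla(\eta_t/\nu)\|_{L^2(\eta_t)}^2$, then transport both the $\chi^2$ divergence and the dissipation norm back to the primal space via the change-of-variables identities established in the proofs of Propositions~\ref{prop1} and~\ref{prop:chi_diss}. Your remark that the base measure stays $L^2(\mu_t)$ rather than collapsing to $L^2(\pi)$, because no factor of $\eta_t/\nu$ is absorbed here, is a helpful clarification that the paper leaves implicit.
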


\begin{proof}
     Similar to the last result, the proof follows very closely the proof of Proposition \ref{prop1}. On this occasion, we have
    \begin{equation}
        \frac{\mathrm{d}\chi^2(\eta_t||\nu)}{\mathrm{d}t} = \left\langle w_t, 2 \nabla  \left(\frac{\eta_t}{\nu}\right) \right\rangle_{L^2(\eta_t)} = - 4 \left|\left| \nabla  \left(\frac{\eta_t}{\nu}\right)\right|\right|^2_{L^2(\eta_t)}.  \label{eq:dissapation_proof_v2}
    \end{equation}
    Arguing as in \eqref{eq33} - \eqref{eq34} and \eqref{eq35} - \eqref{eq38} in the proof of Proposition \ref{prop:chi_diss}, we have
    that $\chi^2(\eta||\nu) = \chi^2(\mu||\pi)$ and $\left|\left| \nabla \left(\frac{\eta_t}{\nu}\right)\right|\right|^2_{L^2(\eta_t)}= \left|\left| [\nabla^2\phi]^{-1}\nabla  \left(\frac{\mu_t}{\pi}\right)\right|\right|^2_{L^2(\mu_t)}$, from which the conclusion follows. 
\end{proof}

Similar to before, we can also obtain exponential convergence to the target under a mirrored functional inequality.

\begin{proposition} \label{prop:chi_squared_PI}
Assume that $\pi$ satisfies a mirrored PI with constant $\kappa>0$. In particular, for all locally Lipschitz $g\in L^2(\pi)$, there exists $\kappa>0$ such that
    \begin{equation}
    \mathrm{Var}_{\pi}\left[g \right] \leq \frac{1}{\kappa} \left|\left| [\nabla^2\phi]^{-1}  \nabla g\right|\right|^2_{L^2(\pi)} \label{eq:PI_v2}
    \end{equation}
Then the KL divergence converges exponentially along the MWGF in \eqref{eq:chi_squared1}:
\begin{equation}
    \mathrm{KL}(\mu_t||\pi) \leq \mathrm{KL}(\mu_0||\pi) e^{-2\lambda t}. \label{eq:kl_chi_squared}
\end{equation}

Furthermore, for $t\geq \frac{1}{2\lambda}$, the following convergence rate holds:
\begin{equation}
    \mathrm{KL}(\mu_t||\pi) \leq \left(\mathrm{KL}(\mu_0||\pi)\wedge 2\right)e^{-2\lambda t}. \label{eq:kl_chi_squared2}
\end{equation}
\end{proposition}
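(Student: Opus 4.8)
The plan is to mirror the proof of Proposition~\ref{prop6}, but to track the KL divergence rather than the $\chi^2$ divergence, and then to sharpen the resulting differential inequality to obtain the initialisation-independent rate \eqref{eq:kl_chi_squared2}. Throughout I identify the constant in the conclusion with the one in the hypothesis, writing $\lambda := \kappa$. First I would start from the KL-dissipation identity established in Proposition~\ref{prop:chi_diss_v2},
\begin{equation*}
\frac{\mathrm{d}\mathrm{KL}(\mu_t||\pi)}{\mathrm{d}t} = -2\left|\left|[\nabla^2\phi]^{-1}\nabla\tfrac{\mu_t}{\pi}\right|\right|^2_{L^2(\pi)},
\end{equation*}
and feed in the mirrored PI \eqref{eq:PI_v2} with the choice $g = \tfrac{\mu_t}{\pi}$. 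Since $\mathrm{Var}_\pi[\tfrac{\mu_t}{\pi}] = \chi^2(\mu_t||\pi)$ and $\nabla g = \nabla\tfrac{\mu_t}{\pi}$, the PI reads $\|[\nabla^2\phi]^{-1}\nabla\tfrac{\mu_t}{\pi}\|^2_{L^2(\pi)} \ge \lambda\,\chi^2(\mu_t||\pi)$, giving the master differential inequality
\begin{equation*}
\frac{\mathrm{d}\mathrm{KL}(\mu_t||\pi)}{\mathrm{d}t} \le -2\lambda\,\chi^2(\mu_t||\pi).
\end{equation*}

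For \eqref{eq:kl_chi_squared} I would then invoke the elementary comparison $\chi^2(\mu||\pi) \ge \mathrm{KL}(\mu||\pi)$, which turns the master inequality into $\tfrac{\mathrm{d}}{\mathrm{d}t}\mathrm{KL}(\mu_t||\pi) \le -2\lambda\,\mathrm{KL}(\mu_t||\pi)$, and conclude by Grönwall's inequality exactly as in Proposition~\ref{prop5}.

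The initialisation-independent bound \eqref{eq:kl_chi_squared2} is where the real work lies, and I expect it to be the main obstacle. Here I would replace the crude comparison by the sharp inequality $\chi^2(\mu||\pi) \ge e^{\mathrm{KL}(\mu||\pi)} - 1$, which follows from Jensen's inequality applied to the concave logarithm under the tilted probability measure $\tfrac{\mu}{\pi}\,\mathrm{d}\pi$ (equivalently, $\mathrm{KL} \le \log(1+\chi^2)$). Writing $H(t) = \mathrm{KL}(\mu_t||\pi)$, the master inequality becomes $H'(t) \le -2\lambda(e^{H(t)} - 1)$. Solving the companion ODE by separation of variables and a comparison argument yields $H(t) \le -\log\!\big(1 - (1 - e^{-H(0)})e^{-2\lambda t}\big)$. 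The final step is the elementary estimate $-\log(1-x) \le 2x$, valid for $x \in [0, e^{-1}]$; since the argument above is at most $e^{-2\lambda t}$, this range is exactly guaranteed by the hypothesis $t \ge \tfrac{1}{2\lambda}$, and it produces the uniform rate $\mathrm{KL}(\mu_t||\pi) \le 2e^{-2\lambda t}$. Taking the minimum of this with \eqref{eq:kl_chi_squared} gives $(\mathrm{KL}(\mu_0||\pi)\wedge 2)e^{-2\lambda t}$, as claimed.

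The two delicate points I anticipate are: verifying the validity window $x \le e^{-1}$ for the logarithmic estimate, which is precisely what forces the threshold $t \ge \tfrac{1}{2\lambda}$; and justifying the nonlinear ODE comparison rigorously (e.g.\ via a standard differential-inequality argument, since $h \mapsto -2\lambda(e^h-1)$ is locally Lipschitz, so the solution of the companion equality problem dominates $H$). Neither is conceptually hard, but both require care to state cleanly, whereas the transport/change-of-variable manipulations are already supplied by Proposition~\ref{prop:chi_diss_v2}.
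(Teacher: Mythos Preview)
Your proposal is correct and essentially identical to the paper's proof: both start from the dissipation identity of Proposition~\ref{prop:chi_diss_v2}, apply the mirrored PI to obtain $\tfrac{\mathrm{d}}{\mathrm{d}t}\mathrm{KL}(\mu_t\|\pi)\le -2\lambda\,\chi^2(\mu_t\|\pi)$, use $\chi^2\ge\mathrm{KL}$ plus Gr\"onwall for \eqref{eq:kl_chi_squared}, and then use $\chi^2\ge e^{\mathrm{KL}}-1$ together with the estimate $-\log(1-x)\le 2x$ for $x\le e^{-1}$ to get \eqref{eq:kl_chi_squared2}. The only cosmetic difference is that the paper linearises the nonlinear differential inequality via the substitution $u=1-e^{-\mathrm{KL}}$ (so Gr\"onwall applies directly), whereas you separate variables on the companion ODE; the resulting bound and the threshold $t\ge\tfrac{1}{2\lambda}$ are the same.
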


\begin{proof}
The proof follows closely the proof of \cite[][Theorem 1]{Chewi2020}. In particular, substituting the mirrored PI \eqref{eq:PI_v2} with $g=\tfrac{\mu_t}{\pi} - 1$ into \eqref{eq:dissapation_chi_v2}, we have that 
\begin{equation}
        \frac{\mathrm{d}\mathrm{KL}(\mu_t||\pi)}{\mathrm{d}t} = - \left|\left| [\nabla^2\phi]^{-1}  \nabla \left(\frac{\mu_t}{\pi}\right)\right|\right|_{L^2(\mu_t)}^2 \leq - 2\lambda \chi^2(\mu_t||\pi) \leq -2\lambda \mathrm{KL}(\mu_t||\pi), \label{eq84}
\end{equation}
where in the final inequality we have used the fact that $\mathrm{KL}(\cdot||\pi)\leq \chi^2(\cdot||\pi)$ (e.g., \cite[][Sec. 2.4]{Tsybakov2009}). The first bound now follows straightforwardly from Gr\"{o}nwall's inequality. 

For the second bound, we will instead use the inequality $\mathrm{KL}(\cdot||\pi) \leq \log \left(1 + \chi^2(\cdot||\pi)\right)$ (e.g., \cite[][Sec. 2.4]{Tsybakov2009}), which implies that $e^{\mathrm{KL}(\cdot||\pi)} - 1\leq \chi^2(\cdot||\pi)$. Using this in \eqref{eq84}, it follows that 
\begin{equation}
    \frac{\mathrm{d}}{\mathrm{d}t} \left( 1 - e^{-\mathrm{KL}(\mu_t||\pi)}\right) \leq -2\lambda \left( 1 - e^{-\mathrm{KL}(\mu_t||\pi)}\right).
\end{equation}
Applying Gr\"onwall's lemma, we then have that
\begin{align}
    \left( 1 - e^{-\mathrm{KL}(\mu_t||\pi)}\right) \leq e^{-2\lambda t} \left( 1 - e^{-\mathrm{KL}(\mu_t||\pi)}\right) \leq e^{-2\lambda t},
\end{align}
where in the final line we have used the elementary inequality $g(x) = 1-e^{-x}\leq 1$, for $x \geq 0$. Finally, observe that if $t\geq \frac{1}{2\lambda}$, then $\left( 1 - e^{-\mathrm{KL}(\mu_t||\pi)}\right)\leq e^{-2\lambda t}\leq e^{-1}$. Combining this with the fact that $x\leq 2g(x)$ whenever $g(x)\leq e^{-1}$, we have that
\begin{equation}
\mathrm{KL}(\mu_t||\pi) \leq 2\left( 1 - e^{-\mathrm{KL}(\mu_t||\pi)}\right)\leq 2e^{-2\lambda t}
\end{equation}
for $t\geq \frac{1}{2\lambda}$. Alongside the bound in \eqref{eq:kl_chi_squared}, this completes the proof of \eqref{eq:kl_chi_squared2}.
\end{proof}

\subsubsection{Kernelised MWGF of the Chi-Squared Divergence}
In \cite{Chewi2020}, the authors provide an interpretation of SVGD as a kernelised WGF of the $\chi^2$ divergence. Similarly, we can interpret MSVGD as a kernelised MWGF of the $\chi^2$ divergence; see also \cite[][App. H]{Shi2022b}. In particular, observe that \cite[][Sec. 2.3]{Chewi2020}
\begin{align}
    \mathcal{P}_{\eta_t,k_{\phi}} \nabla \log \left(\frac{\eta_t}{\nu}\right) &= \int k(x,\cdot) \nabla \log \left(\frac{\eta_t}{\nu}(x)\right)\mathrm{d}\eta_t(x) \\
    &= \int k(x,\cdot) \nabla  \left(\frac{\eta_t}{\nu}(x)\right)\mathrm{d}\nu(x) = \mathcal{P}_{\nu,k_{\phi}} \nabla \left(\frac{\eta_t}{\nu}\right).
\end{align}
It follows that the continuous-time MSVGD dynamics in \eqref{svgd1} or \eqref{svgd1_v2_0} can equivalently be written as 
\begin{align}
        \frac{\partial\eta_t}{\partial t} - \nabla \cdot \left(P_{\nu,k_{\phi}} \nabla  \left(\frac{\eta_t}{\nu}\right) \eta_t\right) = 0,\quad \mu_t = (\nabla \phi^{*})_{\#} \eta_t.  \label{svgd1_v2}
        \end{align}
Comparing \eqref{svgd1_v2} and \eqref{eq:chi_squared1}, it is clear that, up to a factor of two, MSVGD can be interpreted as the  MWGF obtained by replacing the gradient of chi-squared divergence, $\nabla \left(\frac{\eta_t}{\nu}\right)$, by $P_{\nu,k_{\phi}} \nabla \left(\frac{\eta_t}{\nu}\right)$. In \cite[][App. H]{Shi2022b}, the authors obtain continuous-time convergence results for this scheme, under a mirrored Stein PI. Here, we proceed differently, based on the approach in \cite{Chewi2020}.

\subsubsection{Mirrored Laplacian Adjusted Wasserstein Gradient Descent (MLAWGD)}
Following \cite{Chewi2020}, suppose now that we replace $P_{\nu,k_{\phi}}\nabla \left(\frac{\eta_t}{\nu}\right)$ by the vector field $\nabla P_{\nu,k_{\phi}} \left(\frac{\eta_t}{\nu}\right)$. The new dynamics, which we refer to as the mirrored Laplacian adjusted Wasserstein gradient flow (MLAWGF), are then given by
    \begin{equation}
    \frac{\partial \eta_t}{\partial t} - \nabla \cdot \left(\nabla P_{\nu,k_{\phi}} \left(\frac{\eta_t}{\pi}\right)\eta_t\right) = 0,\quad  \mu_t = (\nabla\phi^{*})_{\#}\eta_t. \label{eq:MLAWGD}
    \end{equation}
The dynamics in \eqref{eq:MLAWGD} can essentially be viewed as a change of measure of the standard LAWGD dynamics in \cite[][Sec. 4]{Chewi2020}, designed to converge to the dual target $\nu = (\nabla\phi)_{\#}\pi$. 

Suppose, in addition, that the kernel $k_{\phi}$ were chosen such that $P_{\nu,k_{\phi}} = -\mathcal{L}_{\nu}^{-1}$, where $\mathcal{L}_{\nu,k_{\phi}}$ denotes the infinitesimal generator of the Langevin SDE with stationary distribution $\nu$. Thus, in particular,   
\begin{align}
    \mathcal{L}_{\nu}f_{\phi}(y) &= - \left\langle \nabla W(y), \nabla f_{\phi}(y) \right\rangle + \Delta f_{\phi}(y). \label{eq:langevin_generator}
\end{align}
where $f_{\phi}:\mathbb{R}^d\rightarrow\mathbb{R}$ denotes the function $f_{\phi}(\cdot) = f(\nabla\phi^{*}(\cdot))$, given some base function $f:\mathcal{X}\rightarrow\mathbb{R}$. Letting $y=\nabla\phi(x)$, using \eqref{eq:W_to_V}, and then the chain rule, we can rewrite this as  
\begin{align}
    \mathcal{L}_v f(x) &= - \langle \left[\nabla^2\phi(x)\right]^{-1}\left[\nabla V(x) + \nabla \log \mathrm{det}\nabla^2\phi(x)\right], \left[\nabla^2\phi(x)\right]^{-1}\nabla f(x) \rangle  \\
    &~~~~~+ \mathrm{Tr}\left[-[\nabla^2\phi(x)]^{-1}\nabla^3\phi(x)[\nabla^2\phi(x)]^{-2} \nabla f(x) + [\nabla^2\phi(x)]^{-2}\nabla^2f(x)\right].
\end{align}

We will denote this kernel $k_{\mathcal{L}_{\nu}}$. In practice, computing $k_{\mathcal{L}_{\nu}}$ requires computing the spectral decomposition of the operator $\mathcal{L}_{\nu}$, which is challenging even in the moderate dimensions. Nonetheless, following \cite[][Theorem 4]{Chewi2020}, for this choice of kernel we can show that the MLAWGF converges exponentially fast to the target distribution, at a rate independent of the Poincar\'{e} constant. 

\begin{proposition}
    Assume that $\pi$ satisfies a mirrored PI with some constant $\kappa>0$. In addition, suppose that $\mathcal{L}_{\nu}$ has a discrete spectrum. Then the KL divergence converges exponentially along the MLAWGF, with a rate independent of $\kappa$:
    \begin{equation}
        \mathrm{KL}(\mu_t||\pi) \leq \left(\mathrm{KL}(\mu_0||\pi)\wedge 2\right)e^{-t}.
    \end{equation}
\end{proposition}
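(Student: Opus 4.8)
The plan is to mirror the proof of Theorem 4 in \cite{Chewi2020}: the whole argument reduces to establishing the clean dissipation identity $\frac{\mathrm{d}}{\mathrm{d}t}\mathrm{KL}(\mu_t||\pi) = -\chi^2(\mu_t||\pi)$ along the MLAWGF, after which the two stated bounds follow by exactly the Gr\"onwall arguments already used in the proof of Proposition \ref{prop:chi_squared_PI}. The crucial feature is that the special kernel choice $P_{\nu,k_{\phi}} = -\mathcal{L}_{\nu}^{-1}$ ``preconditions'' the flow so that the dissipation equals the \emph{full} $\chi^2$ divergence with coefficient one, rather than a $\kappa$-weighted Dirichlet form; this is precisely why the resulting rate is independent of the mirrored Poincar\'e constant $\kappa$.

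First I would compute the dissipation in the dual space. Writing $w_t = -\nabla P_{\nu,k_{\phi}}(\tfrac{\eta_t}{\nu})$ and using differential calculus in the Wasserstein space (as in Proposition \ref{prop1}),
\begin{equation}
\frac{\mathrm{d}}{\mathrm{d}t}\mathrm{KL}(\eta_t||\nu) = \left\langle w_t, \nabla\log\left(\frac{\eta_t}{\nu}\right)\right\rangle_{L^2(\eta_t)} = -\int \left\langle \nabla P_{\nu,k_{\phi}}\left(\frac{\eta_t}{\nu}\right), \nabla\left(\frac{\eta_t}{\nu}\right)\right\rangle \mathrm{d}\nu,
\end{equation}
where the second equality uses $\eta_t\nabla\log(\tfrac{\eta_t}{\nu}) = \nu\nabla(\tfrac{\eta_t}{\nu})$. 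I would then integrate by parts via the Dirichlet form of the Langevin generator, $\int\langle \nabla f, \nabla h\rangle\,\mathrm{d}\nu = -\int f\,\mathcal{L}_{\nu} h\,\mathrm{d}\nu$, applied with $h = \tfrac{\eta_t}{\nu}$ and $f = P_{\nu,k_{\phi}}(\tfrac{\eta_t}{\nu}) = -\mathcal{L}_{\nu}^{-1}(\tfrac{\eta_t}{\nu})$.

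The main work is the spectral step. Since $\mathcal{L}_{\nu}$ is self-adjoint in $L^2(\nu)$ with discrete spectrum, let $(\psi_n,\lambda_n)_{n\geq 0}$ be its orthonormal eigenbasis with $-\mathcal{L}_{\nu}\psi_n = \lambda_n\psi_n$, $\lambda_0 = 0$, $\psi_0\equiv 1$; the mirrored PI guarantees a spectral gap $\lambda_1\geq\kappa>0$, so that $\mathcal{L}_{\nu}^{-1}$ is well defined on the mean-zero subspace and the special kernel $k_{\mathcal{L}_{\nu}}$ exists. Expanding $g_t := \tfrac{\eta_t}{\nu}-1 = \sum_{n\geq 1}a_n\psi_n$ (mean zero under $\nu$), the integrand collapses to $-\sum_{n\geq 1}a_n^2 = -||g_t||^2_{L^2(\nu)} = -\chi^2(\eta_t||\nu)$, because the factors $1/\lambda_n$ from $\mathcal{L}_{\nu}^{-1}$ and $\lambda_n$ from $\mathcal{L}_{\nu}$ cancel exactly. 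This cancellation is the heart of the $\kappa$-independence, and keeping track of the constant mode (on which $\mathcal{L}_{\nu}^{-1}$ is undefined) is the one place care is needed; I expect this integration-by-parts/spectral step to be the only genuine obstacle.

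Finally, I would transfer the identity to the primal space using $\mathrm{KL}(\eta_t||\nu) = \mathrm{KL}(\mu_t||\pi)$ and $\chi^2(\eta_t||\nu) = \chi^2(\mu_t||\pi)$ (Theorem 2 in \cite{Hsieh2018} together with the change-of-variable computations \eqref{eq33}--\eqref{eq34}), yielding $\frac{\mathrm{d}}{\mathrm{d}t}\mathrm{KL}(\mu_t||\pi) = -\chi^2(\mu_t||\pi)$. The two stated bounds then follow verbatim from the proof of Proposition \ref{prop:chi_squared_PI}: the inequality $\mathrm{KL}\leq\chi^2$ with Gr\"onwall gives $\mathrm{KL}(\mu_t||\pi)\leq\mathrm{KL}(\mu_0||\pi)e^{-t}$, while $e^{\mathrm{KL}}-1\leq\chi^2$ gives $\frac{\mathrm{d}}{\mathrm{d}t}(1-e^{-\mathrm{KL}(\mu_t||\pi)})\leq -(1-e^{-\mathrm{KL}(\mu_t||\pi)})$, hence, for $t$ large enough that $1-e^{-\mathrm{KL}}\leq e^{-1}$, the refinement $\mathrm{KL}(\mu_t||\pi)\leq 2e^{-t}$; taking the minimum of the two yields the claimed bound, with the Gr\"onwall endgame being routine given Proposition \ref{prop:chi_squared_PI}.
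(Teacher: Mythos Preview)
Your proposal is correct and takes essentially the same route as the paper: compute the KL dissipation along the dual flow, show it equals $-\chi^2(\eta_t||\nu)$ thanks to the kernel choice $P_{\nu,k_{\phi}}=-\mathcal{L}_{\nu}^{-1}$, transfer to the primal via invariance of KL and $\chi^2$ under the mirror map, and finish with Gr\"onwall. The paper is terser---it invokes the integration-by-parts identity $\mathbb{E}_{\nu}\langle\nabla f,\nabla g\rangle = \mathbb{E}_{\nu}[f\mathcal{L}_{\nu}g]$ directly rather than your explicit spectral expansion, and it collapses the endgame to a single Gr\"onwall step---whereas you spell out both the eigenbasis cancellation and the second Gr\"onwall argument from Proposition~\ref{prop:chi_squared_PI} needed for the $2e^{-t}$ refinement. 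Your extra care with the constant mode is well placed: the paper's displayed equality $\|\tfrac{\mu_t}{\pi}\|_{L^2(\pi)}^2 = \|\tfrac{\mu_t}{\pi}-1\|_{L^2(\pi)}^2$ is off by the constant $1$, and it is precisely the fact that $\mathcal{L}_{\nu}^{-1}$ acts only on the mean-zero part that makes the correct identity $\frac{\mathrm{d}}{\mathrm{d}t}\mathrm{KL}=-\chi^2$ come out, which your spectral argument captures cleanly.
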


\begin{proof}
    Using standard rules for calculus on the Wasserstein space, and the integration by parts formula $\mathbb{E}_{\nu}\langle \nabla f, \nabla g\rangle = \mathbb{E}_{\nu} [f\mathcal{L}_{\nu} g]$ \citep[e.g., ][]{Bakry2014}, we have
    \begin{align*}
        \frac{\mathrm{d}\mathrm{KL}(\eta_t||\nu)}{\mathrm{d}t} &= - \left\langle \nabla \left(\frac{\eta_t}{\nu}\right), \nabla P_{\nu,k_{\mathcal{L}_{\nu}}} \left(\frac{\eta_t}{\nu}\right) \right\rangle_{L^2(\pi)} = \int \frac{\eta_t}{\nu} \mathcal{L}_{\nu} P_{\nu,k_{\mathcal{L}_{\nu}}} \frac{\eta_t}{\nu}\mathrm{d}\nu(x) = - \left|\left| \frac{\eta_t}{\nu}\right|\right|^2_{L^2(\nu)}.
    \end{align*} 
    By \cite[][Theorem 2]{Hsieh2018} and \eqref{eq35} - \eqref{eq38} in the proof of Proposition \ref{prop:chi_diss}, we have that $\mathrm{KL}(\eta_t||\nu) = \mathrm{KL}(\mu_t||\pi)$ and $\left|\left| \nabla  \left(\frac{\eta_t}{\nu}\right)\right|\right|^2_{L^2(\nu)}= \left|\left| [\nabla^2\phi]^{-1} \left(\frac{\mu_t}{\pi}\right)\right|\right|^2_{L^2(\pi)}$, which implies that
    \begin{align}
    \frac{\mathrm{d}\mathrm{KL}(\mu_t||\pi)}{\mathrm{d}t} = - \left|\left| \frac{\mu_t}{\pi}\right|\right|^2_{L^2(\pi)}= -\left|\left| \left(\frac{\mu_t}{\pi} - 1\right)\right|\right|^2_{L^2(\pi)} = -\mathcal{X}^2(\mu_t||\pi) \leq -\mathrm{KL}(\mu_t||\pi),
    \end{align}
    where in the final line we have once again used the fact that $\mathrm{KL}(\cdot||\pi)\leq \mathcal{X}^2(\cdot||\pi)$. The conclusion now follows via Gr\"onwall's inequality.
\end{proof}

To obtain an implementable algorithm, observe that the vector field in the continuity equation can be rewritten as  \cite{Chewi2020}
    \begin{equation}
       - \nabla P_{\nu,k_{\mathcal{L}_{\nu}}} \frac{\eta_t}{\nu}(\cdot) = -\int \nabla_{1}k_{\mathcal{L}_{\nu}}(\cdot,y)\frac{\eta_t}{\nu}(y)\mathrm{d}\nu(y)= -\int \nabla_{1}k_{\mathcal{L}_{\nu}}(\cdot,y)\mathrm{d}\eta(y) .\label{eq:LAWGD_exp}
    \end{equation}
 Next, using a forward Euler discretisation in time, we arrive at 
    \begin{equation}
        \eta_{t+1} = \left(\mathrm{id} - \gamma \nabla P_{\nu,k_{\mathcal{L}_{\nu}}} \frac{\eta_t}{\nu}
        \right)_{\#}\eta_t,\quad \mu_{t+1} = (\nabla \phi)_{\#}\eta_{t+1}. \label{eq:LAWGD_euler}
    \end{equation}
    Finally, we approximate the expectations in \eqref{eq:LAWGD_euler} with samples. In particular, after initialising a collection of particles $\smash{x_0^{i} \stackrel{\mathrm{i.i.d.}}{\sim} \mu_0}$ for $i\in[N]$, we set $y_0^{i} = \nabla \phi(x_0^{i})$ for $i\in[N]$, and then update
    \begin{align}
        y_{t+1}^{i} = y_t^{i} - \gamma \frac{1}{N}\sum_{j=1}^N \nabla_{y^{i}} k_{\mathcal{L}_{\nu}}(y_{t}^{i},y_t^{j}) ,\quad x^{i}_{t+1} = \nabla \phi^{*}(y^{i}_{t+1}).
    \end{align}
    This algorithm, which we refer to as the mirrored LAWGD (MLAWGD) algorithm, is summarised in Alg. \ref{alg:mirrored-lawgd}.

\begin{algorithm}[t] %
    \caption{Mirrored LAWGD}
	\label{alg:mirrored-lawgd}
	\begin{algorithmic}
		\State {\bf input:} target density $\pi$, kernel $k_{\mathcal{L}_{\nu}}$, mirror function $\phi$, particles $(x_0^i)_{i=1}^N\sim \mu$, step size $\gamma$.
            \State {\bf initialise:} for $i\in[N]$, $y_0^i = \nabla\phi(x_0^i)$.
		\For{$t=0,\dots,T-1$}
		\State For $i \in [N], y_{t+1}^i = y_{t}^{i} - \gamma \frac{1}{N}\sum_{j=1}^N \nabla_{y^{i}} k_{\mathcal{L}_{\nu}}(y_{t}^{i},y_t^{j})$.
		\State For $i \in [N], x_{t+1}^i = \nabla\phi^*(y_{t+1}^i)$.
		\EndFor
		\State {\bf return}  $(x_T^{i})_{i=1}^N$.
	\end{algorithmic}
\end{algorithm}

\subsection{Mirrored Kernel Stein Discrepancy Descent}
\label{sec:MKSDD}
Finally, we consider the MWGF of the kernel Stein discrepancy (KSD) \cite{Liu2016b,Chwialkowski2016,Gorham2017}. In so doing, we will obtain a mirrored version of the kernel Stein discrepancy descent (KSDD) algorithm in \cite{Korba2021}. 

\subsubsection{MWGF of the Kernel Stein Discrepancy}
We begin by defining $\mathcal{F}(\eta) = \frac{1}{2}\mathrm{KSD}_{\phi}^2(\eta|\nu)$, where  $\mathrm{KSD}_{\phi}(\eta|\nu)$ is the \emph{mirrored KSD}, which we define as
\begin{equation}
    \mathrm{KSD}_{\phi}(\eta|\nu) = \sqrt{\int_{\mathbb{R}^d}\int_{\mathbb{R}^d} k_{\nu,\phi}(y,y') \mathrm{d}\eta(y)\mathrm{d}\eta(y')}, 
\end{equation}
 where $k_{\nu,\eta}$ is the mirrored Stein kernel, defined in terms of the score $s_{\nu} = \nabla \log \nu$ of the mirrored target $\nu=(\nabla \phi)_{\#}\pi$, and the positive semi-definite kernel $k_{\phi}$ as
\begin{align}
\label{eq:mirrored_stein}
    k_{\nu,\phi}(y,y') &= \nabla_{y} \log \nu(y)^{\top}k_{\phi}(y,y')\nabla_{y'} \log \nu(y') + \nabla_{y} \log \nu(y)^{\top} \nabla_{y'}k_{\phi}(y,y') \nonumber \\
    &~~~+\nabla_{y}k_{\phi}^{\top}(y,y')\nabla \log \nu(y') + \langle \nabla_{y}k_{\phi}(y,\cdot), \nabla_{y'}k_{\phi}(\cdot,y')\rangle_{\mathcal{H}_{k_{\phi}}^{d}},
\end{align}
and where, as before, $k_{\phi}$ is the kernel defined according to $k_{\phi}(y,y') = k(\nabla\phi^{*}(y),\nabla\phi^{*}(y'))$, for some base kernel $k:\mathcal{X}\times\mathcal{X}\rightarrow\mathbb{R}$. The MKSD is nothing more than the KSD between $\eta = (\nabla\phi)_{\#}\mu$ and $\nu = (\nabla\phi)_{\#}\pi$, with respect to the kernel $k_{\phi}$. 

In this case, one can show that $\nabla_{W_2}\mathcal{F}(\eta) = \mathbb{E}_{y\sim\eta}[\nabla_{2}k_{\nu,\phi}(y,\cdot)]$ \cite[][Proposition 2]{Korba2021}. We can thus define the MKSD gradient flow according to
\begin{equation}
    \frac{\partial \eta_t}{\partial t} - \nabla \cdot (\mathbb{E}_{y\sim\eta_t}[\nabla_{2}k_{\nu,\phi}(y,\cdot)]\eta_t) = 0,\quad \mu_t  =(\nabla\phi^{*})_{\#}\eta_t. \label{eq:MKSD}
    \end{equation}
Similar to the previous sections, we would like to study the properties of this scheme in continuous time.

\begin{proposition}
    The dissipation of $\frac{1}{2}\mathrm{KSD}_{\phi}^2(\eta_t|\nu)$ along the MWGF in \eqref{eq:MKSD} is given by
    \begin{equation}
    \label{eq:KSD_diss}
        \frac{\mathrm{d}\frac{1}{2}\mathrm{KSD}_{\phi}^2(\eta_t|\nu)}{\mathrm{d}t} = - \mathbb{E}_{y'\sim\eta_t}\left[\left|\left|\mathbb{E}_{y\sim\eta_t}\left[\nabla_{y'}k_{\nu,\phi}(y,y')\right]\right|\right|^2\right]
    \end{equation}
\end{proposition}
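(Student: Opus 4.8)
The plan is to follow the same template as the proofs of Propositions \ref{prop1} and \ref{prop:chi_diss}, exploiting the fact that here the Lyapunov functional and the driving vector field coincide. Writing $\mathcal{F}(\eta) = \frac{1}{2}\mathrm{KSD}_{\phi}^2(\eta|\nu)$, the MWGF in \eqref{eq:MKSD} is exactly the Wasserstein gradient flow of $\mathcal{F}$ in the dual space, i.e.\ the curve $(\eta_t)_{t\geq 0}$ solves the continuity equation with velocity $w_t = -\nabla_{W_2}\mathcal{F}(\eta_t)$. The first step is therefore to invoke the chain rule for differentiation along an absolutely continuous curve in the Wasserstein space (App.\ \ref{app:wasserstein_calculus}), which gives
\begin{equation}
\frac{\mathrm{d}\mathcal{F}(\eta_t)}{\mathrm{d}t} = \left\langle \nabla_{W_2}\mathcal{F}(\eta_t), w_t \right\rangle_{L^2(\eta_t)} = -\left\| \nabla_{W_2}\mathcal{F}(\eta_t) \right\|_{L^2(\eta_t)}^2 .
\end{equation}

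The second step is to substitute the explicit form of the Wasserstein gradient of the squared mirrored KSD. By \cite[][Proposition 2]{Korba2021}, recalled above, $\nabla_{W_2}\mathcal{F}(\eta)(y') = \mathbb{E}_{y\sim\eta}[\nabla_{y'}k_{\nu,\phi}(y,y')]$. Expanding the squared $L^2(\eta_t)$ norm as an integral over $y'\sim\eta_t$ then yields
\begin{equation}
\left\| \nabla_{W_2}\mathcal{F}(\eta_t) \right\|_{L^2(\eta_t)}^2 = \int_{\mathbb{R}^d} \left\| \mathbb{E}_{y\sim\eta_t}\left[\nabla_{y'}k_{\nu,\phi}(y,y')\right] \right\|^2 \mathrm{d}\eta_t(y') = \mathbb{E}_{y'\sim\eta_t}\left[\left\| \mathbb{E}_{y\sim\eta_t}\left[\nabla_{y'}k_{\nu,\phi}(y,y')\right]\right\|^2\right],
\end{equation}
which is precisely the right-hand side of \eqref{eq:KSD_diss}. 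Combining the two displays gives the claimed dissipation identity. Note that, unlike in the proofs of Propositions \ref{prop1} and \ref{prop:chi_diss}, no change of variables back to the primal space is required here, since both the functional and the stated dissipation are already expressed in terms of the mirrored law $\eta_t$ and the mirrored Stein kernel $k_{\nu,\phi}$.

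The main obstacle is not the algebra but the analytic justification of the first step: the chain rule requires that $(\eta_t)$ be absolutely continuous in $(\mathcal{P}_2(\mathbb{R}^d),W_2)$ with $w_t$ as its tangent velocity field, and that $\mathcal{F}$ be subdifferentiable along the flow with single-valued Wasserstein gradient. This in turn relies on sufficient regularity and integrability of the mirrored Stein kernel $k_{\nu,\phi}$ in \eqref{eq:mirrored_stein} — controlled by the smoothness and growth of the base kernel $k$, the mirror map $\nabla\phi^{*}$, and the score $\nabla\log\nu$ — so that the gradient formula transfers verbatim to the mirrored setting and differentiation under the integral sign is licensed. I would state these as standing regularity assumptions (mirroring those in \cite{Korba2021}) rather than re-derive them, since the kernelised Wasserstein gradient of the KSD is exactly the object computed there, now evaluated with $k_{\phi}$ and $\nu$ in place of $k$ and $\pi$.
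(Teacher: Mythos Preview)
Your proof is correct and follows essentially the same approach as the paper: apply the chain rule along the Wasserstein gradient flow to obtain $\frac{\mathrm{d}}{\mathrm{d}t}\mathcal{F}(\eta_t) = -\|\nabla_{W_2}\mathcal{F}(\eta_t)\|_{L^2(\eta_t)}^2$, then substitute the explicit Wasserstein gradient $\nabla_{W_2}\mathcal{F}(\eta)(y') = \mathbb{E}_{y\sim\eta}[\nabla_{y'}k_{\nu,\phi}(y,y')]$ from \cite[][Proposition 2]{Korba2021}. Your additional remarks on regularity and on why no change of variables to the primal space is needed are accurate and anticipate the paper's own commentary following the proof.
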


\begin{proof}
   The result follows straightforwardly using the chain rule. In particular, we have
    \begin{align}
        \frac{\mathrm{d}\frac{1}{2}\mathrm{KSD}_{\phi}^2(\eta_t|\nu)}{\mathrm{d}t} &= \big\langle w_t, \nabla_{W_2}\frac{1}{2}\mathrm{KSD}^2(\eta_t|\nu) \big\rangle_{L^2(\eta_t)} \\ 
        &= - \mathbb{E}_{y'\sim\eta_t}\left[\left|\left|\mathbb{E}_{y\sim\eta_t}\left[\nabla_{y'}k_{\nu,\phi}(y,y')\right]\right|\right|^2\right].
    \end{align}
\end{proof}

\begin{remark}
    Unlike elsewhere (i.e., Propositions \ref{prop1} - \ref{prop:chi_diss}, Proposition \ref{prop:msvgd_dissapation}, and Propositions \ref{prop:chi_diss_v2} - \ref{prop1_v2}) this dissipation result is given in terms of the mirrored densities $(\eta_t)_{t\geq 0}$ and the mirrored target $\nu = (\nabla\phi)_{\#}\pi$, rather than densities $(\mu_t)_{t\geq 0}$ and the target $\pi$. The crucial difference here is that the objective functional $\smash{\frac{1}{2}\mathrm{KSD}^2(\eta_t|\nu)}$ is not equal to $\smash{\frac{1}{2}\mathrm{KSD}^2(\mu_t|\pi)}$, whereas previously it was true that, e.g., $\smash{\mathrm{KL}(\eta_t||\nu) = \mathrm{KL}(\mu_t||\pi)}$ or $\smash{\chi^2(\eta_t||\nu) = \chi^2(\mu_t||\pi)}$. Nonetheless, it is possible to give a dissipation result in terms of the target $\pi$. 
    
    We begin by rewriting the LHS of \eqref{eq:KSD_diss}. Recall that the squared KSD between $\mu$ and $\pi$ is identical to the Stein Fisher information between $\mu$ and $\pi$ \cite[e.g.,][Theorem 3.6]{Liu2016b}. This also holds for $\eta = (\nabla\phi)_{\#}\mu$ and $\nu = (\nabla\phi)_{\#}\pi$ in the mirrored space, and thus we have  
\begin{align}
    \mathrm{KSD}^2_{\phi}(\eta|\nu) &= \left|\left|\mathcal{S}_{\mu,k_{\phi}}\nabla\log \left(\frac{\eta}{\nu}\right)\right|\right|^2_{\mathcal{H}_{k_{\phi}}^{d}} = \left|\left| S_{\mu,k} [\nabla^2 \phi]^{-1} \nabla \log \left(\frac{\mu}{\pi} \right) \right|\right|_{\mathcal{H}_{k}^{d}}^2, \label{eq:ksd_stein}
\end{align}
where in the second equality we have used the result obtained in \eqref{eq55} - \eqref{eq56}; see the proof of Proposition \ref{prop:msvgd_dissapation}. Thus, in particular, $\mathcal{F}(\eta)$ is equal to the mirrored Stein Fisher information, up to a factor half.

We now turn our attention to the RHS of \eqref{eq:KSD_diss}. We would like to simplify $\nabla_{y'}k_{\nu,\phi}(y,y')$, as defined in \eqref{eq:mirrored_stein}. Let $y=\nabla\phi(x)$ and $y'=\nabla\phi(x')$. Using the definition of $k_{\phi}$, we have $k_{\phi}(y,y') =k(x,x')$. Using also the chain rule, we have 
\begin{equation}
    \nabla_{y}k_{\phi}(y,y')= [\nabla^2\phi(x)]^{-1}\nabla_{x}k(x,x')~~~,~~~\nabla_{y'}k_{\phi}(y,y')= [\nabla^2\phi(x')]^{-1}\nabla_{x'}k(x,x')
\end{equation}
Similarly, it holds that
\begin{align}
    \langle \nabla_{y}k_{\phi}(y,\cdot), \nabla_{y'}k_{\phi}(\cdot,y')\rangle_{\mathcal{H}_{k_{\phi}}^{d}} &= \langle \left[\nabla^2\phi(x)\right]^{-1} \nabla_{x}k(x,\cdot) , \left[\nabla^2\phi(x')\right]^{-1} \nabla_{x'}k(
    \cdot,x')\rangle_{\mathcal{H}_k^{d}}.
\end{align}
Finally, due to \eqref{eq:W_to_V}, we have $\nabla_{y}\log \nu(y) =[\nabla^2\phi(x)]^{-1}\left[\nabla_{x}\log \pi(x) - \nabla_{x}\log \mathrm{det}\nabla^2\phi(x)\right]$. Putting everything together, we thus have that 
\begin{align}
    k_{\nu,\phi}(y,y') &= \big([\nabla^2\phi(x)]^{-1}[\nabla_{x}\log \pi(x) - \nabla_{x}\log \mathrm{det}\nabla^2\phi(x)]\big)^{\top}k(x,x') \nonumber \\
    &~~~~\cdot \big([\nabla^2\phi(x)]^{-1}[\nabla_{x}\log \pi(x) - \nabla_{x}\log \mathrm{det}\nabla^2\phi(x)]\big) \nonumber \\
    &+\big([\nabla^2\phi(x)]^{-1}[\nabla_{x}\log \pi(x) - \nabla_{x}\log \mathrm{det}\nabla^2\phi(x)]\big)^{\top}\big([\nabla^2\phi(x)]^{-1}\nabla_{x}k(x,x')\big) \nonumber \\
    &+\big([\nabla^2\phi(x')]^{-1}\nabla_{x'}k(x,x')\big)^{\top}\big([\nabla^2\phi(x)]^{-1}[\nabla_{x}\log \pi(x) - \nabla_{x}\log \mathrm{det}\nabla^2\phi(x)]\big) \nonumber \\[1mm]
    &+\langle [\nabla^2\phi(x)]^{-1} \nabla_{x}k(x,\cdot) , [\nabla^2\phi(x')]^{-1} \nabla_{x'}k(
    \cdot,x')\rangle := \tilde{k}_{\pi,\phi}(x,x'). \label{eq:mirror_kernel}
\end{align}
Using \eqref{eq:ksd_stein} and \eqref{eq:mirror_kernel} we thus have the following dissipation result for the mirrored Stein Fisher information in terms of the target $\pi$:
\begin{equation}
    \frac{\mathrm{d}}{\mathrm{d}t} \left|\left| S_{\mu_t,k} [\nabla^2 \phi]^{-1} \nabla \log \left(\frac{\mu_t}{\pi} \right) \right|\right|_{\mathcal{H}_{k}^{d}}^2 = - 2 \mathbb{E}_{x'\sim\mu_t}\left[\left|\left|\mathbb{E}_{x\sim\mu_t}\left[\tilde{k}_{\pi,\phi}(x,x')\right]\right|\right|^2\right].
\end{equation}
\end{remark}

\subsubsection{Mirrored Kernel Stein Discrepancy Descent (MKSDD)}
\begin{algorithm}[t] %
    \caption{Mirrored KSDD}
	\label{alg:mirrored-ksdd}
	\begin{algorithmic}
		\State {\bf input:} target density $\pi$, kernel $k$, mirror function $\phi$, particles $(x_0^i)_{i=1}^N\sim \mu$, step size $\gamma$.
            \State {\bf initialise:} for $i\in[N]$, $y_0^i = \nabla\phi(x_0^i)$.
		\For{$t=0,\dots,T-1$}
		\State For $i \in [N], y_{t+1}^i = y_{t}^{i} -  \gamma \frac{1}{N^2} \sum_{j=1}^N \nabla_{y^{i}}k_{\nu,\phi}(y_t^{j},y_t^{i})$.
		\State For $i \in [N], x_{t+1}^i = \nabla\phi^*(y_{t+1}^i)$.
		\EndFor
		\State {\bf return}  $(x_T^{i})_{i=1}^N$.
	\end{algorithmic}
\end{algorithm}

To obtain an implementable MKSDD algorithm, it is, of course, necessary to discretise in time and in space. In this case, following \cite{Korba2021}, we will first discretise in space, replacing the measure $\eta$ by a discrete approximation $\smash{\hat{\eta} = \frac{1}{N}\sum_{j=1}^N \delta_{y^{j}}}$. This yields, writing $\omega^{N} = \{y^{1},\dots,y^{n}\}$, 
\begin{equation}
    F\left(\omega^N\right) :=\mathcal{F}(\hat{\eta}) = \frac{1}{2N^2}\sum_{i,j=1}^N k_{\nu,\phi}(y^{i},y^{j}),\quad \nabla_{y^{i}}F\left(\omega^N\right) :=\mathcal{F}(\hat{\eta}) = \frac{1}{N^2}\sum_{j=1}^N \nabla_{2} k_{\nu,\phi}(y^{j},y^{i}).
\end{equation}
The mirrored KSDD algorithm thus takes the following form. Initialise a collection of particles $\smash{x_0^{i} \stackrel{\mathrm{i.i.d.}}{\sim} \mu_0}$ for $i\in[N]$, and set $y_0^{i} = \nabla \phi(x_0^{i})$ for $i\in[N]$. Then, update
\begin{align}
    y_{t+1}^{i} &= y_t - \gamma \frac{1}{N^2} \sum_{j=1}^N \nabla_{2}k_{\nu,\phi}(y_t^{j},y_t^{i})~~~,~~~x^{i}_{t+1} = \nabla \phi^{*}(y^{i}_{t+1}).
\end{align}

\section{Convergence of Mirrored Coin Wasserstein Gradient Descent}
\label{sec:conv-MCWGD}
\begin{proposition}
    Suppose that $\pi$ is strongly log-concave and that $||\nabla_{W_2}\mathcal{F}(\eta_t)(y_t)||\leq L$ for all $t\in[T]$. In addition, suppose that $(\eta_t)_{t\in\mathbb{N}_0}$ and $\nu$ satisfy \cite[][Assumption B.1]{Sharrock2023}. Then there exists a mirror map $\phi$ such that 
        \begin{align}
    \mathrm{KL}\left(\left.\left.\frac{1}{T}\sum_{t=1}^T \mu_t \right|\right|\pi\right)
    \leq \frac{L}{T} \bigg[ 1 &+ \int_{\mathcal{X}} \left|\left|\nabla\phi(x)\right|\right| \sqrt{T \ln \left(1+ 96K^2T^2\left|\left|\nabla\phi(x)\right|\right|^2\right)} \pi(\mathrm{d}x) \nonumber \\
    &+\int_{\mathcal{X}} \left|\left|\nabla\phi(x)\right|\right| \sqrt{T \ln \left(1+ 96T^2\left|\left|\nabla\phi(x)\right|\right|^2\right)} \mu_0(\mathrm{d}x) \bigg], \label{eq:theorem1_bound} 
\end{align}
where $K>0$ is the constant defined in \cite[][Assumption B.1]{Sharrock2023}.
\end{proposition}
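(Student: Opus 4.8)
The plan is to reduce the claim to the existing coin sampling convergence guarantee in the unconstrained (dual) space, and then transport the resulting bound back to the primal space using the defining properties of the mirror map. First, since $\pi$ is strongly log-concave, I would invoke the construction in \cite[Proof of Theorem 3]{Hsieh2018} to produce a mirror map $\phi$ for which the dual target $\nu = (\nabla\phi)_{\#}\pi$ is itself strongly log-concave. This furnishes the mirror map whose existence is asserted, and guarantees that $\nu$ is a well-behaved target for coin sampling on $\mathbb{R}^d$.

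Second, I would apply the population-limit convergence result for coin Wasserstein gradient descent from \cite{Sharrock2023} directly to the dual iterates $(\eta_t)_{t\in\mathbb{N}_0}$, which by construction run coin sampling on $\mathcal{F}(\cdot) = \mathrm{KL}(\cdot\|\nu)$ initialised at $\eta_0 = (\nabla\phi)_{\#}\mu_0$. The hypotheses transfer cleanly: the bound $\|\nabla_{W_2}\mathcal{F}(\eta_t)(y_t)\|\leq L$ plays the role of the bounded-outcome condition in the coin betting regret analysis, while the assumed \cite[Assumption B.1]{Sharrock2023} on $(\eta_t)$ and $\nu$ is precisely the technical condition that result requires. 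This yields a bound on $\mathrm{KL}(\frac{1}{T}\sum_{t=1}^T \eta_t \| \nu)$ of the form $\frac{L}{T}[1 + \int \|y\|\sqrt{T\ln(1+96K^2T^2\|y\|^2)}\,\nu(\mathrm{d}y) + \int \|y\|\sqrt{T\ln(1+96T^2\|y\|^2)}\,\eta_0(\mathrm{d}y)]$, where the integral against the target $\nu$ carries the constant $K$ from Assumption B.1 and the integral against the initialisation $\eta_0$ does not.

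Third, I would rewrite these two integrals in terms of the primal objects by change of variables. Since $\nu = (\nabla\phi)_{\#}\pi$, for any integrand $g$ we have $\int g(y)\,\nu(\mathrm{d}y) = \int g(\nabla\phi(x))\,\pi(\mathrm{d}x)$, and likewise $\int g(y)\,\eta_0(\mathrm{d}y) = \int g(\nabla\phi(x))\,\mu_0(\mathrm{d}x)$ because $\eta_0 = (\nabla\phi)_{\#}\mu_0$. Taking $g(y) = \|y\|\sqrt{T\ln(1+cT^2\|y\|^2)}$ with $c = 96K^2$ and $c = 96$ respectively turns the two dual integrals into exactly the two primal integrals in the stated bound, with $\|y\|$ replaced by $\|\nabla\phi(x)\|$.

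Finally, I would transfer the left-hand side from the dual to the primal space using invariance of the KL divergence under the bijection $\nabla\phi^*$. Since pushforward commutes with convex combinations, $(\nabla\phi^*)_{\#}(\frac{1}{T}\sum_{t=1}^T \eta_t) = \frac{1}{T}\sum_{t=1}^T \mu_t$, and $(\nabla\phi^*)_{\#}\nu = \pi$; hence $\mathrm{KL}(\frac{1}{T}\sum_t \eta_t \| \nu) = \mathrm{KL}(\frac{1}{T}\sum_t \mu_t \| \pi)$ by \cite[Theorem 2]{Hsieh2018}. Combining this identity with the transformed right-hand side gives exactly \eqref{eq:theorem1_bound}. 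The main obstacle I anticipate is not any individual calculation but verifying that Assumption B.1 and the bounded-gradient hypothesis are genuinely inherited by the dual dynamics, so that the black-box guarantee of \cite{Sharrock2023} applies verbatim in $\mathbb{R}^d$; the remaining steps — linearity of pushforward, change of variables, and KL invariance — are routine once that reduction is secured.
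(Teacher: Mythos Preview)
Your proposal is correct and follows essentially the same route as the paper: invoke \cite[Proof of Theorem 3]{Hsieh2018} to obtain a mirror map with $\nu$ strongly log-concave, apply the coin sampling guarantee from \cite{Sharrock2023} in the dual space, and transport the bound back to the primal via the pushforward change of variables and the KL invariance of \cite[Theorem 2]{Hsieh2018}.

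Two small points of difference are worth flagging. First, the paper is explicit that strong log-concavity of $\nu$ is needed precisely because it makes $\eta\mapsto\mathrm{KL}(\eta\|\nu)$ geodesically convex, which is the hypothesis actually consumed by \cite[Proposition 3.3]{Sharrock2023}; your ``well-behaved target'' remark should be sharpened to this. Second, the paper applies Jensen \emph{first} in the primal space, obtaining $\mathrm{KL}(\frac{1}{T}\sum_t\mu_t\|\pi)\leq\frac{1}{T}\sum_t\mathrm{KL}(\mu_t\|\pi)$, then uses termwise KL invariance to reach $\frac{1}{T}\sum_t\mathcal{F}(\eta_t)-\mathcal{F}(\nu)$, which is the quantity \cite{Sharrock2023} actually bounds. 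You instead claim the cited result bounds $\mathrm{KL}(\frac{1}{T}\sum_t\eta_t\|\nu)$ directly and then invoke KL invariance on the averaged measures via linearity of pushforward. Both orderings are valid (Jensen for KL plus linearity of pushforward make them interchangeable), but be aware that the black-box statement you are citing is a regret bound on $\frac{1}{T}\sum_t\mathcal{F}(\eta_t)-\mathcal{F}(\nu)$, so a one-line Jensen step is still needed somewhere in your chain.
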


\begin{proof}[Proof]
First note, using Jensen's inequality, and \cite[][Theorem 2]{Hsieh2018}, that
\begin{align}
    \mathrm{KL}\left(\left.\left.\frac{1}{T}\sum_{t=1}^T \mu_t\right|\right|\pi\right) &\leq \frac{1}{T}\sum_{t=1}^T \mathrm{KL}\left(\left.\left.\mu_t\right|\right|\pi\right)  \\
    &= \frac{1}{T}\sum_{t=1}^T\left[ \mathrm{KL}(\eta_t||\nu) - \mathrm{KL}(\nu||\nu)\right] \\
    &:=\frac{1}{T}\sum_{t=1}^T \mathcal{F}\left(\eta_t\right) - \mathcal{F}(\nu),
\end{align}
where in the final line we have defined $\mathcal{F}(\eta):= \mathrm{KL}(\eta||\pi)$. Now, under the assumption that $\pi$ is strongly log-concave, \cite[][Proof of Theorem 3]{Hsieh2018} guarantees the existence of a mirror map $\phi:\mathcal{X}\rightarrow\mathbb{R}\cup\{\infty\}$ such that the mirrored target $\nu = (\nabla\phi)_{\#}\pi$ is also strongly log-concave. It follows, in particular, that $\mathcal{F}(\eta):= \mathrm{KL}(\eta||\pi)$ is geodesically convex \citep[e.g.,][Chapter 9]{Ambrosio2008}. Thus, arguing as in the proof of \citep[][Proposition 3.3]{Sharrock2023}, with $w_0=1$, it follows that
    \begin{align}
    \frac{1}{T}\sum_{t=1}^T \mathcal{F}\left(\eta_t\right) - \mathcal{F}(\nu)
    \leq \frac{L}{T} \bigg[ 1 +&\int_{\mathbb{R}^d} \left|\left|y\right|\right| \sqrt{T \ln \left(1+ 96K^2T^2\left|\left|y\right|\right|^2\right)} \nu(\mathrm{d}y) \nonumber \\
    &+\int_{\mathbb{R}^d} \left|\left|y\right|\right| \sqrt{T \ln \left(1+ 96T^2\left|\left|y\right|\right|^2\right)} \eta_0(\mathrm{d}y) \bigg].   \label{eq:final_bound}
\end{align}
Finally, once more using the definition of the mirror map and in particular the fact that $\nu = (\nabla \phi)_{\#}\pi$ and $\eta_0 = (\nabla \phi)_{\#}\eta_0$, we can rewrite the RHS in \eqref{eq:final_bound} as the RHS in \eqref{eq:theorem1_bound}. This completes the proof.
\end{proof}

\section{Other Mirrored Coin Sampling Algorithms}
\label{sec:other-mirrored-coin-parVI}
In this section, we provide some other mirrored coin sampling algorithms besides Coin MSVGD. Recall, from Sec. \ref{sec:mirrored_coin}, the general form of the mirrored coin sampling algorithm. Let $x_0\sim \mu_0\in\mathcal{P}_2(\mathcal{X})$, and $y_0 = \nabla \phi(x_0) \in\mathbb{R}^d$. Then, for $t\in\mathbb{N}$, update
\begin{align}
        y_t &= y_0 + \frac{\sum_{s=1}^{t-1}c_{\eta_s}(y_s)}{t}\left(1 + \sum_{s=1}^{t-1} \langle c_{\eta_s}(y_s), y_s - y_0\rangle\right) ~~~,~~~ x_t = \nabla\phi^{*}(x_t), \label{eq:mirrror_coin1_v2} 
\end{align} 
where $c_{\eta_t}(y_t) =-\nabla_{W_2}\mathcal{F}(\eta_t)(y_t)$, or some variant thereof. As noted in Sec. \ref{sec:mirrored_coin}, for different choices of  $\mathcal{F}$ and $(c_{\eta_t})_{t\in\mathbb{N}_0}$, one obtains learning-rate free analogues of different mirrored ParVI algorithms. 

In Sec. \ref{sec:mirrored_coin}, we provided the Coin MSVGD algorithm (Alg. \ref{alg:constrained_coin}), which corresponds to $\mathcal{F} = \mathrm{KL}(\cdot|\nu)$ and $(c_{\eta_t})_{t\in\mathbb{N}_0} = (-P_{\eta_t,k_{\phi}}\nabla \log (\frac{\eta_t}{\nu}))_{t\in\mathbb{N}_0}$. This can be viewed as the coin sampling analogue of MSVGD (Sec. \ref{sec:mirrored_SVGD} and App. \ref{sec:mirrored-wasserstein-gradient-descent}). We now provide two more mirrored coin sampling algorithms, which correspond to the coin sampling analogues of MLAWGD (App. \ref{sec:MLAWGD}) and MKSDD (App. \ref{sec:MKSDD}).

\subsection{Coin MLAWGD}

\label{sec:mirrored-lawgd}
\vspace{-1mm}
\begin{algorithm}[H]
   \caption{Coin MLAWGD}
   \label{alg:mirrored-coin-lawgd}
\begin{algorithmic}
   \State {\bfseries input:} target density $\pi$, kernel $k_{\mathcal{L}_{\nu}}$, mirror map $\phi$,  initial particles $(x_0^{i})_{i=1}^N \sim \mu_0$.   
   \State {\bfseries initialise:} for $i\in[N]$, $y_0^i = \nabla\phi(x_0^i)$.
   \For{$t=1,\dots,T$}
   \State For $i\in[N]$, 
\begin{align*}
    y_{t}^{i} &= y_0^{i} -\frac{{\sum_{s=1}^{t-1} \frac{1}{N}\sum_{j=1}^N \nabla_{y^{i}}k_{\mathcal{L}_{\nu}}(y_s^{i},y_s^{j})}}{t} \left( 1 -  \sum_{s=1}^{t-1} \langle \frac{1}{N}\sum_{j=1}^N \nabla_{y^{i}}k_{\mathcal{L}_{\nu}}(y_s^{i},y_s^{j}), y_s^{i} - y_0^{i} \rangle \right)  \\
x_{t}^i &= \nabla\phi^*(y_{t}^i).
\end{align*}
where $k_{\mathcal{L}_{\nu}}$ is the mirrored LAWGD kernel defined in \eqref{eq:langevin_generator}.
   \EndFor
   \State {\bf return}  $(x_T^{i})_{i=1}^N$.
\end{algorithmic}
\end{algorithm}
\vspace{-1mm}

\subsection{Coin MKSDD}
\label{sec:mirrored-ksdd}
\vspace{-1mm}
\begin{algorithm}[H]
   \caption{Coin MKSDD}
   \label{alg:mirrored-coin-ksdd}
\begin{algorithmic}
   \State {\bfseries input:} target density $\pi$, kernel $k$, mirror map $\phi$,  initial particles $(x_0^{i})_{i=1}^N \sim \mu_0$. 
   \State {\bfseries initialise:} for $i\in[N]$, $y_0^i = \nabla\phi(x_0^i)$.
   \For{$t=1,\dots,T$}
   \State For $i\in[N]$
   \begin{align*}
       y_{t}^{i} &= y_0^{i} -\frac{{\sum_{s=1}^{t-1} \frac{1}{N^2}\sum_{j=1}^N \nabla_{y^{i}}k_{\nu,\phi}(y_s^{j},y_s^{i})}}{t} \left( 1 -  \sum_{s=1}^{t-1} \langle \frac{1}{N^2}\sum_{j=1}^N \nabla_{y^{i}}k_{\nu,\phi}(y_s^{j},y_s^{i}), y_s^{i} - y_0^{i} \rangle\right) \\
       x_{t}^i &= \nabla\phi^*(y_{t}^i).
   \end{align*}
   where $k_{\nu,\phi}$ is the mirrored Stein kernel defined in \eqref{eq:mirrored_stein}.
   \EndFor
   \State {\bf return} $(x_T^{i})_{i=1}^N$.
\end{algorithmic}
\end{algorithm}
\vspace{-1mm}

\section{Adaptive Coin MSVGD}
\label{sec:adaptive-mirrored-coin}
In Sec. \ref{sec:coin} - \ref{sec:mirrored_coin}, we described a betting strategy which is valid under the assumption that the sequence of outcomes $(c_{t})_{t\in\mathbb{N}}:=(c_{\eta_t}(y_t))_{t\in\mathbb{N}}$ are bounded by one, in the sense that $\smash{||c_{\eta_t}(y_t^{i})||\leq 1}$ for all $t\in[T]$ and for all $i\in[N]$ \cite[see also][Sec. 3.4]{Sharrock2023}. In practice, this may not be the case. If, instead, $\smash{||c_{\eta_t}(y_t^{i})||\leq L}$ for some constant $L$, then one can simply replace $\smash{c_{\eta_t}(y_t)}$ by its normalised version, namely $\smash{\hat{c}_{\eta_t}(y_t) = {c_{\eta_t}(y_t)}/{L}}$ in Alg. \ref{alg:constrained_coin}. 

If, on the other hand, the constant $L$ is unknown in advance, then one can use a coordinate-wise empirical estimate $\smash{L_{t,j}^{i}}$ which is adaptively updated as the algorithm runs, following \cite[][]{Orabona2017} and later \cite[][App. D]{Sharrock2023}.\footnote{Here, $i\in[N]$ indexes the particle, and $j\in[d]$ indexes the dimension (see Alg. \ref{alg:adaptive_mirrored_coin_svgd}).} We provide full details of this version of Coin MSVGD in Alg. \ref{alg:adaptive_mirrored_coin_svgd}. One can also obtain an analogous version of Coin MIED; we omit the full details here in the interest of brevity.

Finally, when we use a coin sampling algorithm to perform inference for a (fairness) Bayesian neural network (see Sec. \ref{sec:fairness-bnn}), we alter the denominator of the betting fraction in Alg. \ref{alg:adaptive_mirrored_coin_svgd} such that it is at least $100L_{t,j}^{i}$. Thus, the update equation in Alg. \ref{alg:adaptive_mirrored_coin_svgd} (or, analogously, the update equation in the adaptive version of Coin MIED) now reads as 
\begin{equation}
y_{t,j}^{i} = y_{0,j}^{i} + \frac{\sum_{s=1}^{t-1} c_{s,j}^{i}}{\max(G_{t,j}^{i} + L_{t,j}^{i},100 L_{t,j}^{i})} (1 + \frac{R_{t,j}^{i}}{L_{t,j}^{i}}).
\end{equation}
This modification, recommended in \cite[][Sec. 6]{Orabona2017}, has the effect of restricting the value of the particles in the first few iterations. We should emphasise that these adaptive algorithms, which in practice we use in all experiments, are still free of any  hyperparameters which need to be tuned by hand. 

\textbf{Computational Cost}. It is worth noting that, in terms of computational and memory cost, the adaptive Coin MSVGD algorithm is no worse than MSVGD when the latter is paired, as is common, with a method such as Adagrad \cite{Duchi2011}, RMSProp \cite{Tieleman2012}, or Adam \cite{Kingma2015}. The computational cost per iteration is $O(N^2)$, due to the kernelised gradient. Meanwhile, in terms of memory requirements, it is necessary to keep track of the maximum observed absolute value of the gradients (component-wise), the sum of the absolute value of the gradients (component-wise), and the total reward (component-wise). This is similar to, e.g., Adam, which keeps track of exponential moving averages of the gradient and the (component-wise) squared gradient \cite{Kingma2015}.

\begin{algorithm}[t]
   \caption{Adaptive Coin MSVGD}
   \label{alg:adaptive_mirrored_coin_svgd}
\begin{algorithmic}
   \State {\bfseries input:} target density $\pi$, kernel $k$, mirror map $\phi$, initial particles $(x_0^{i})_{i=1}^N\sim \mu_0$.
   \State{\bfseries initialise:} for $i\in[N]$, $y_0^{i} = \nabla \phi(x_0^{i})$; for $i\in[N]$ and $j\in[d]$, $L^{i}_{0,j}=0$, $G_{0,j}^{i}=0$, $R_{0,j}^{i}=0$. 
   \For{$t=1$ {\bfseries to} $T$}
   \State For $i\in[N]$, 
   \begin{equation*}
       c_{t-1}^{i} = -P_{\hat{\eta}_t,k_{\phi}}\nabla \log \left(\frac{\hat{\eta}_{t-1}}{\nu}\right)(y_{t-1}^{i})~,~ \hat{\eta}_{t-1} = \frac{1}{N}\sum_{k=1}^N \delta_{y_{t-1}^{k}} \tag{compute SVGD gradient}
   \end{equation*}
   \State For $i\in[N]$, $j\in[d]$, 
   \begin{align*}
       L_{t,j}^{i} &= \mathrm{max}(L_{t-1,j}^{i}, |c_{t-1,j}^{i}|)  \tag{update max. observed scale} \\
       G_{t,j}^{i} &= G_{t-1,j}^{i} + |c_{t-1,j}^{i}| \tag{update sum of abs. value of gradients} \\ %
       R_{t,j}^{i} &= \max(R_{t-1,j}^{i} + \langle c_{t-1,j}^{i}, y_{t-1,j}^{i} - y_{0,j}^{i} \rangle, 0) \tag{update total reward} \\
       y_{t,j}^{i} &= y_{0,j}^{i} + \frac{\sum_{s=1}^{t-1} c_{s,j}^{i}}{G_{t,j}^{i} + L_{t,j}^{i}} (1 + \frac{R_{t,j}^{i}}{L_{t,j}^{i}}). \tag{update the particles}
    \end{align*}
    \State For $i\in[N]$, $x_t^{i} = \nabla\phi^{*}(y_t^{i})$.
    \EndFor
   \State {\bfseries output:} $(x_T^{i})_{i=1}^N$.
\end{algorithmic}
\end{algorithm}

\section{Additional Experimental Details}
\label{sec:additional-experimental-details}
In this section, we provide additional details relevant to the experiments carried out in Sec. \ref{sec:numerics}. We implement all methods using Python 3, PyTorch, and TensorFlow. We use the existing implementations of MSVGD and SVMD \cite{Shi2022b} and MIED \cite{Li2023} to run these methods. We perform all experiments using a MacBook Pro 16" (2021) laptop with Apple M1 Pro chip and 16GB of RAM. 

For the experiments in Sec. \ref{sec:simplex-targets} and \ref{subsec:CIs_post_selection_inf}, we use the inverse multiquadric (IMQ) kernel $\smash{k(x,x') = (1 + \{{||x-x'||_{2}^2}/{h^2})^{-\frac{1}{2}}}$ due to its favourable convergence control properties (e.g., \cite{Gorham2017}). Unless otherwise stated, the bandwidth $h$ is determined using the median heuristic \cite{Liu2016a,Garreau2017}. For the experiment in Sec. \ref{sec:fairness-bnn}, following \cite{Li2023}, we use the $s$-Riesz family of mollififiers $\{\phi_{\epsilon}^s\}$ with $s = d+10^{-4}$ and $\epsilon = 10^{-8}$.

\subsection{Simplex Targets}
\label{sec:appendix-simplex-targets}

For both experiments in Sec. \ref{sec:simplex-targets}, we initialise all methods with i.i.d samples from a $\mathrm{Dirichlet}(5)$ distribution and use the entropic mirror map
\begin{equation}
    \phi(x) = \sum_{k=1}^d x_k \log x_k + (1 - \sum_{k=1}^d x_k)\log (1 - \sum_{k=1}^d x_k).
\end{equation}
We run all methods for $T=500$ iterations using $N=50$ particles. In the interest of a fair comparison, we run all of the learning-rate dependent methods (MSVGD, SVMD, projected SVGD) using RMSProp \cite{Tieleman2012}, an extension of the Adagrad algorithm \cite{Duchi2011}, to adapt the learning rates. 

\textbf{Sparse Dirichlet Posterior}. We first consider the sparse Dirichlet posterior in \cite{Patterson2013}, which is given by
\begin{equation}
    \pi(x) = \frac{1}{Z} \prod_{k=1}^{d+1} x_k^{n_k+\alpha_k-1}, ~~~x\in\mathrm{int}(\Delta),
\end{equation}
for parameters $\alpha_1,\dots,\alpha_{d+1}>0$ and observations $n_{1},\dots,n_{d+1}$, where $n_k$ is the number of observations of category $k$. We consider the sparse regime, in which most of the $n_k$ are zero and the remaining $n_k$ are large, e.g., of order $O(d)$.  In particular, as in \cite{Shi2022b}, we set $\alpha_k = 0.1$ for $k\in\{1,\dots,20\}$ and use count data $n_1 = 90$, $n_2 = n_3 = 5$, and $n_j = 0$ for $j>3$. This target is used to replicate the multimodal sparse conditionals that arise in Latent Dirichlet Allocation (LDA) \cite{Blei2003}. While this distribution is not log-concave, the dual-target is strictly log-concave under the entropic mirror map.

\textbf{Quadratic Target}. 
We next consider the quadratic simplex target introduced in \cite{Ahn2021}. In this case, the target takes the form 
\begin{equation}
    \pi(x) = \frac{1}{Z}\exp\left(-\frac{1}{2\sigma^2}x^{\top}Ax\right),
\end{equation}
where $A\in\mathbb{R}^{d\times d}$ is a symmetric positive semidefinite matrix with all entries bounded in magnitude by 1. Following \cite{Shi2022b}, we set $\sigma =0.01$ and generate $A$ by normalising products of random matrices with i.i.d. elements drawn from $\mathrm{Unif}[-1,1]$. 

\subsection{Confidence Intervals for Post Selection Inference}
\label{sec:appendix-confidence-intervals}
Given data $(X,y)\in\mathbb{R}^{n\times p}\times \mathbb{R}^n$, the randomised Lasso is defined as the solution of \cite[][Sec. 3.1]{Tian2016} 
\begin{equation}
    \hat{\beta} = \argmin_{\beta\in\mathbb{R}^p}\left[\frac{1}{2}||y-X\beta||_{2}^2 + \lambda ||\beta||_{1} - \omega^{\top}\beta + \frac{\varepsilon}{2} ||\beta||_{2}^2 \right],
\end{equation}
where $\lambda$ is the $\ell_1$ penalty parameter, $\varepsilon\in\mathbb{R}_{+}$ is a ridge term, 
and $\omega\in\mathbb{R}^p$ is an auxiliary random vector drawn from some distribution $\mathbb{G}$ with density $g$. We choose $\mathbb{G}$ to be a zero-mean independent Gaussian distribution. The scale of this distribution, and the parameter $\varepsilon$, are both determined automatically by the \texttt{randomizedLasso} function in the \texttt{selectiveInference} package. The distribution of interest has density proportional to (e.g., \cite[][Sec. 4.2]{Tian2016a}) 
    \begin{equation}
        \hat{g}(\hat{\beta}_{E}, \hat{z}_{-E}) \propto g\left( \varepsilon (\hat{\beta}_{E}, 0)^{\top} - X^{\top} (y - X_{E}\hat{\beta}_{E}) + \lambda (\hat{z}_{E}, \hat{z}_{-E})^{\top} \right),
    \end{equation}
with support  $\smash{\mathcal{D} = \{(\hat{\beta}_{E},\hat{z}_{-E}): \mathrm{sign}(\hat{\beta}_{E}) = \hat{z}_E~,~||\hat{z}_{-E}||_{\infty}\leq1\}}$. In all experiments, following \cite{Tian2016}, we integrate out $\smash{\hat{z}_{-E}}$ analytically and reparameterise $\smash{\hat{\beta}_{E}}$ as $\smash{\hat{z}_{E}\odot |\hat{\beta}_{E}|}$. This yields a log-concave density for $\smash{|\hat{\beta}_{E}|}$, with domain given by the non-negative orthant in $\smash{\mathbb{R}^q}$. We then use the mirror map
\vspace{-1mm}
\begin{equation}
    \phi(x) = \sum_{k=1}^d \left(x_k\log x_k - x_k\right).
\end{equation} 
\vspace{-2mm}

\textbf{Synthetic Example}. We consider the simulation in \citep[][Sec. 5.3]{Sepehri2017}, with $n=100$ and $p=40$. The design matrix $X$ is generated from an equi-correlated model. In particular, for $i=1,\dots,n$, the rows $\smash{x_i\stackrel{\text{i.i.d.}}{\sim}\mathcal{N}(0,\Sigma)}$, where $\smash{\Sigma_{ij} = \delta_{i,j} + \rho (1-\delta_{i,j})}$, with $\rho = 0.3$. We also normalise the columns of $X$ to have norm 1. Meanwhile, the response is distributed as $y\sim\mathcal{N}(0,I_n)$, independent of $X$. That is the true parameter $\beta=0$. We set $\smash{\varepsilon = \frac{\hat{\sigma}^2}{\sqrt{n}}}$, where $\hat{\sigma}$ denotes the empirical standard deviation of the response. We set $\lambda$ using the default value returned by \texttt{theoretical.lambda} in the \texttt{selectiveInference} package, multiplied by a factor of $0.7n$. This adjustment reduces the $\ell_1$ regularisation, ensuring that a reasonable subset of the $p=40$ features are selected. 

We run each algorithm for $T=1000$ iterations. For Coin MSVGD, MSVGD, and SVMD, we use $N=50$ particles, and generate $N_{\mathrm{total}}$ samples by aggregating the particles from $N_{\mathrm{total}}/N$ independent runs. For the \texttt{norejection} method, we generate $N_{\mathrm{total}}$ samples by running the algorithm for $N_{\mathrm{total}}$ iterations after burn-in.  Finally, following \cite{Shi2022b}, in Fig. \ref{fig:sel_coverage} the error bars for the actual coverage levels correspond to 95\% Wilson intervals \cite{Wilson1927}.

\textbf{HIV-1 Drug Resistance}. We use the vitro measurement of log-fold change under 3TC as the response and include mutations that appear more than 10 times in the dataset as predictors. The resulting dataset consists of $n=633$ cases and $p=91$ predictors. In the selection stage, we apply the randomised Lasso, setting the regularisation parameter to the theoretical value in \texttt{selectiveInference}, multiplied by a factor of $n$. We implement SVMD and MSVGD using RMSProp \cite{Tieleman2012}, with an initial learning rate of $\gamma=0.01$ following \cite{Shi2022b}. We implement each method for $T=2000$ iterations, using $N=50$ particles.

\subsection{Fairness Bayesian Neural Network}
We use a train-test split of 80\% / 20\%, as in 
\cite{Liu2021c,Li2023}. We implement the methods in \cite{Liu2021c,Li2023} using their source code, using the default hyperparameters. We run all algorithms for $T=2000$ iterations, and using $N=50$ particles. Similar to \cite{Li2023}, we found that one of the algorithms (Control + SVGD) in \cite{Liu2022a} got stuck after initialisation, with accuracy close to $0.75$ and did not improve during training. We thus omit the results for this method in Fig. \ref{fig:fairness}.

\section{Additional Numerical Results}
\label{sec:additional-numerical-results}
\subsection{Simplex Targets}
In Fig. \ref{fig:dirichlet_energy_vs_iter} and Fig. \ref{fig:quadratic_energy_vs_iter}, we provide additional results for the sparse Dirichlet posterior \cite{Patterson2013} and the quadratic simplex target \cite{Ahn2021} considered in Sec. \ref{sec:simplex-targets}. In particular, in Fig. \ref{fig:dirichlet_energy_vs_iter_mult_method} and Fig. \ref{fig:quadratic_energy_vs_iter_mult_method}, we plot the energy distance to a set of ground truth samples  either obtained i.i.d. (sparse Dirichlet target) or using NUTS \cite{Hoffman2014} (quadratic target) versus the number of iterations for Coin MSVGD, MSVGD, SVMD, and projected versions of Coin SVGD and SVGD. Meanwhile, in Fig. \ref{fig:dirichlet_energy_vs_iter_mult_lr} and Fig. \ref{fig:quadratic_energy_vs_iter_mult_lr}, we compare the performance of Coin MSVGD with the performance of MSVGD, for multiple choices of the learning rate.

\vspace{-5mm}
\begin{figure}[htb]
\centering
\subfloat[All methods. \label{fig:dirichlet_energy_vs_iter_mult_method}]{\includegraphics[width=.45\textwidth]{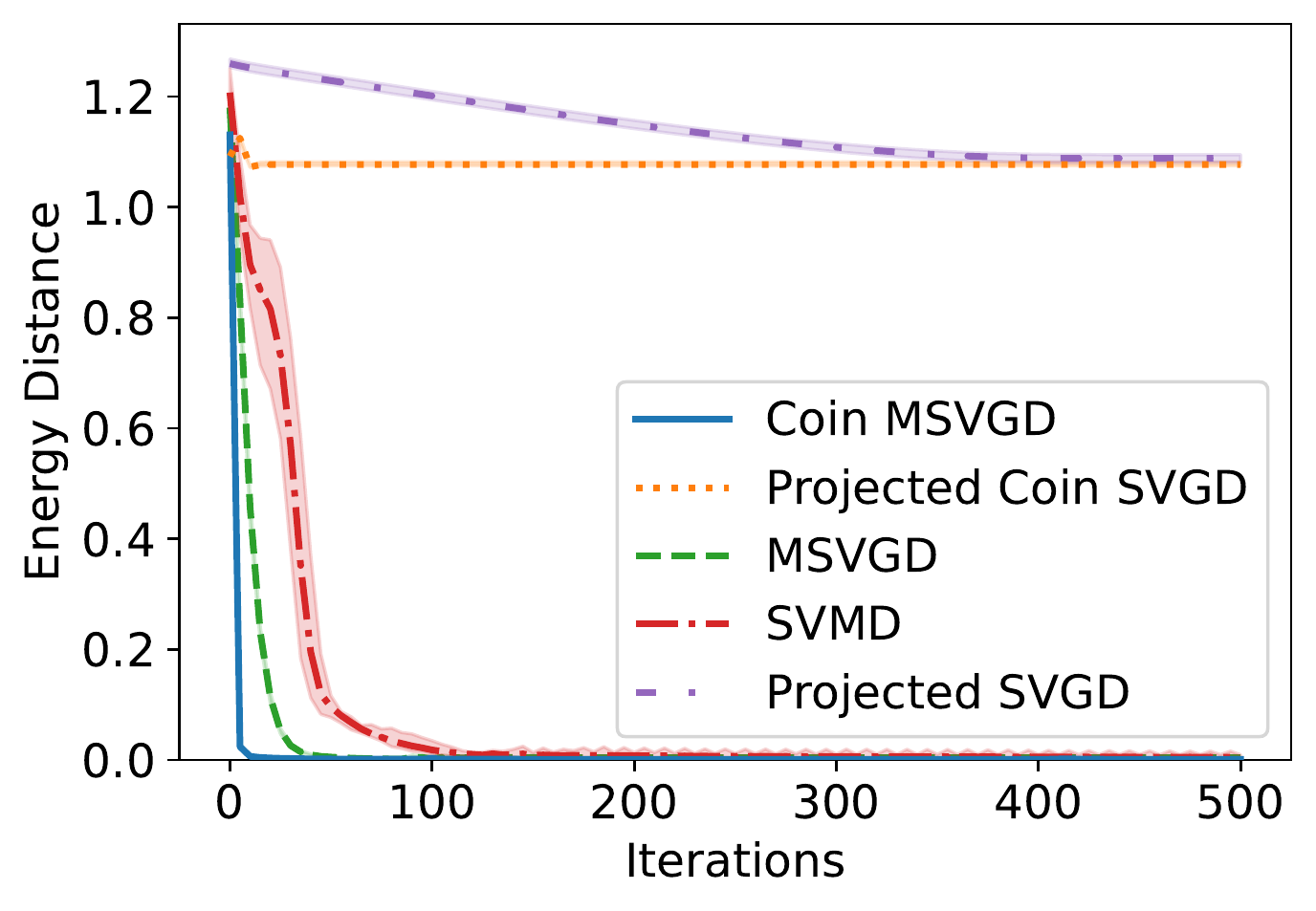}}
\hspace{4mm}
\subfloat[Coin MSVGD vs MSVGD.\label{fig:dirichlet_energy_vs_iter_mult_lr}]{\includegraphics[width=.435\textwidth]{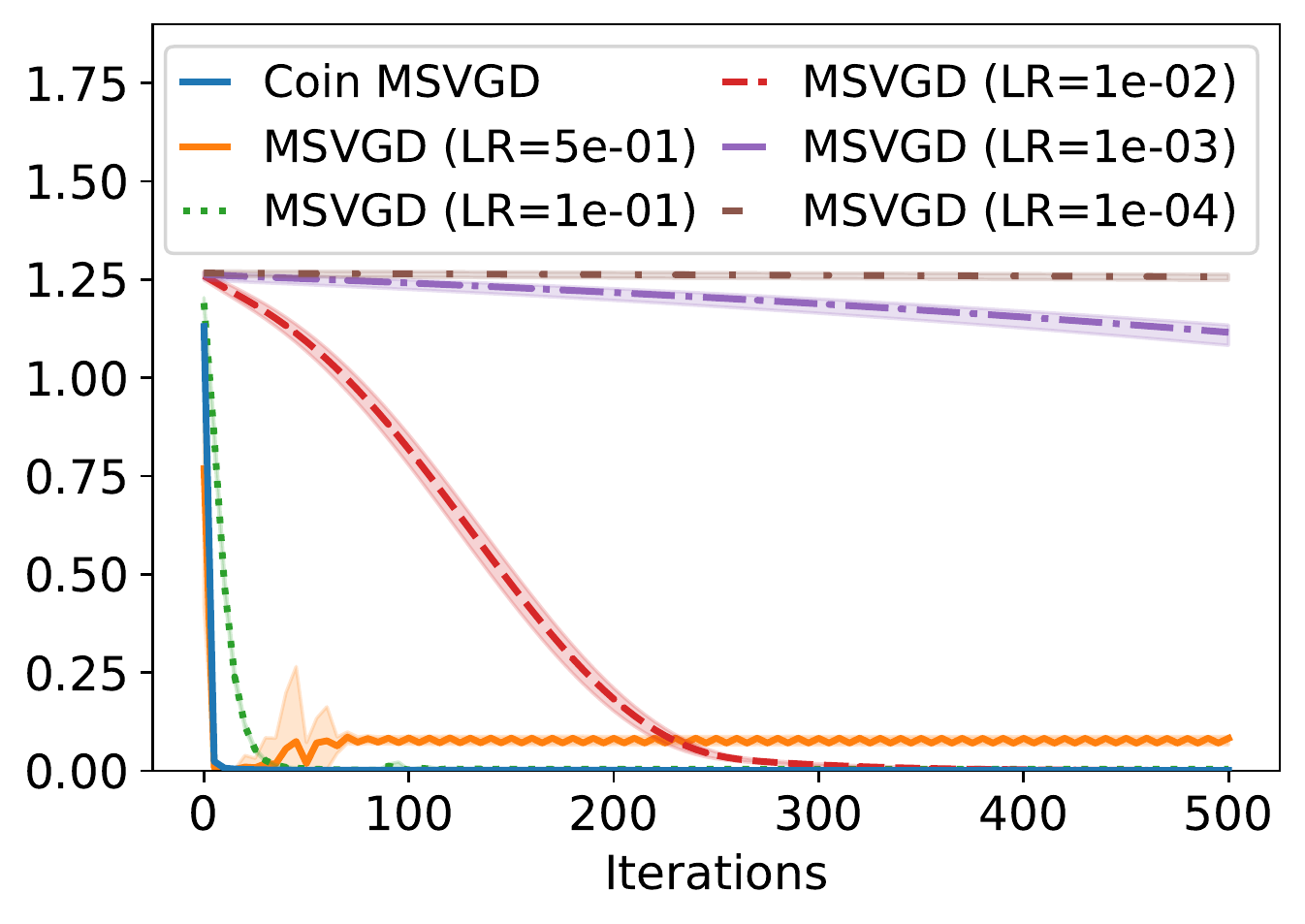}}
\caption{\textbf{Additional results for the sparse dirichlet posterior in \cite{Patterson2013}}. (a) Energy distance vs. iterations for Coin MSVGD, MSVGD, SVMD, projected Coin SVGD, and projected SVGD; and (b) energy distance vs iterations for Coin MSVGD and MSVGD, using five different learning rates.}
\label{fig:dirichlet_energy_vs_iter}
\vspace{-3mm}
\end{figure}

\begin{figure}[htb]
\vspace{-3mm}
\centering
\subfloat[All methods. \label{fig:quadratic_energy_vs_iter_mult_method}]{\includegraphics[trim=0 0 0 0, clip, width=.46\textwidth]{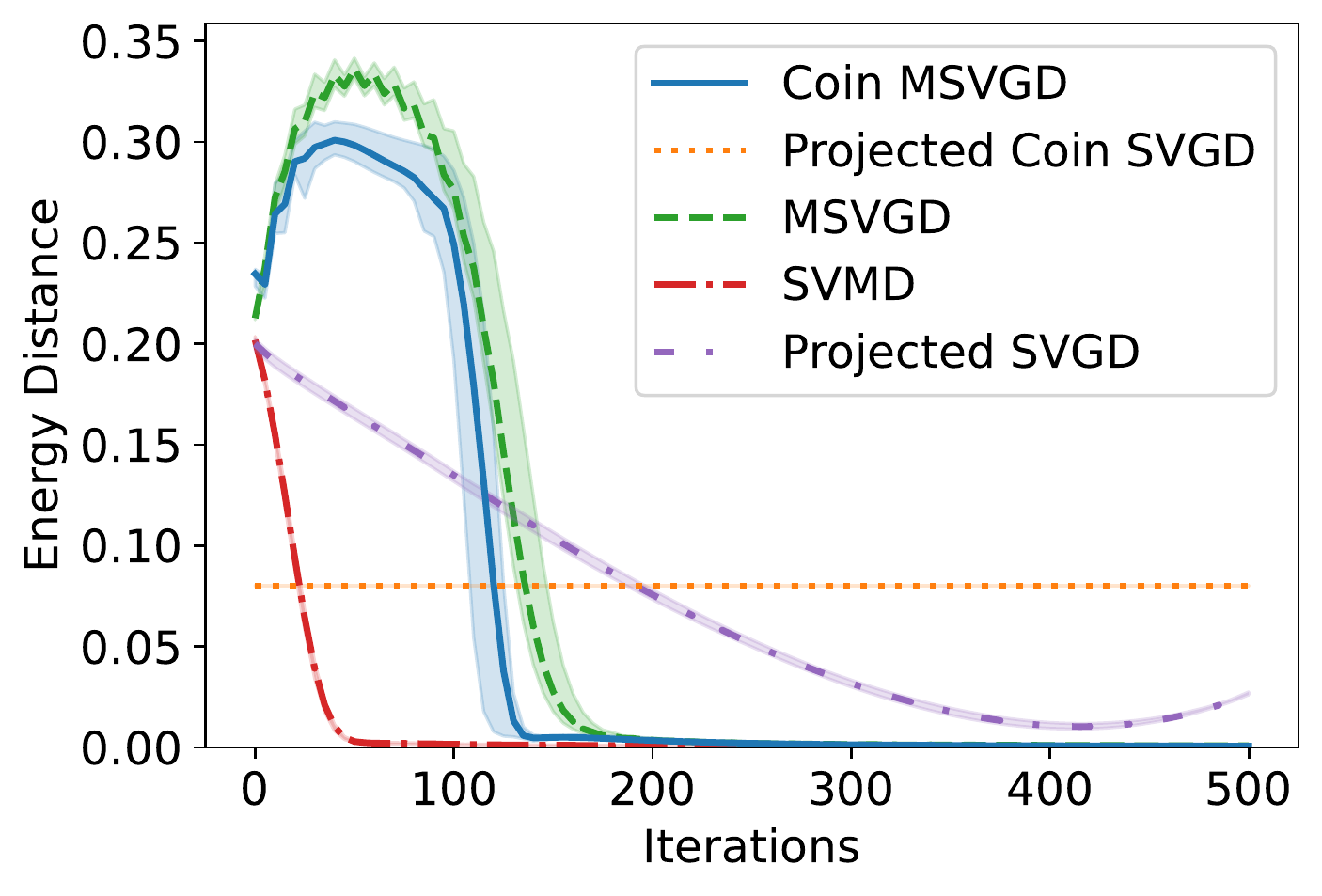}}
\hspace{4mm}
\subfloat[Coin MSVGD vs MSVGD. \label{fig:quadratic_energy_vs_iter_mult_lr}]{\includegraphics[trim=0 0 0 0, clip, width=.435\textwidth]{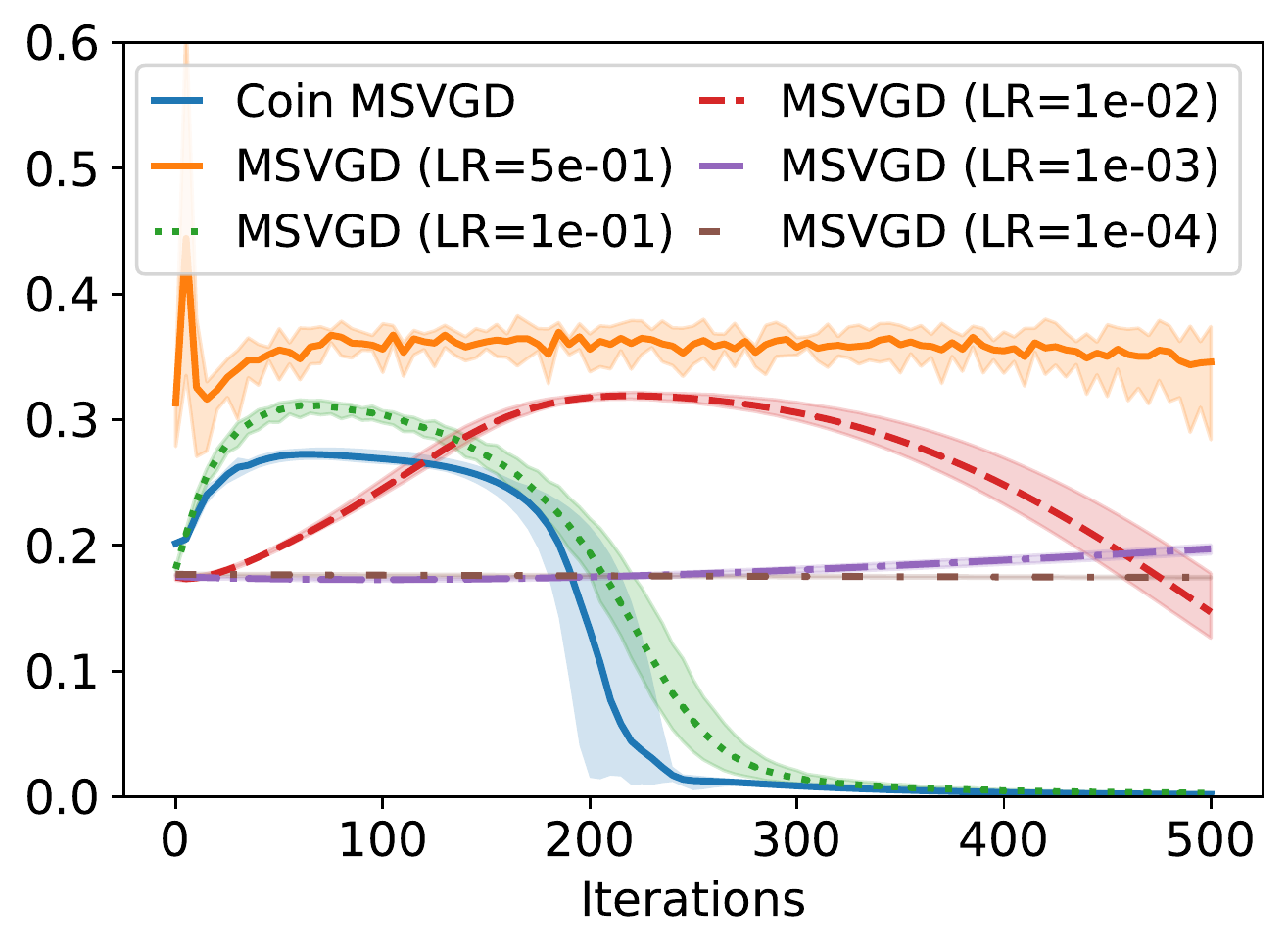}}
\caption{\textbf{Additional results for the quadratic simplex target in \cite{Ahn2021}}. (a) Energy distance vs. iterations for Coin MSVGD, MSVGD, SVMD, projected Coin SVGD, and projected SVGD; and (b) energy distance vs iterations for Coin MSVGD and MSVGD, using five different learning rates.}
\label{fig:quadratic_energy_vs_iter}
\vspace{-3mm}
\end{figure}

As observed in Sec. \ref{sec:simplex-targets}, for the sparse Dirichlet posterior, Coin MSVGD converges more rapidly than MSVGD and SVMD, even for well-chosen values of the learning rate (Fig. \ref{fig:dirichlet_energy_vs_iter}). Meanwhile, for the quadratic target, Coin MSVGD generally converges more rapidly than MSVGD  but not as fast as SVMD \cite{Shi2022b}, which leverages the log-concavity of the target in the primal space (Fig. \ref{fig:quadratic_energy_vs_iter}). In both cases, for poorly chosen values of the learning rate, MSVGD (e.g., $\gamma=5\times 10^{-1}$ in Fig. \ref{fig:dirichlet_energy_vs_iter_mult_lr} and $\gamma=1\times 10^{-4}$, $\gamma=1\times 10^{-3}$, or $\gamma=5\times 10^{-1}$ in Fig. \ref{fig:quadratic_energy_vs_iter_mult_lr}) entirely fails to converge to the true target.

\textbf{Results for Different Numbers of Particles}. In Fig. \ref{fig:dirichlet_particles} and Fig. \ref{fig:quadratic_particles}, we provide additional results for the two simplex targets, now varying the number of particles $N$. The results are similar to those reported in Sec. \ref{sec:simplex-targets}. In particular, Coin MSVGD has comparable performance to MSVGD and SVMD with optimal learning rates, but significantly outperforms both algorithms for sub-optimal learning rates. In addition, as expected, the posterior approximation quality of Coin MSVGD, as measured by the energy distance, improves as a function of the number of particles $N$ (Fig. \ref{fig:dirichlet_particles_d} and Fig. \ref{fig:quadratic_particles_d}).

\begin{figure}[htb]
\vspace{-3mm}
\centering
\subfloat[$N=10$.]{\includegraphics[trim=0 6 0 6, clip, width=.325\textwidth]{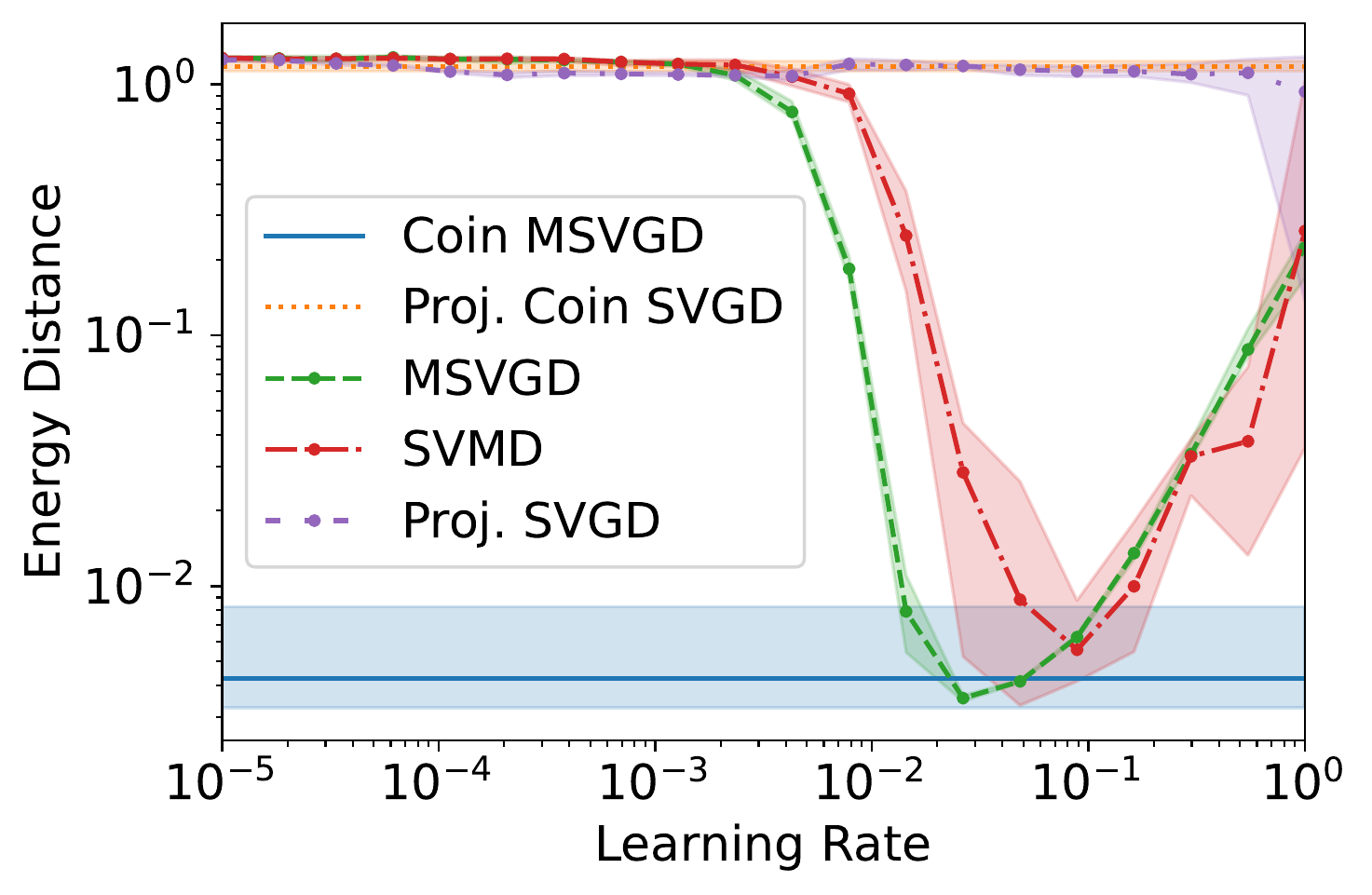}}
\subfloat[$N=20$.]{\includegraphics[trim=0 6 0 6, width=.325\textwidth]{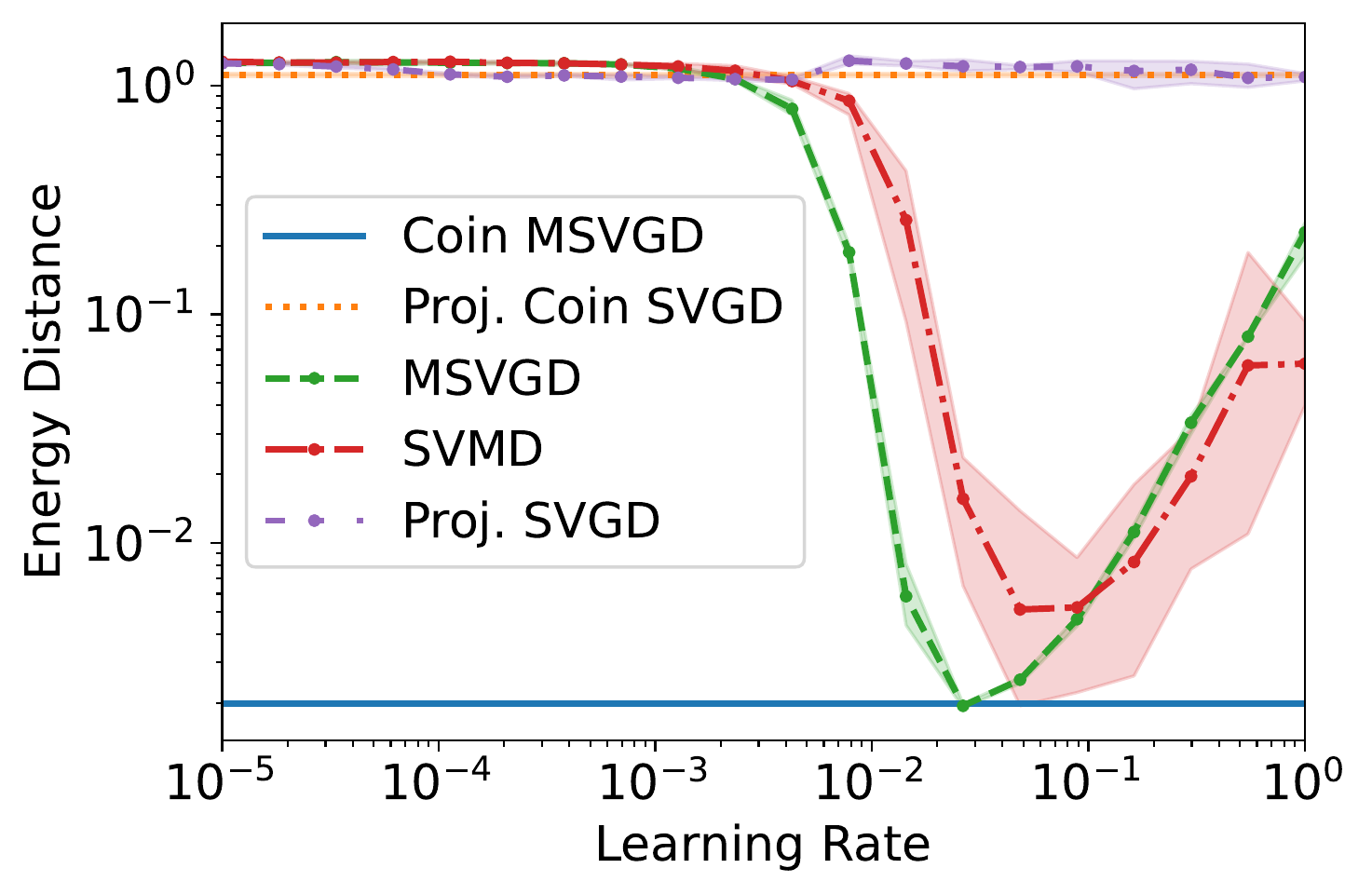}} 
\subfloat[$N=100$.]{\includegraphics[trim=0 6 0 6, width=.325\textwidth]{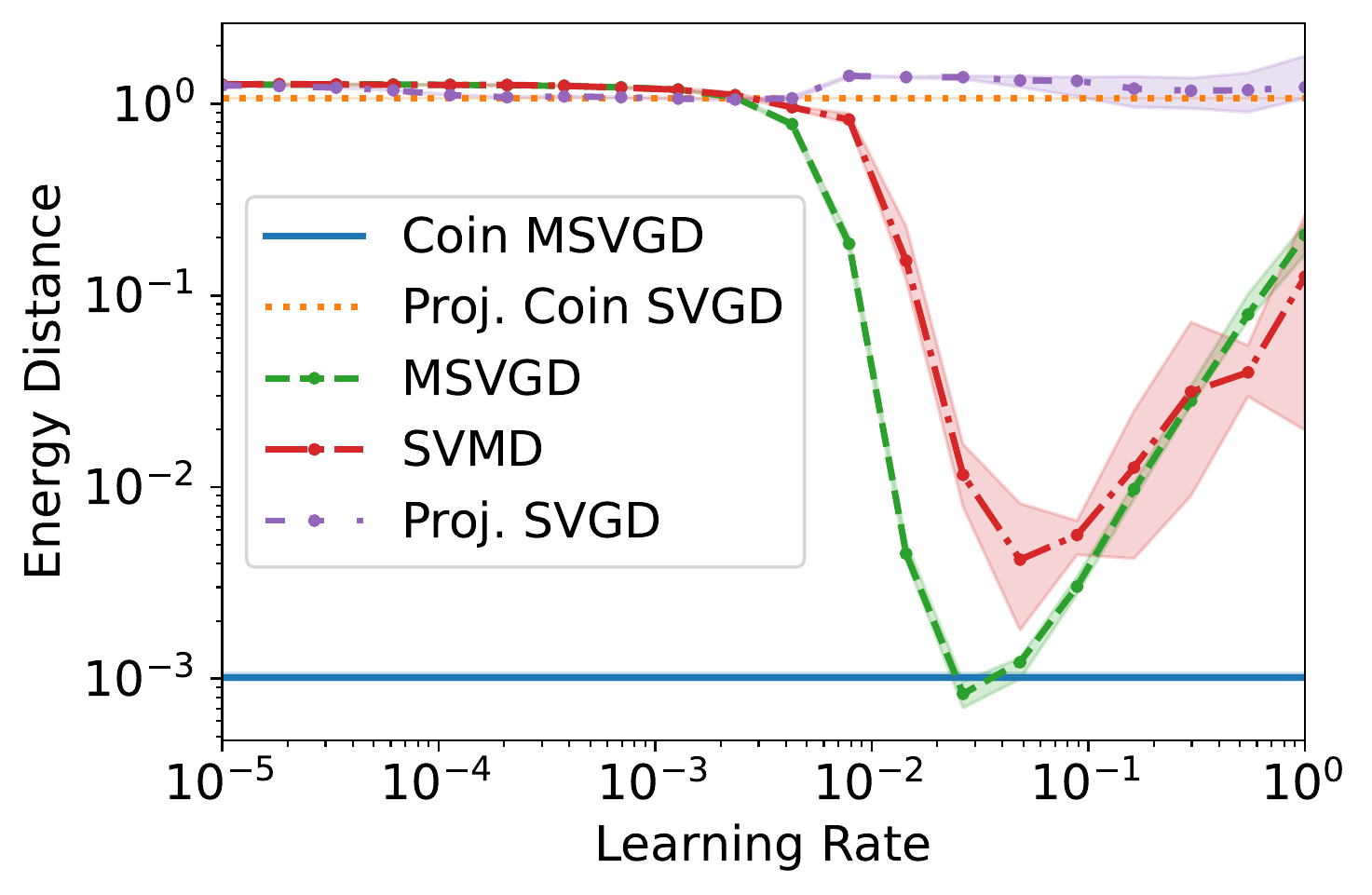}} \\
\subfloat[$N\in\{10,\dots,250\}$. \label{fig:dirichlet_particles_d}]{\includegraphics[trim=0 12 0 6, width=.4\textwidth]{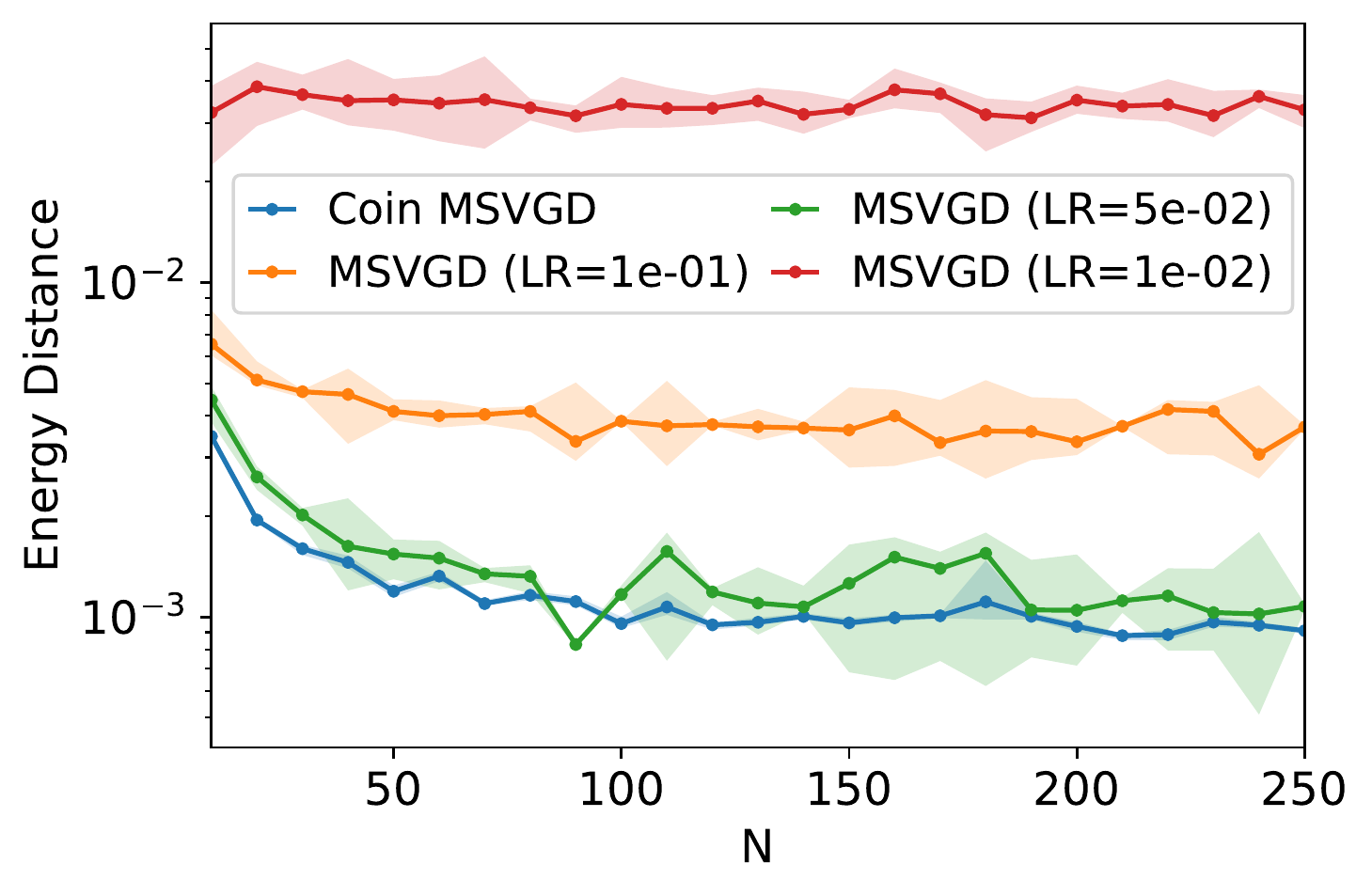}} 
\caption{\textbf{Additional results for the sparse Dirichlet posterior in \cite{Patterson2013}}. (a) - (c) Energy distance vs learning rate, for several values of $N$, for Coin MSVGD, MSVGD, SVMD, projected Coin SVGD, and projected SVGD. (d) Energy distance vs $N$, for several different values of the learning rate, for Coin MSVGD and MSVGD. We run each algorithm for $T=250$ iterations.}
\label{fig:dirichlet_particles}
\vspace{-1mm}
\end{figure}

\begin{figure}[htb]
\vspace{-5mm}
\centering
\subfloat[$N=10$.]{\includegraphics[trim=0 6 0 6, clip, width=.325\textwidth]{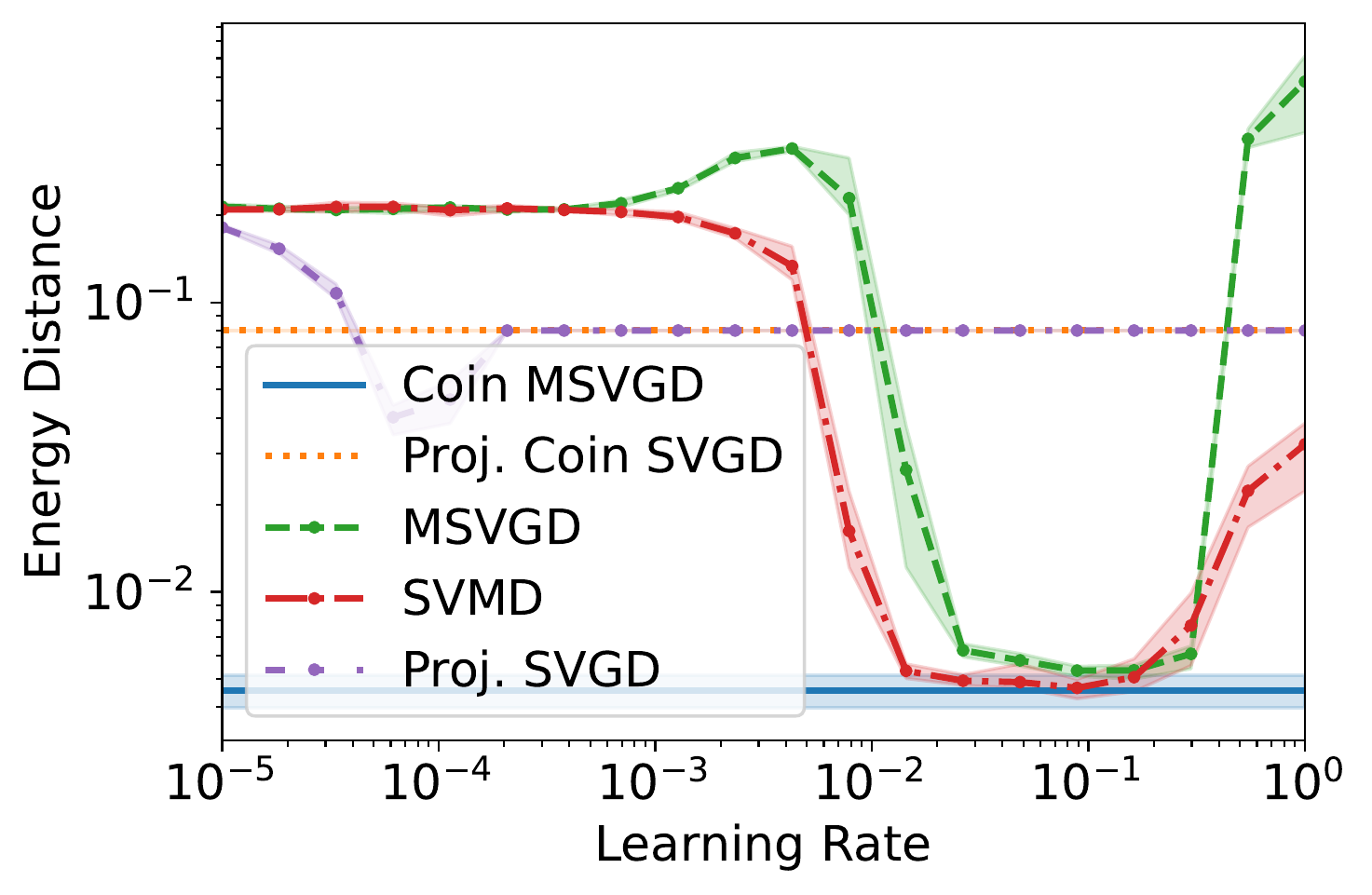}}
\subfloat[$N=20$.]{\includegraphics[trim=0 6 0 6, width=.325\textwidth]{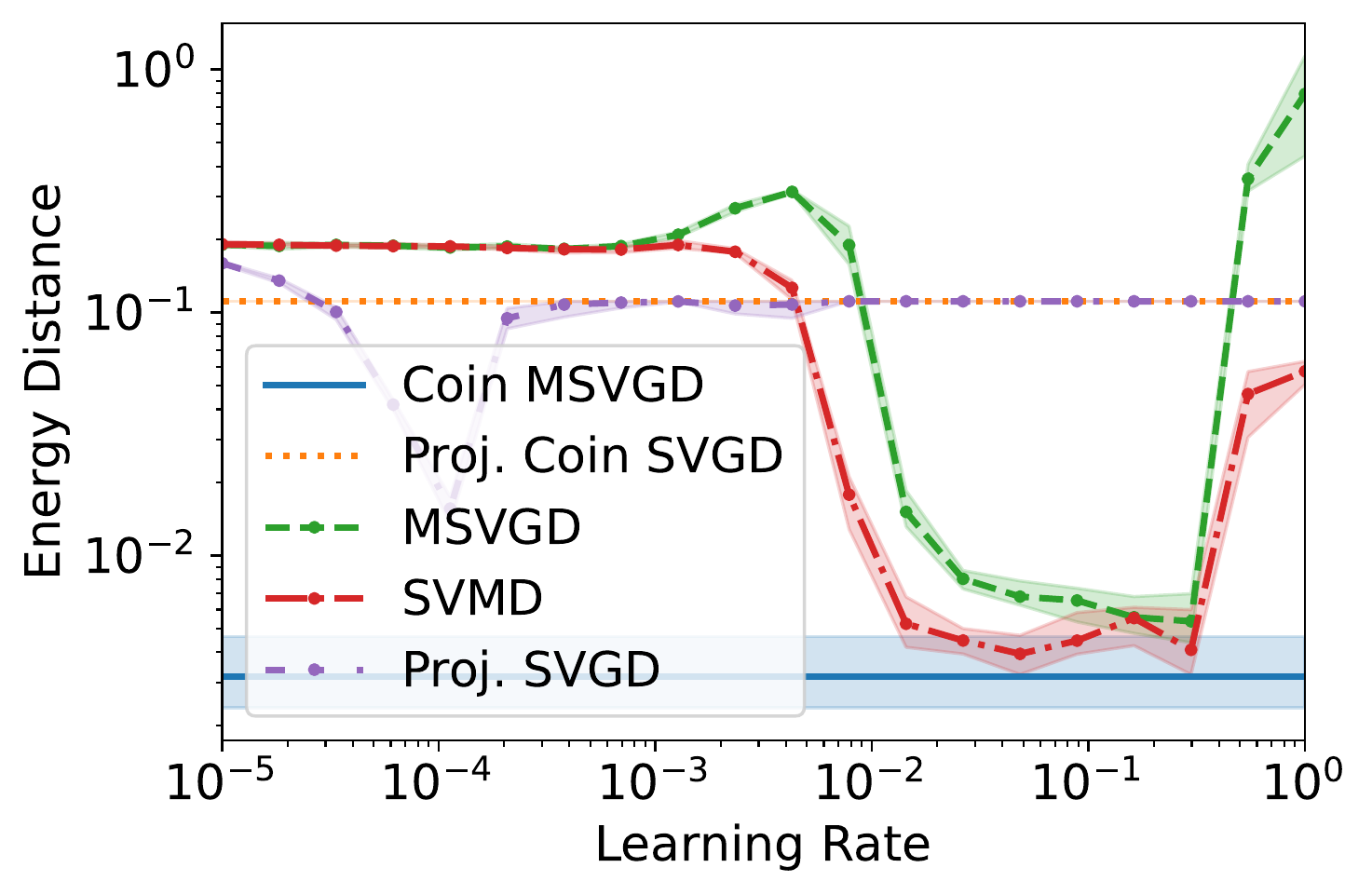}}
\subfloat[$N=100$.]{\includegraphics[trim=0 6 0 6, clip, width=.325\textwidth]{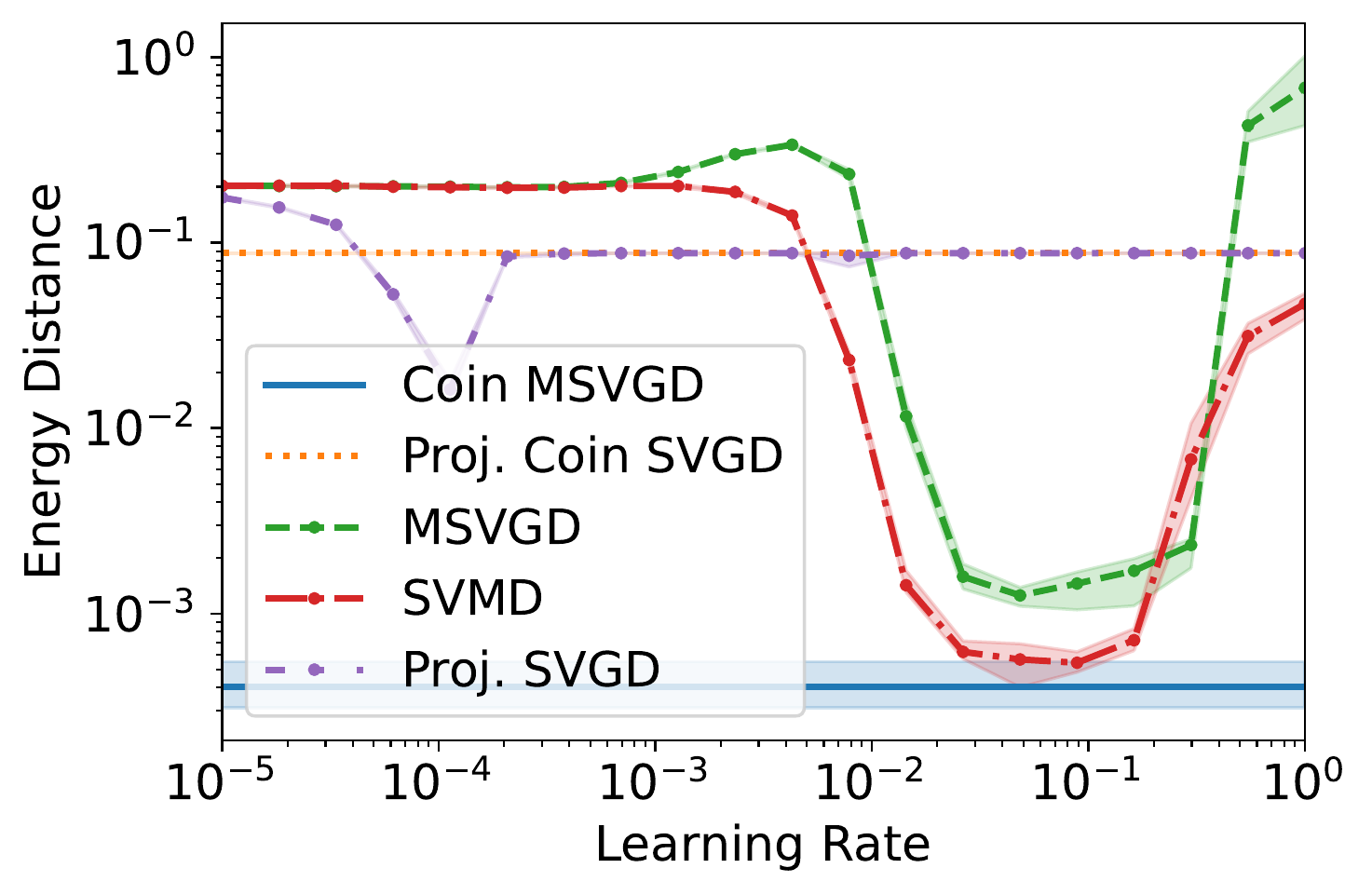}} \\
\subfloat[$N\in\{10,\dots,250\}$. \label{fig:quadratic_particles_d}]{\includegraphics[trim=0 0 0 5, width=.4\textwidth]{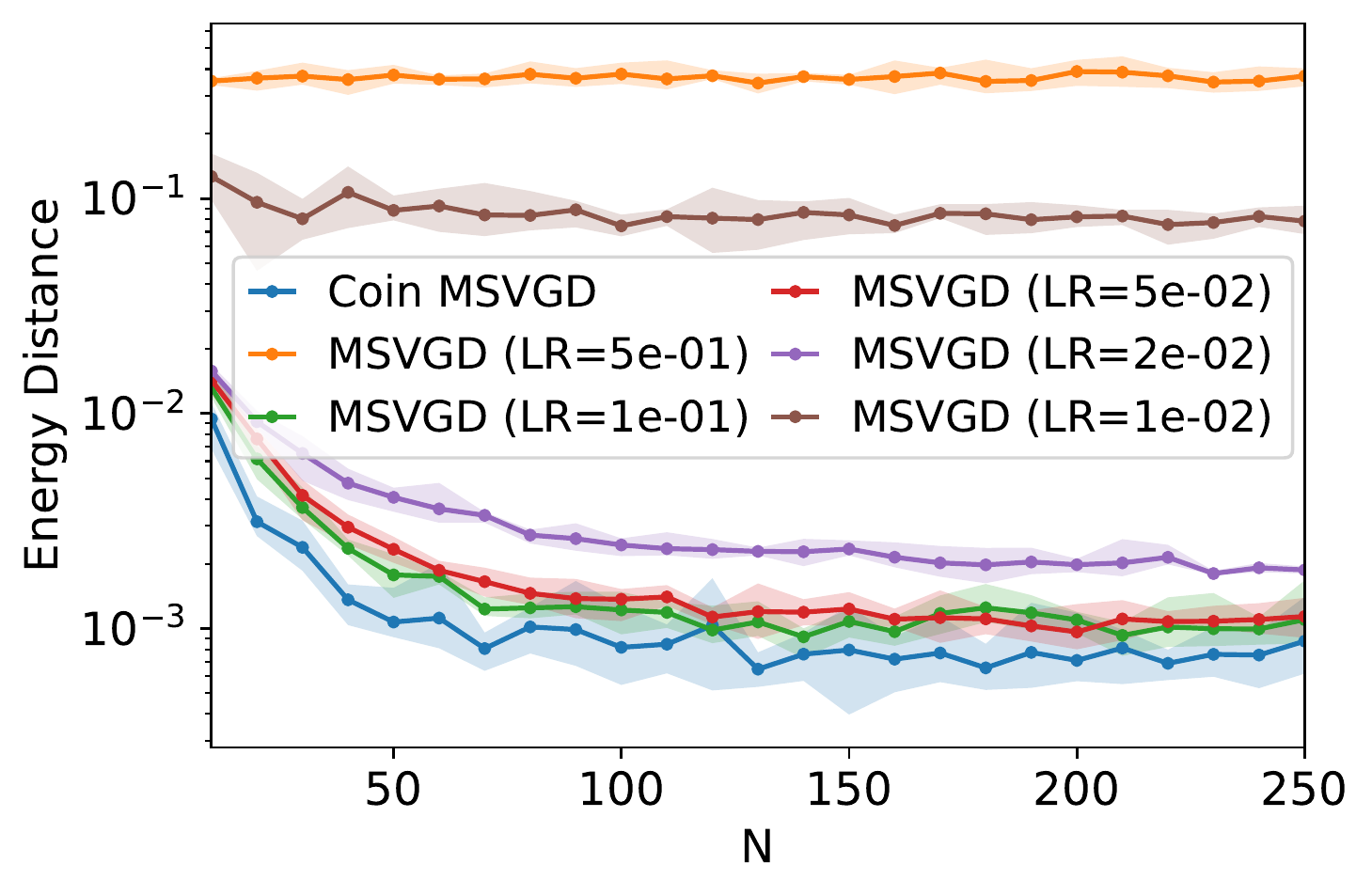}}
\caption{\textbf{Additional results for the quadratic target in \cite{Ahn2021}}. (a) - (c) Energy distance vs learning rate, for several values of $N$, for Coin MSVGD, MSVGD, SVMD, projected Coin SVGD, and projected SVGD. (d) Energy distance vs $N$, for several different values of the learning rate, for Coin MSVGD and MSVGD. We run each algorithm for $T=500$ iterations.}
\label{fig:quadratic_particles}
\vspace{-5mm}
\end{figure}

\subsection{Sampling from a Uniform Distribution}
\label{sec:uniform}
 In this section, we consider the task of sampling from the uniform distribution on the unit square, namely $\mathrm{Unif}[-1,1]^2$. For MSVGD and Coin MSVGD, we once again use the entropic mirror map. We also now use the RBF kernel, with a fixed bandwidth of $h=0.01$.\footnote{In this example, the use of an adaptive kernel (e.g., with the bandwidth chosen according to the median rule), can cause the samples from MSVGD to collapse \cite{Li2023}.} 
 For MIED and Coin MIED, following \cite{Li2023}, we reparameterise the particles using $\tanh$ to eliminate the constraint. We show results for several choices of mollifiers, in each casing choosing the smallest $\epsilon$ which guarantees the optimisation remains stable. We initialise all methods with particles drawn i.i.d. from $\mathcal{U}[-0.5,0.5]^2$. For both methods, in the interest of a fair comparison, we adapt the learning rates using RMSProp \cite{Tieleman2012}. 
 We use $N=100$ particles and run each experiment for $T=250$ iterations. We then assess the quality of the posterior approximations by computing the energy distance to a set of 1000 samples drawn i.i.d. from the true target distribution. 

 The results for this experiment are shown in Fig. \ref{fig:uniform}. Similar to our previous synthetic examples, Coin MSVGD is competitive with the optimal performance of MSVGD. Meanwhile, for sub-optimal choices of the learning rate, the coin sampling algorithm is clearly preferable. The same is also true for Coin MIED which, depending on the choice of mollifier, obtains comparable or superior performance to MIED with the optimal choice of learning rate. 

\begin{figure}[htb]
\centering
\subfloat[MSVGD]{\includegraphics[trim=0 0 0 0, clip, width=.245\textwidth]{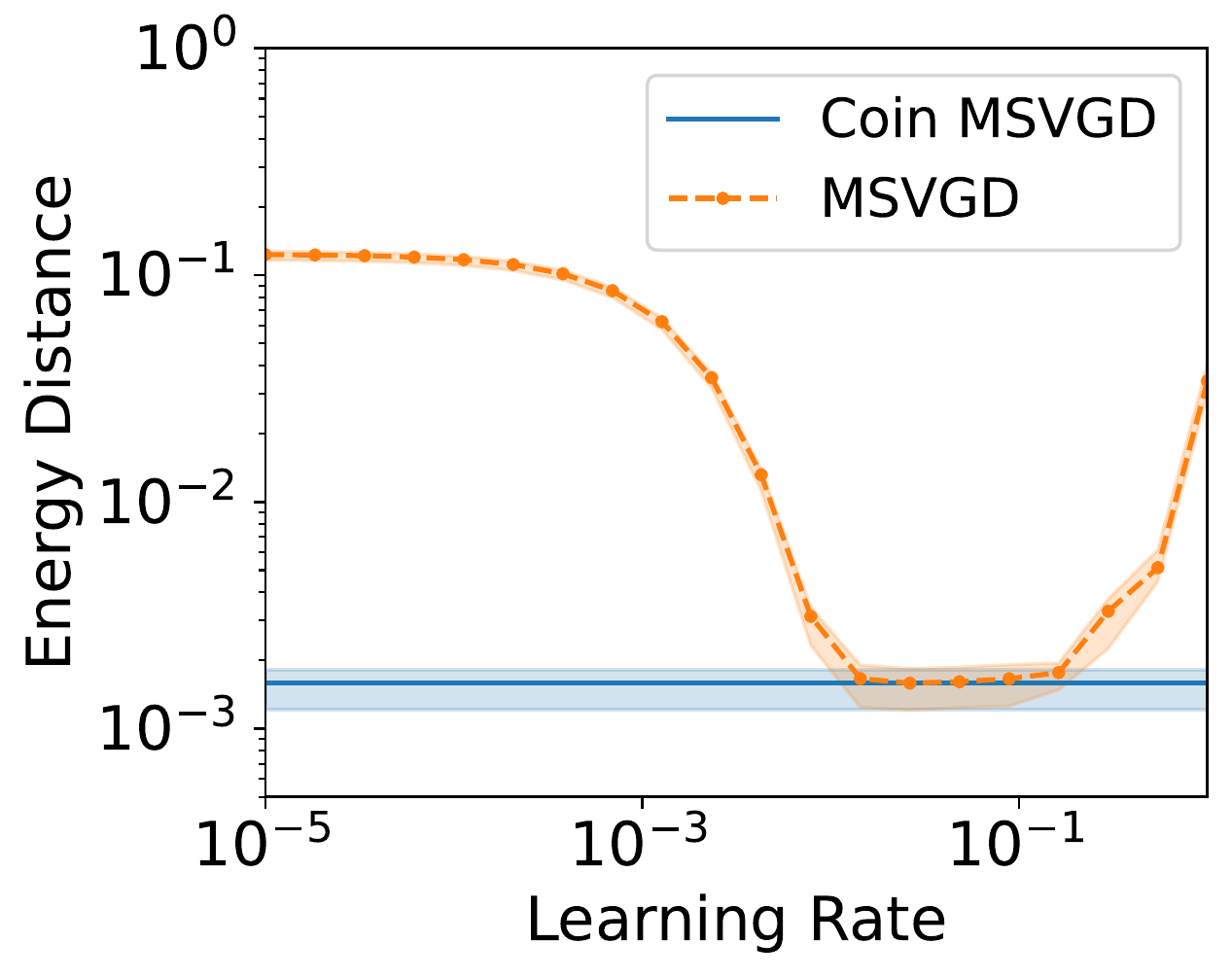}}
\subfloat[MIED (Riesz)]{\includegraphics[trim=0 0 0 0, clip, width=.245\textwidth]{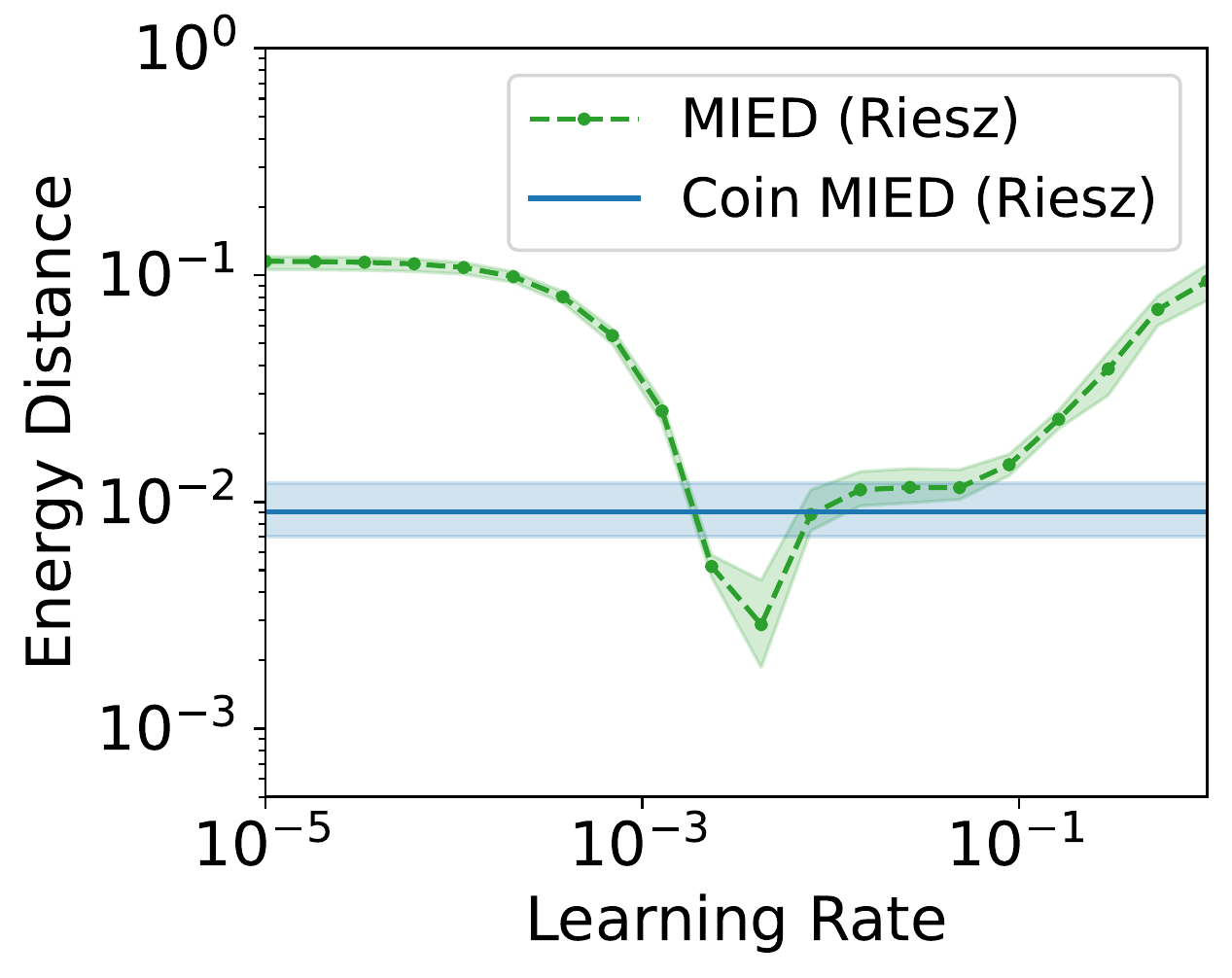}}
\subfloat[MIED (Gaussian)]{\includegraphics[trim=0 0 0 0, clip, width=.245\textwidth]{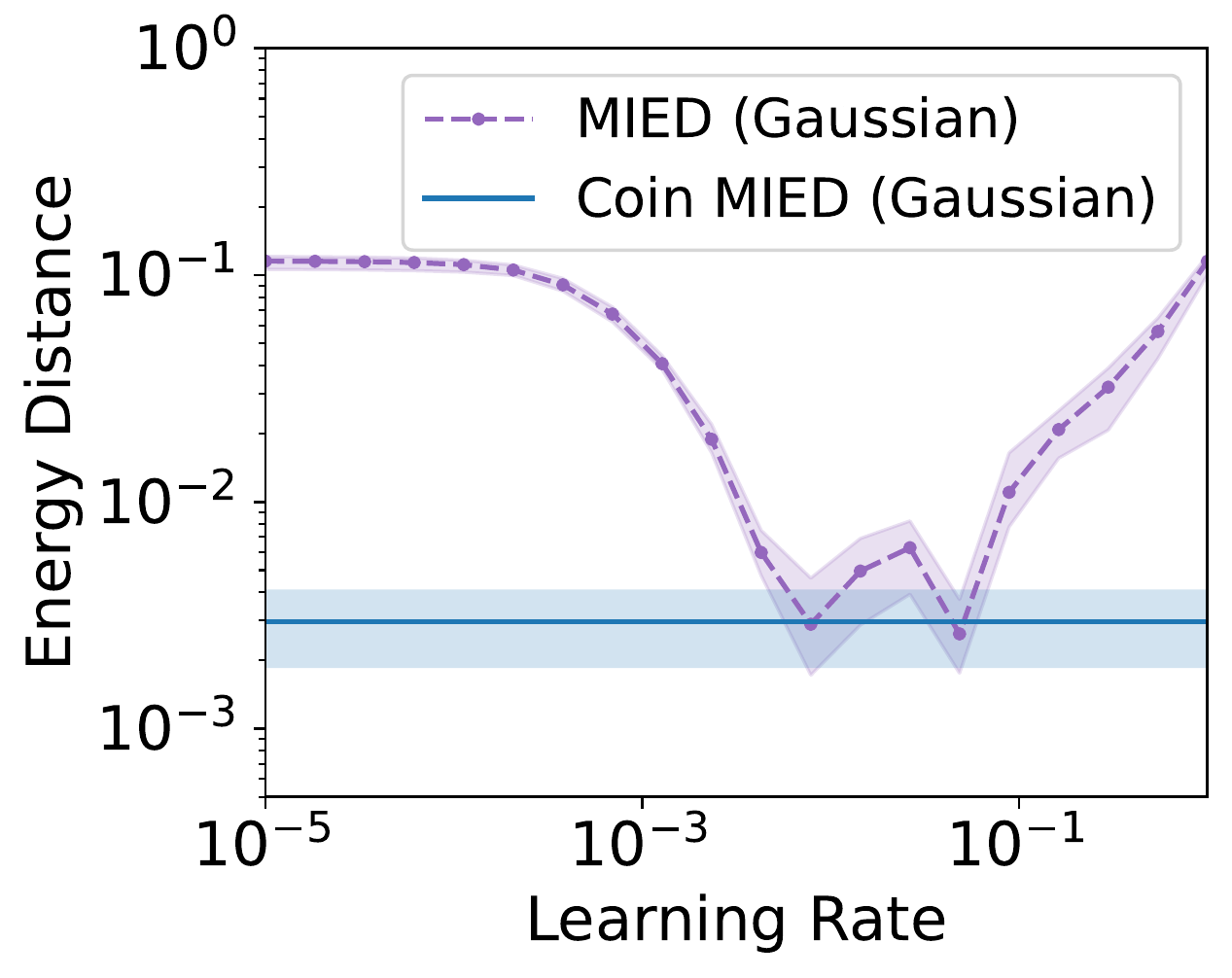}}
\subfloat[MIED (Laplace)]{\includegraphics[trim=0 0 0 0, clip, width=.245\textwidth]{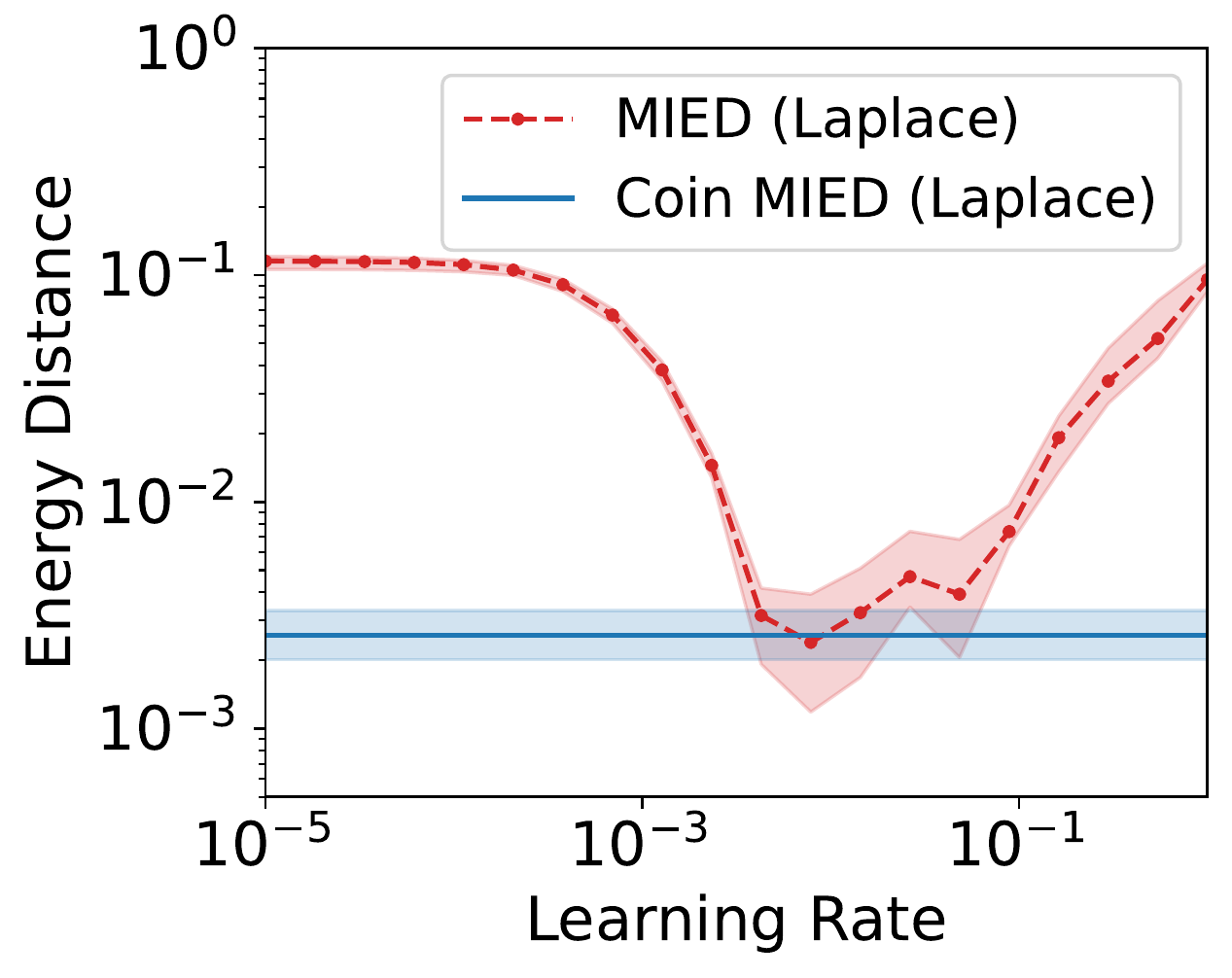}}
\\
\subfloat[Comparison of Methods.]{\includegraphics[trim=0 0 0 0, clip, width=.9\textwidth]{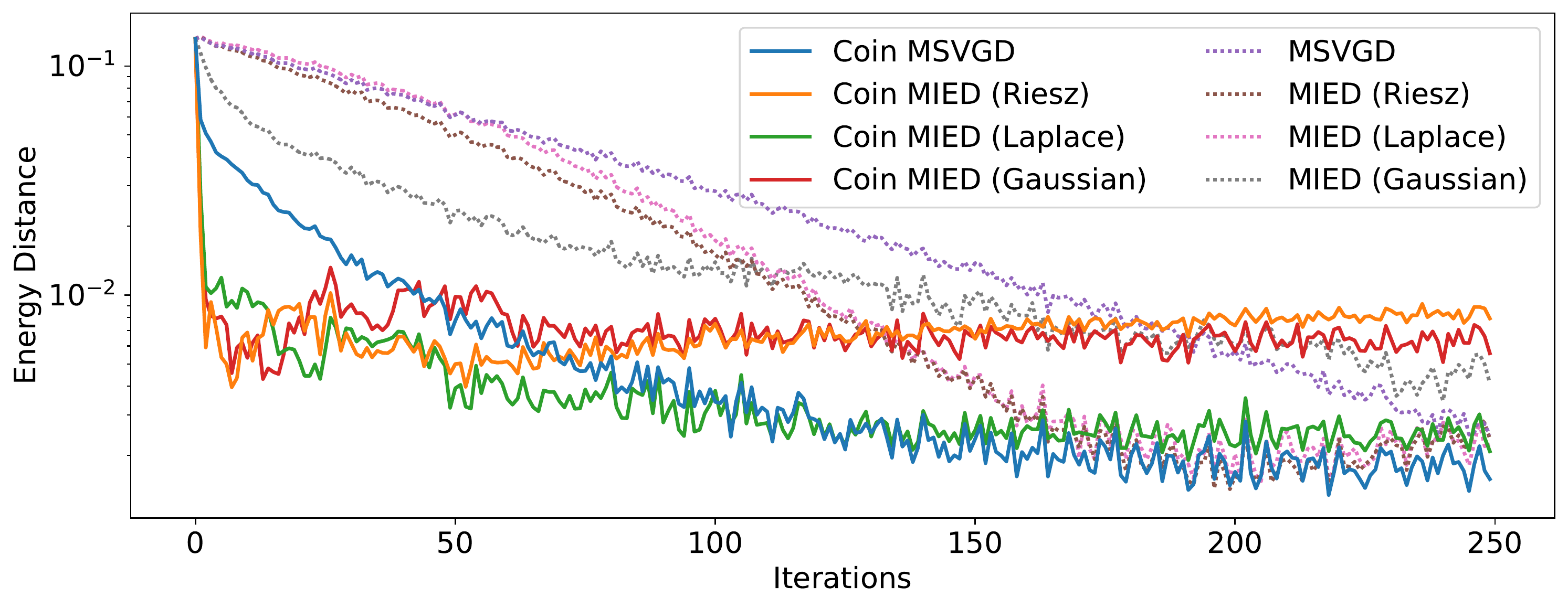}}
\caption{\textbf{Results for the uniform target:} $\mathrm{Unif}[-1,1]^2$. Energy distance vs learning rate after 250 iterations for (a) Coin MSVGD and MSVGD; and (b) - (d) Coin MIED and MIED, for three different choices of mollifier.}
\label{fig:uniform}
\end{figure}

\subsection{Post-Selection Inference}
\label{sec:additional-results-post-selection-inference}
In this section, we provide additional results for the post-selection inference examples considered in Sec. \ref{subsec:CIs_post_selection_inf}. In Fig. \ref{fig:sel_coverage_small_big_lr}, we once again plot the coverage of the CIs obtained using Coin MSVGD, MSVGD, SVMD, and the \texttt{norejection} algorithm in \texttt{selectiveInference}  \cite{Tibshirani2019}, as we vary either the nominal coverage or the total number of samples. Now, however, we consider two sub-optimal choices of the learning rate for MSVGD and SVMD. In particular, we now use an initial learning rate of $\gamma=0.001$  (Fig. \ref{fig:sel_coverageA_small_lr} - \ref{fig:sel_coverageB_small_lr}) or $\gamma=0.1$ (Fig. \ref{fig:sel_coverageA_big_lr} - \ref{fig:sel_coverageB_big_lr}) for RMSProp \cite{Tieleman2012}, rather than $\gamma=0.01$ as in Fig. \ref{fig:sel_coverage} (see Sec. \ref{subsec:CIs_post_selection_inf}). 

In this case, we see that the performance of both MSVGD and SVMD deteriorates. In particular, they both now obtain significantly lower coverage than Coin MSVGD. As noted in the main text, this increased coverage is of particular importance for smaller sample sizes (see Fig. \ref{fig:sel_coverageB_small_lr} and Fig. \ref{fig:sel_coverageA_big_lr}), and for greater nominal coverage levels (see Fig. \ref{fig:sel_coverageA_small_lr} and \ref{fig:sel_coverageA_big_lr}), where the standard CIs generally undercover.

\begin{figure}[htb]
\centering
\subfloat[Actual vs Nominal Coverage  ($\gamma=0.001$).\label{fig:sel_coverageA_small_lr}]{\includegraphics[trim=0 0 0 0, clip, width=.47\textwidth]{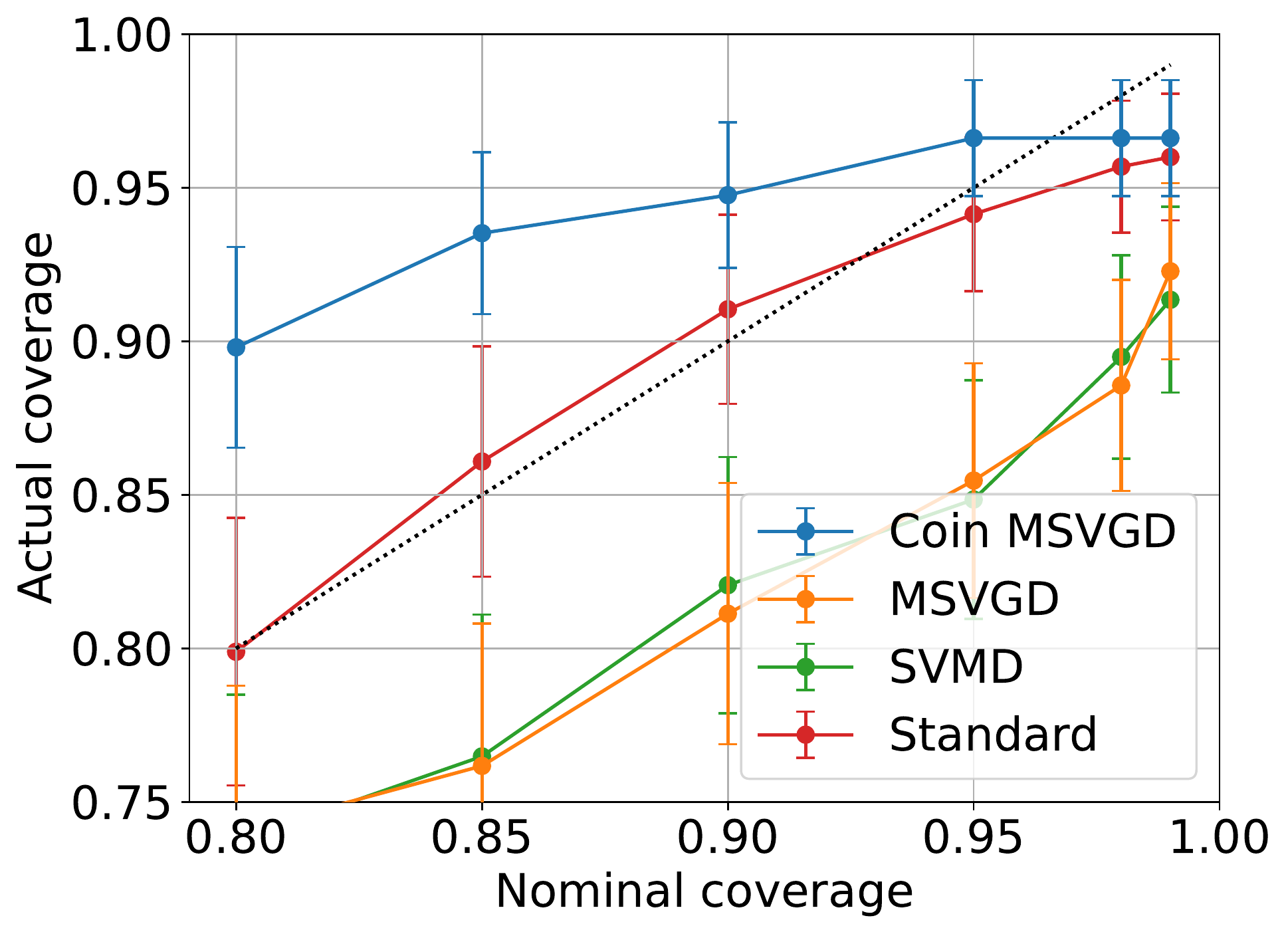}}
\hspace{1mm}
\subfloat[Coverage vs No. of Samples ($\gamma=0.001$).\label{fig:sel_coverageB_small_lr}]{\includegraphics[trim=0 0 0 0, clip, width=.46\textwidth]{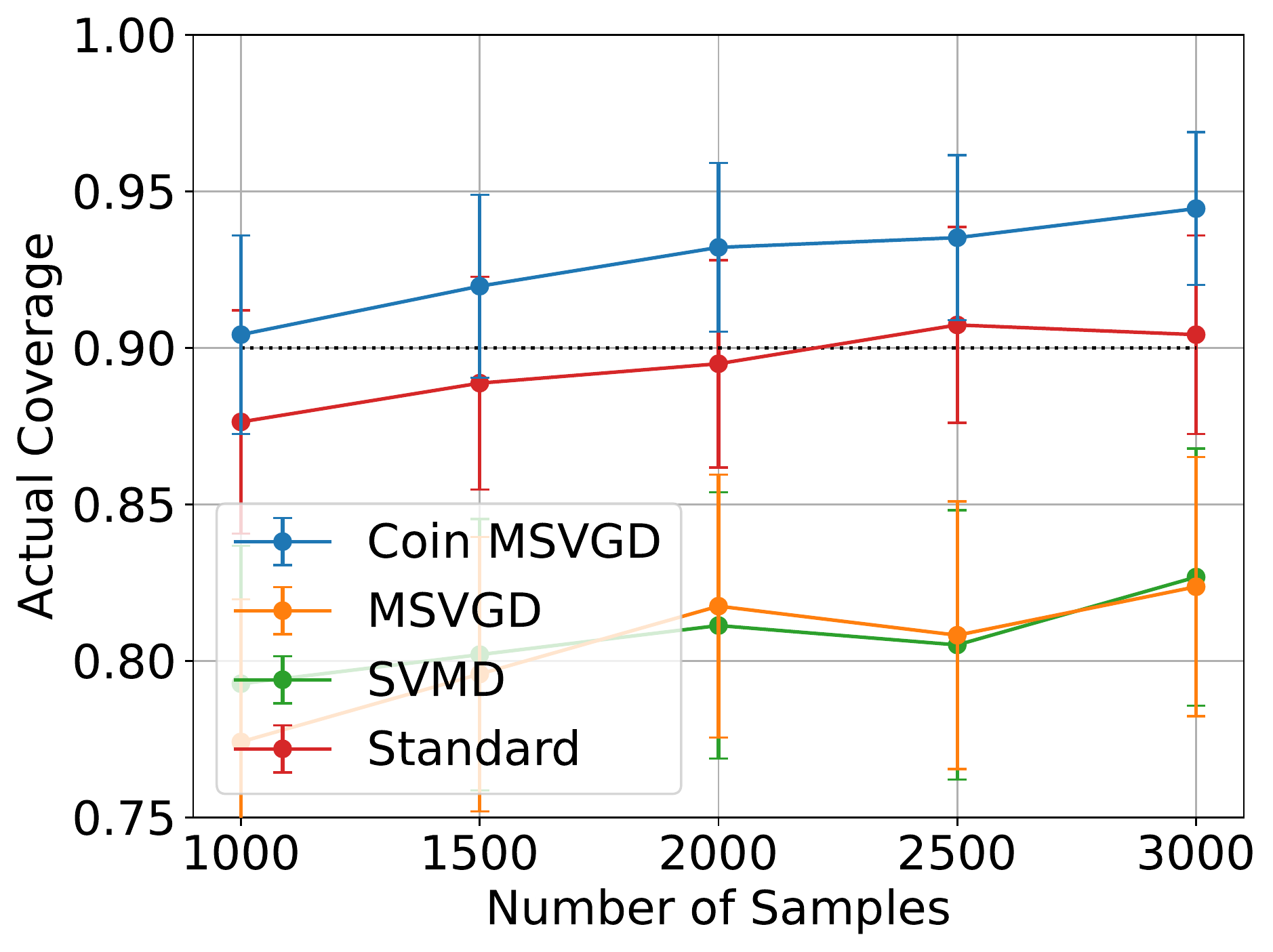}}

\subfloat[Actual vs Nominal Coverage ($\gamma=0.1$).\label{fig:sel_coverageA_big_lr}]{\includegraphics[trim=0 0 0 0, clip, width=.47\textwidth]{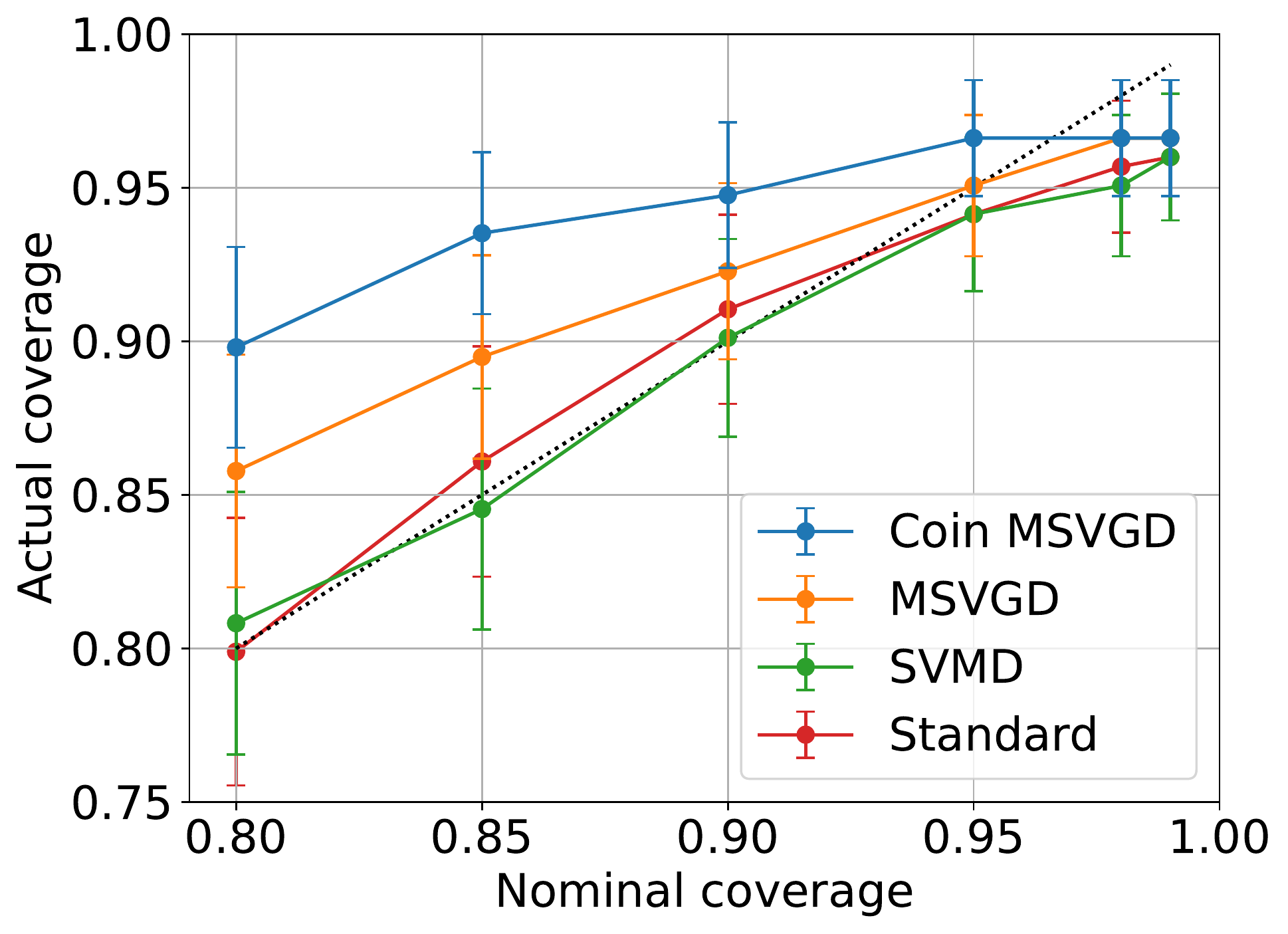}}
\hspace{1mm}
\subfloat[Coverage vs No. of Samples  ($\gamma=0.1$).\label{fig:sel_coverageB_big_lr}]{\includegraphics[trim=0 0 0 0, clip, width=.46\textwidth]{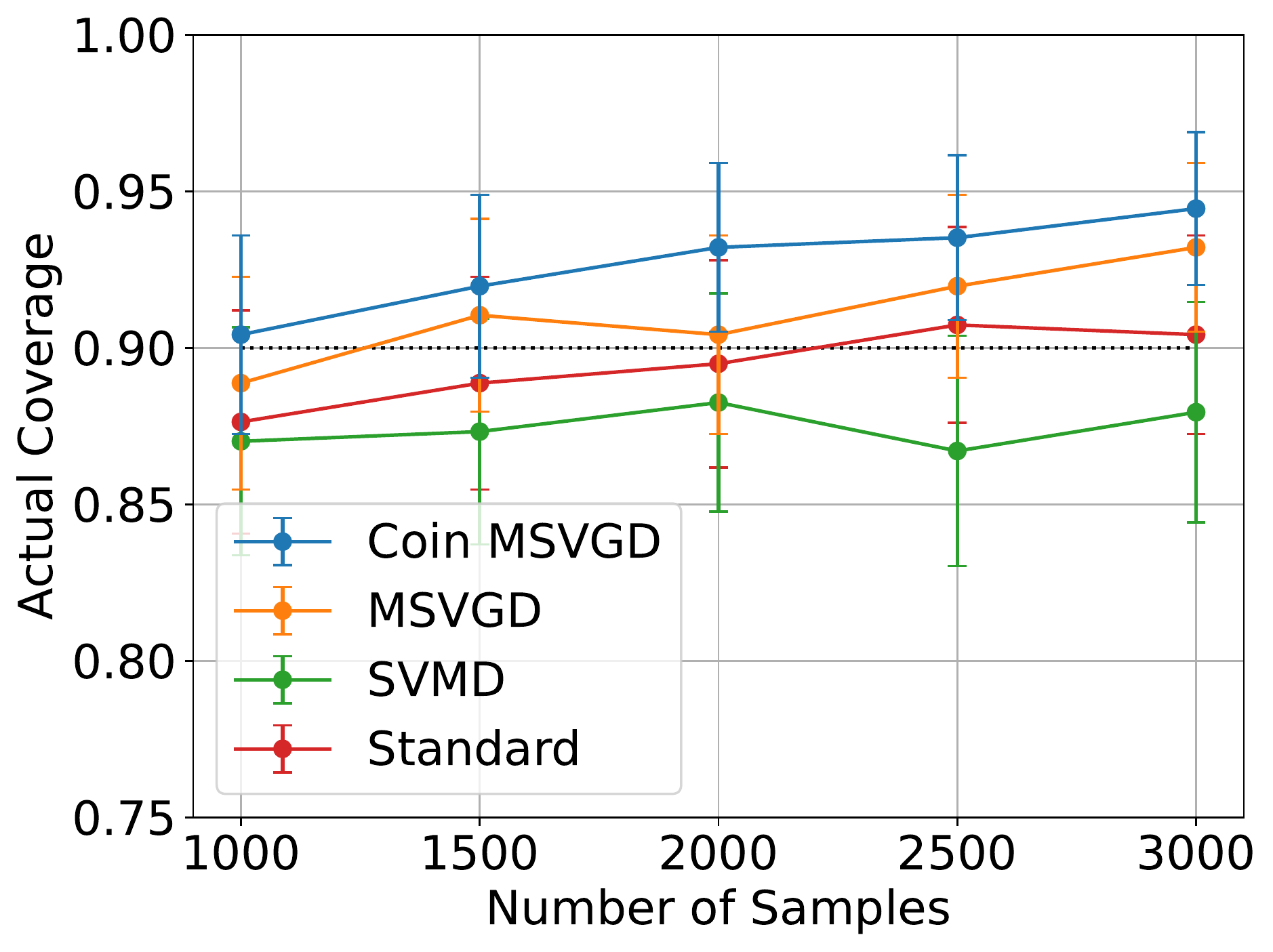}}
\caption{\textbf{Additional coverage results for the post-selection inference.} Coverage of post-selection confidence intervals for Coin MSVGD, MSVGD, and SVMD, using sub-optimal learning rates.}
\label{fig:sel_coverage_small_big_lr}
\end{figure}

Finally, in Fig. \ref{fig:hiv_small_big_lr}, we perform a similar experiment for the real-data example involving the HIV-1 drug resistance dataset. In particular, we compute 90\% CIs for the subset of mutations selected by the randomised Lasso, using $5000$ samples and the same five methods as before: `unadjusted' (the unadjusted CIs), `standard' (the method in \texttt{selectiveInference}), MSVGD, SVMD, and Coin MSVGD. Now, however, we use sub-optimal choices of the initial  learning rate in RMSProp, namely, $\gamma = 0.001$ (Fig. \ref{fig:hiv_small_lr}) and $\gamma=1.0$ (Fig. \ref{fig:hiv_big_lr}), compared to $\gamma = 0.01$ in the main paper. 

Similar to the synthetic example, the impact of using a sub-optimal choice of learning rate is clear. In particular,  CIs obtained using MSVGD and SVMD now differ substantially from those obtained by Coin MSVGD and the standard approach. This means, in particular, that several mutations are now incorrectly flagged as significant by MSVGD and SVMD (e.g., mutation P41L, P62V, P70R, or P215Y in Fig. \ref{fig:hiv_big_lr}), when in reality there is insufficient evidence after conditioning on the selection event.

\begin{figure}[ht]
\centering
\subfloat[$\gamma=0.001$. \label{fig:hiv_small_lr}]{\includegraphics[trim=0 0 0 5, clip, width=.95\textwidth]{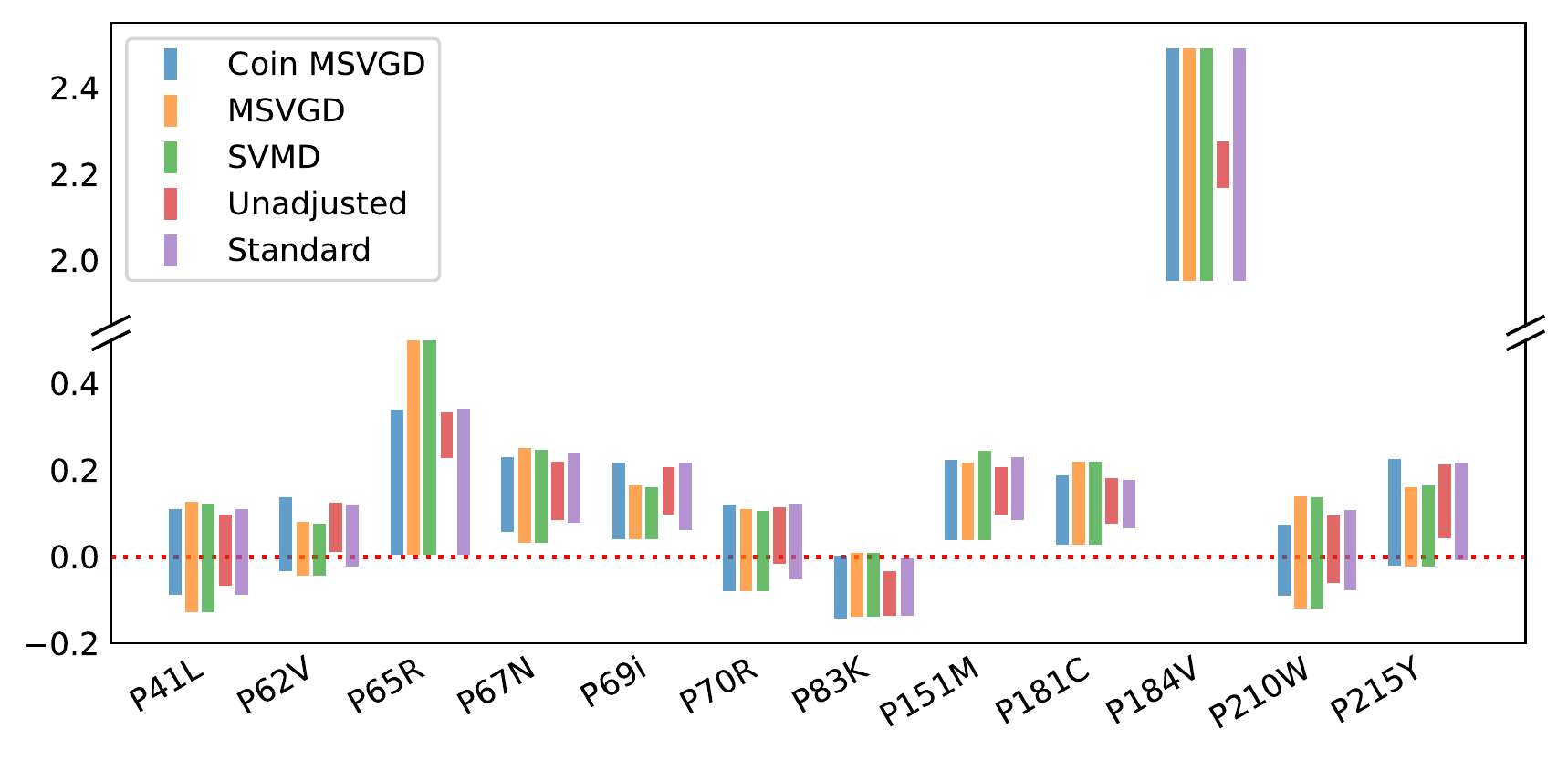}} \\
\subfloat[$\gamma=1.0$. \label{fig:hiv_big_lr}]{\includegraphics[trim=0 0 0 5, clip, width=.95\textwidth]{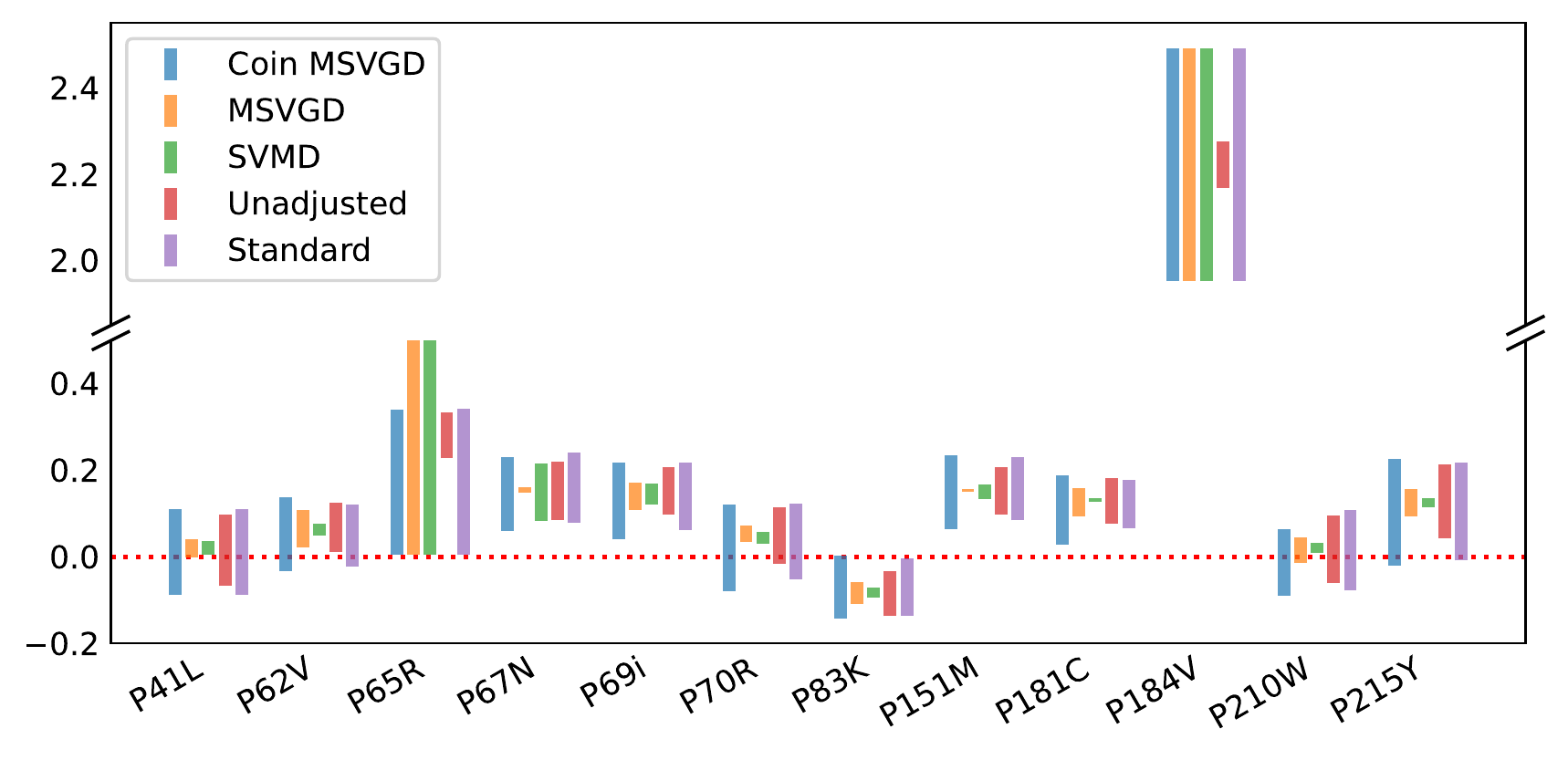}}
\caption{\textbf{Additional real data results for post-selection inference.} Unadjusted and post-selection confidence intervals for the mutations selected by the randomised Lasso as candidates for HIV-1 drug resistance, for two sub-optimal choices of the learning rate $\gamma$.}
\label{fig:hiv_small_big_lr}
\end{figure}

\end{appendices}

\end{document}